
\synctex=1

\RequirePackage[l2tabu,orthodox]{nag}		
\documentclass[reqno]{amsart}		
\usepackage[margin=1.5in,bottom=1.25in]{geometry}		


\usepackage{amsmath}		
\usepackage{amssymb}		
\usepackage{amsfonts}		
\usepackage{amsthm}		
\usepackage[foot]{amsaddr}		

\usepackage{mathtools}		

\mathtoolsset{%
}

\usepackage[utf8]{inputenc}		
\usepackage[T1]{fontenc}		

\usepackage[
cal=cm,
]
{mathalfa}


\usepackage[proportional,tabular,lining,sf,mono=false]{libertine}

\usepackage{dsfont}		





\usepackage{acronym}		
\newcommand{\acdef}[1]{\textit{\acl{#1}} \textup{(\acs{#1})}\acused{#1}}		

\usepackage[labelfont={bf,small},labelsep=colon,font=small]{caption}	
\captionsetup[algorithm]{labelfont=bf,labelsep=colon}		

\usepackage[dvipsnames,svgnames]{xcolor}		
\colorlet{MyRed}{Crimson!90!Black}
\colorlet{MyBlue}{MediumBlue!90!Black}
\colorlet{MyGreen}{DarkGreen!80!Black}

\newcommand{\afterhead}{.}		
\newcommand{\para}[1]{\medskip\paragraph{\textbf{#1\afterhead}}}

\usepackage{pifont}

\usepackage{wasysym}		

\usepackage{subcaption}		
\usepackage{tikz}		
\usepackage{tikzit}
\usetikzlibrary{calc,patterns}

\tikzstyle{new style 0}=[]
\tikzstyle{new style 1}=[]
\tikzstyle{toto3}=[dashed]

\tikzstyle{new edge style 0}=[dashed, fill=none, draw=black]
\tikzstyle{toto1}=[dashed, {|->}]
\tikzstyle{toto2}=[dashed, {|-|}]
\tikzstyle{toto3}=[dashed, {-}]

\usepackage[inline,shortlabels]{enumitem}		
\setenumerate{itemsep=\smallskipamount,topsep=\medskipamount,left=1em}
\setitemize{itemsep=\smallskipamount,topsep=\medskipamount,left=1em}

\usepackage[kerning=true]{microtype}		

\usepackage{cancel}		
\usepackage{xspace}		


\usepackage[numbers,sort&compress]{natbib}		

\bibpunct[, ]{[}{]}{,}{n}{,}{,}

\usepackage{hyperref}
\hypersetup{
colorlinks=true,
linktocpage=true,
pdfstartview=FitH,
breaklinks=true,
pdfpagemode=UseNone,
pageanchor=true,
pdfpagemode=UseOutlines,
plainpages=false,
bookmarksnumbered,
bookmarksopen=false,
bookmarksopenlevel=1,
hypertexnames=true,
pdfhighlight=/O,
urlcolor=MyBlue,linkcolor=MyBlue,citecolor=MyBlue,	
pdftitle={},
pdfauthor={},
pdfsubject={},
pdfkeywords={},
pdfcreator={pdfLaTeX},
pdfproducer={LaTeX with hyperref}
}

\usepackage[sort&compress,capitalize,nameinlink]{cleveref}		
\crefname{algorithm}{Algorithm}{Algorithms}
\crefname{equation}{Eq.}{Eqs.}



\usepackage{algorithm}		
\usepackage{algpseudocode}		

\usepackage{thmtools}		
\usepackage{thm-restate}		

\theoremstyle{plain}
\newtheorem{lemma}{Lemma}		
\newtheorem{proposition}{Proposition}		


\newtheorem*{corollary*}{Corollary}		

\theoremstyle{definition}
\newtheorem{example}{Example}		

\newtheorem*{definition*}{Definition}		
\newtheorem*{assumption*}{Assumptions}		
\newtheorem*{example*}{Example}		


\theoremstyle{remark}
\newtheorem{remark}{Remark}		
\newtheorem*{remark*}{Remark}		

\def\endenv{\hfill\raisebox{1pt}{\P}\smallskip}



\usepackage[suppress]{color-edits}	 


\newcommand{\debug}[1]{#1}		



\newcommand{\newmacro}[2]{\newcommand{#1}{\debug{#2}}}		
\newcommand{\newop}[2]{\DeclareMathOperator{#1}{\debug{#2}}}		

\DeclarePairedDelimiter{\braces}{\{}{\}}		
\DeclarePairedDelimiter{\bracks}{[}{]}		
\DeclarePairedDelimiter{\parens}{(}{)}		

\DeclarePairedDelimiter{\abs}{\lvert}{\rvert}		

\DeclarePairedDelimiterX{\setdef}[2]{\{}{\}}{#1:#2}		
\DeclarePairedDelimiterXPP{\exclude}[1]{\mathopen{}\setminus}{\{}{\}}{}{#1}

\newcommand{\R}{\mathbb{R}}		

\DeclareMathOperator*{\argmax}{arg\,max}		
\DeclareMathOperator*{\union}{\bigcup}		

\DeclareMathOperator{\bigoh}{\mathcal{O}}		
\DeclareMathOperator{\one}{\mathds{1}}		
\DeclareMathOperator{\relint}{ri}		
\DeclareMathOperator{\supp}{supp}		

\newcommand{\cf}{cf.\xspace}		
\newcommand{\eg}{e.g.,\xspace}		
\newcommand{\ie}{i.e.,\xspace}		


\newcommand{\dis}{\displaystyle}		
\newcommand{\txs}{\textstyle}		

\newcommand{\alt}[1]{#1'}		
\newcommand{\altalt}[1]{#1''}		

\newmacro{\dd}{\:d}		
\newcommand{\pd}{\partial}		

\newcommand{\insum}{\sum\nolimits}		

\newmacro{\const}{c}		
\newmacro{\coef}{\lambda}		
\newmacro{\param}{\theta}		
\newmacro{\params}{\Theta}		

\newmacro{\step}{\gamma}		

\newmacro{\pexp}{p}		
\newmacro{\qexp}{q}		
\newmacro{\rexp}{r}		


\newmacro{\beforestart}{0}		
\newmacro{\start}{1}		
\newmacro{\afterstart}{2}		
\newmacro{\running}{\start,\afterstart,\dotsc}		

\newmacro{\run}{t}		
\newmacro{\runalt}{s}		
\newmacro{\runaltalt}{k}		
\newmacro{\nRuns}{T}		
\newmacro{\runs}{\mathcal{\nRuns}}		

\newmacro{\state}{X}		
\newmacro{\statealt}{Y}		
\newmacro{\statealtalt}{Z}		

\newcommand{\new}[1]{#1^{+}}		

\newcommand{\init}[1][\state]{\debug{#1}_{\start}}		


\newcommand{\curr}[1][\state]{\debug{#1}_{\run}}		
\renewcommand{\next}[1][\state]{\debug{#1}_{\run+1}}		

\newcommand{\afterlast}[1][\state]{\debug{#1}_{\nRuns+1}}		

\newop{\Nash}{NE}		
\newop{\CE}{CE}		
\newop{\CCE}{CCE}		
\newop{\NI}{NI}		

\newop{\brep}{br}		
\newop{\reg}{Reg}		
\newop{\preg}{\overline{Reg}}		
\newop{\val}{val}		

\newmacro{\play}{i}		
\newmacro{\playalt}{j}		
\newmacro{\playaltalt}{k}		
\newmacro{\nPlayers}{N}		
\newmacro{\players}{\mathcal{\nPlayers}}		

\newmacro{\pure}{\alpha}		
\newmacro{\purealt}{\beta}		
\newmacro{\purealtalt}{\gamma}		
\newmacro{\nPures}{A}		
\newmacro{\pures}{\mathcal{\nPures}}		

\newmacro{\strat}{x}		
\newmacro{\stratalt}{\alt\strat}		
\newmacro{\strataltalt}{\altalt\strat}		
\newmacro{\strats}{\mathcal{X}}		
\newmacro{\intstrats}{\strats^{\circle}}		


\newmacro{\loss}{\ell}		
\newmacro{\cost}{c}		
\newmacro{\pay}{u}		
\newmacro{\payv}{v}		
\newmacro{\pot}{f}		

\newmacro{\game}{\mathcal{G}}		
\newmacro{\gameall}{\game(\players,\points,\loss)}		

\newmacro{\fingame}{\Gamma}		
\newmacro{\fingameall}{\Gamma(\players,\pures,\loss)}		

\newmacro{\vertex}{p}		
\newmacro{\vertexalt}{q}		
\newmacro{\vertexaltalt}{\altalt\vertex}		
\newmacro{\nVertices}{V}		
\newmacro{\vertices}{\mathcal{\nVertices}}		

\newmacro{\edge}{e}		
\newmacro{\edgealt}{\alt\edge}		
\newmacro{\edgealtalt}{\altalt\edge}		
\newmacro{\nEdges}{E}		
\newmacro{\edges}{\mathcal{\nEdges}}		

\newmacro{\graph}{\mathcal{G}}		
\newmacro{\graphall}{\graph(\vertices,\edges)}		

\newmacro{\vecspace}{\mathcal{V}}		
\newmacro{\subspace}{\mathcal{W}}		

\newmacro{\bvec}{e}		
\newmacro{\bvecs}{\mathcal{E}}		

\newmacro{\coord}{i}		
\newmacro{\coordalt}{j}		
\newmacro{\coordaltalt}{k}		
\newmacro{\nCoords}{d}		
\newmacro{\dims}{\nCoords}		
\newmacro{\vdim}{\nCoords}		

\newmacro{\pspace}{\mathcal{X}}		
\newmacro{\dspace}{\mathcal{Y}}		

\newmacro{\ppoint}{x}		
\newmacro{\ppointalt}{\alt\ppoint}		
\newmacro{\ppointaltalt}{\altalt\ppoint}		
\newmacro{\ppoints}{\mathcal{X}}		
\newmacro{\pstate}{X}		

\newmacro{\dpoint}{y}		
\newmacro{\dpointalt}{\alt\dpoint}		
\newmacro{\dpointaltalt}{\altalt\dpoint}		
\newmacro{\dpoints}{\mathcal{Y}}		
\newmacro{\dstate}{Y}		

\newmacro{\pvec}{z}		
\newmacro{\dvec}{w}		

\newmacro{\mat}{M}		
\newmacro{\hmat}{H}		

\newmacro{\ones}{\mathbf{1}}		
\newmacro{\eye}{I}		
\newmacro{\zer}{\mathbf{0}}		


\DeclarePairedDelimiterXPP{\dnorm}[1]{}{\lVert}{\rVert}{_{\ast}}{#1}		

\DeclarePairedDelimiterXPP{\onenorm}[1]{}{\lVert}{\rVert}{_{1}}{#1}		
\DeclarePairedDelimiterXPP{\twonorm}[1]{}{\lVert}{\rVert}{_{2}}{#1}		
\DeclarePairedDelimiterXPP{\supnorm}[1]{}{\lVert}{\rVert}{_{\infty}}{#1}		

\DeclarePairedDelimiterX{\braket}[2]{\langle}{\rangle}{#1,#2}		

\DeclarePairedDelimiterX{\inner}[2]{\langle}{\rangle}{#1,#2}		

\newcommand{\defeq}{\coloneqq}		
\newcommand{\eqdef}{\eqqcolon}		

\newcommand{\from}{\colon}		

\newop{\Opt}{Opt}		
\newop{\Sol}{Sol}		
\newop{\gap}{Gap}		
\newop{\orcl}{Or}		

\newmacro{\obj}{f}		
\newmacro{\objalt}{g}		
\newmacro{\sobj}{F}		

\newmacro{\gvec}{g}		
\newmacro{\oper}{A}		
\newmacro{\vecfield}{v}		


\newmacro{\vbound}{G}		
\newmacro{\lips}{L}		
\newmacro{\strong}{\alpha}		
\newmacro{\smooth}{\beta}		

\newop{\tspace}{T}		
\newop{\tcone}{TC}		
\newop{\dcone}{\tcone^{\ast}}		
\newop{\ncone}{NC}		
\newop{\pcone}{PC}		
\newop{\hull}{\Delta}		

\newmacro{\cvx}{\mathcal{C}}		
\newmacro{\subd}{\partial}		

\newop{\Eucl}{\Pi}		
\newop{\logit}{\Lambda}		
\newop{\dkl}{KL}		

\newmacro{\hreg}{h}		
\newmacro{\hconj}{\hreg^{\ast}}		
\newmacro{\breg}{D}		
\newmacro{\mprox}{P}		
\newmacro{\mirror}{Q}		
\newmacro{\fench}{F}		
\newmacro{\hstr}{K}		
\newmacro{\hrange}{H}		
\newmacro{\proxdom}{\points^{\hreg}}		

\DeclarePairedDelimiterXPP{\proxof}[2]{\mprox_{#1}}{(}{)}{}{#2}		

\newmacro{\point}{x}		
\newmacro{\pointalt}{\alt\point}		
\newmacro{\pointaltalt}{\altalt\point}		
\newmacro{\points}{\mathcal{K}}		
\newmacro{\intpoints}{\relint\points}		

\newmacro{\base}{p}		
\newmacro{\basealt}{q}		
\newmacro{\basealtalt}{u}		

\newmacro{\open}{\mathcal{U}}		
\newmacro{\closed}{\mathcal{C}}		
\newmacro{\cpt}{\mathcal{K}}		
\newmacro{\nhd}{\mathcal{U}}		

\newop{\ex}{\mathbb{E}}		
\newop{\prob}{\mathbb{P}}		
\newop{\Var}{Var}		
\newop{\simplex}{\hull}		

\providecommand{\given}{}


\DeclarePairedDelimiterXPP{\exof}[1]{\ex}{[}{]}{}{
\renewcommand\given{\nonscript\,\delimsize\vert\nonscript\,\mathopen{}} #1}

\DeclarePairedDelimiterXPP{\probof}[1]{\prob}{(}{)}{}{
\renewcommand\given{\nonscript\:\delimsize\vert\nonscript\:\mathopen{}} #1}

\DeclarePairedDelimiterXPP{\oneof}[1]{\one}{\{}{\}}{}{
\renewcommand\given{\nonscript\,\delimsize\vert\nonscript\,\mathopen{}} #1}

\newmacro{\sample}{\omega}		
\newmacro{\samples}{\Omega}		

\newmacro{\filter}{\mathcal{F}}		
\newmacro{\probspace}{(\samples,\filter,\prob)}		

\newmacro{\event}{E}       
\newmacro{\eventalt}{H}       
\newmacro{\mean}{\mu}		
\newmacro{\sdev}{\sigma}		
\newmacro{\variance}{\sdev^{2}}		

\newcommand{\est}[1]{\hat #1}		

\newmacro{\error}{Z}		
\newmacro{\noise}{Z}		
\newmacro{\bias}{b}		

\newmacro{\serror}{\theta}		
\newmacro{\snoise}{\xi}		
\newmacro{\sbias}{\psi}		

\newmacro{\sbound}{M}		
\newmacro{\bbound}{B}		
\newmacro{\noisepar}{\sdev}		
\newmacro{\noisevar}{\variance}		


\newmacro{\elem}{a}		
\newmacro{\elemalt}{\alt\elem}		
\newmacro{\elems}{\mathcal{A}}		
\newmacro{\iElem}{i}		
\newmacro{\nElems}{n}		

\newmacro{\partition}{\mathcal{S}}		
\newcommand{\refined}{\mathrel{}\succcurlyeq\mathrel{}}		

\newmacro{\set}{A}		
\newmacro{\setalt}{\alt\set}		

\newmacro{\class}{S}		
\newmacro{\classalt}{\alt\class}		
\newmacro{\classaltalt}{\altalt\class}		
\newmacro{\classes}{\partition}		
\newmacro{\iClass}{j}		
\newmacro{\nClasses}{n}		
\newmacro{\allClasses}{\mathcal{S}}		
\newmacro{\struct}{\allClasses}		

\newmacro{\range}{R}
\newmacro{\incr}{r}

\newmacro{\basepay}{r}
\newmacro{\basecost}{c}
\newmacro{\rangecost}{C}

\newmacro{\lvl}{\ell}		
\newmacro{\lvlalt}{k}		
\newmacro{\nLvls}{L}		
\newcommand{\atlvl}[1]{_{#1}}

\newmacro{\source}{\elems}		

\DeclarePairedDelimiterXPP{\setof}[1]{\debug{\mathrm{elem}}}{(}{)}{}{#1}
\DeclarePairedDelimiterXPP{\attof}[1]{\debug{\mathrm{attr}}}{(}{)}{}{#1}

\newmacro{\parent}{\class}		
\newmacro{\parentset}{\set}		
\newcommand{\parentof}{\mathrel{}\vartriangleright\mathrel{}}		

\newmacro{\child}{\classalt}		
\newmacro{\childalt}{\classaltalt}		
\newmacro{\childset}{\setalt}		
\newcommand{\childof}{\mathrel{}\vartriangleleft\mathrel{}}		

\newmacro{\sibling}{\sim}
\newop{\children}{ch}
\newmacro{\nChildren}{m}		

\newcommand{\desc}[1][]{\mathrel{}\prec_{#1}\mathrel{}}		
\newcommand{\anc}[1][]{\mathrel{}\succ_{#1}\mathrel{}}		
\newcommand{\desceq}[1][]{\mathrel{}\preccurlyeq_{#1}\mathrel{}}		
\newcommand{\anceq}[1][]{\mathrel{}\succcurlyeq_{#1}\mathrel{}}		

\newcommand{\lineage}[1]{\source \equiv #1\atlvl{0} \parentof #1\atlvl{1} \parentof \dotsm \parentof #1\atlvl{\lvl} \equiv #1}
\newcommand{\leafpath}[1]{#1\atlvl{\nLvls} \childof #1\atlvl{\nLvls-1} \childof \dotsm \childof #1\atlvl{0} \equiv \source}
\newcommand{\classleaf}[2]{#1 \equiv #1\atlvl{\lvl} \parentof #1\atlvl{\lvl+1} \parentof \dotsm \parentof #1\atlvl{\nLvls} \equiv \{#2\}}

\newmacro{\temp}{\mu}		
\newmacro{\diff}{\delta}		
\newmacro{\learn}{\eta}		
\newmacro{\score}{y}		
\newmacro{\pf}{Z}		

\newcommand{\choice}[1][]{%
\renewcommand\given{\mathrel{}{\nonscript\mkern-\medmuskip}|{\nonscript\mkern-\medmuskip} \mathrel{}}%
\debug P_{#1}}		

\newmacro{\energy}{E}		
\newmacro{\test}{p}		
\newmacro{\eff}{\alpha}		

\newmacro{\basemodelcost}{\est\basecost}
\newmacro{\incrmodel}{\est\incr}		
\newmacro{\model}{\est\cost}		

\newmacro{\nEff}{\nElems_{\mathrm{eff}}}		

\newcommand{\redbus}{\textrm{red bus}\xspace}
\newcommand{\bluebus}{\textrm{blue bus}\xspace}
\newcommand{\car}{\textrm{car}\xspace}


\addauthor[\textbf{MM}]{MM}{Crimson}

\addauthor[\textbf{Pan}]{PM}{Blue}
\newcommand{\PM}{\PMmargincomment}
\newcommand{\explain}[1]{\tag*{\itshape\#\:#1}}

\newmacro{\aux}{\tilde\state}
\newmacro{\daux}{\tilde\dstate}

\addauthor[\textbf{TR}]{TR}{Green}
\newcommand{\TR}{\TRmargincomment}

\addauthor[\textbf{HZ}]{HZ}{Purple}

\begin{document}


\newcommand{\longtitle}{\uppercase{Nested Bandits}}

\title{\longtitle}		

\author
[M.~Martin]
{Matthieu Martin$^{\ast}$}
\address{$^{\ast}$\,%
Criteo AI Lab.}
\email{mat.martin@criteo.com}

\author
[P.~Mertikopoulos]
{Panayotis Mertikopoulos$^{\diamond,\ast}$}
\address{$^{\diamond}$\,%
Univ. Grenoble Alpes, CNRS, Inria, Grenoble INP, LIG, 38000 Grenoble, France.}
\email{panayotis.mertikopoulos@imag.fr}

\author
[T.~Rahier]
{\\Thibaud Rahier$^{\ast}$}
\email{t.rahier@criteo.com}

\author
[H.~Zenati]
{Houssam Zenati$^{\ast,\S}$}
\address{$^{\S}$\,%
Univ. Grenoble  Alpes, Inria, CNRS, Grenoble INP, LJK, 38000 Grenoble, France.}
\email{h.zenati@criteo.com}


\thanks{Authors appear in alphabetical order.}

\subjclass[2020]{Primary 68Q32; secondary 91B06.}
\keywords{%
Online learning;
nested logit choice;
similarity structures;
multi-armed bandits.}

\newacro{LHS}{left-hand side}
\newacro{RHS}{right-hand side}
\newacro{iid}[i.i.d.]{independent and identically distributed}
\newacro{lsc}[l.s.c.]{lower semi-continuous}

\newacro{IIA}{independence from irrelevant alternatives}
\newacro{NLC}{nested logit choice}
\newacro{NEW}{nested exponential weights}
\newacro{IWE}{importance-weighted estimator}
\newacro{NIWE}{nested importance weighted estimator}
\newacro{EW}{exponential weights}
\newacro{PoAf}{price of affinity}
\newacro{FTRL}{``follow the regularized leader''}
\newacro{EXP3}{exponential weights algorithm for exploration and exploitation}
\newacro{MD}{mirror descent}
\newacro{DA}{dual averaging}
\newacro{NE}{Nash equilibrium}
\newacroplural{NE}[NE]{Nash equilibria}

\begin{abstract}
%
%
In many online decision processes, the optimizing agent is called to choose between large numbers of alternatives with many inherent similarities;
in turn, these similarities imply closely correlated losses that may confound standard discrete choice models and bandit algorithms.
We study this question in the context of \emph{nested bandits}, a class of adversarial multi-armed bandit problems where the learner seeks to minimize their regret in the presence of a large number of distinct alternatives with a hierarchy of embedded (non-combinatorial) similarities.
In this setting, optimal algorithms based on the \acl{EW} blueprint (like Hedge, EXP3, and their variants) may incur significant regret because they tend to spend excessive amounts of time exploring irrelevant alternatives with similar, suboptimal costs.
To account for this, we propose a \acdef{NEW} algorithm that performs a layered exploration of the learner's set of alternatives based on a nested, step-by-step selection method.
In so doing, we obtain a series of tight bounds for the learner's regret showing that online learning problems with a high degree of similarity between alternatives can be resolved efficiently, without a red bus / blue bus paradox occurring.
\end{abstract}

\maketitle
\allowdisplaybreaks		
\acresetall		

\section{Introduction}
\label{sec:introduction}

Consider the following discrete choice problem (known as the ``\redbus\,/\,\bluebus paradox'' in the context of transportation economics).
A commuter has a choice between taking a car or bus to work:
commuting by car takes on average half an hour modulo random fluctuations, whereas commuting by bus takes an hour, again modulo random fluctuations (it's a long commute).
Then, under the classical multinomial logit choice model for action selection \cite{Luc59,McF74a}, the commuter's odds for selecting a car over a bus would be $\exp(-1/2) / \exp(-1) \approx 1.6:1$.
This indicates a very clear preference for taking a car to work and is commensurate with the fact that, on average, commuting by bus takes twice as long.

Consider now the same model but with a twist.
The company operating the bus network purchases a fleet of new buses that are otherwise completely identical to the existing ones, except for their color:
old buses are red, the new buses are blue.
This change has absolutely no effect on the travel time of the bus;
however, since the new set of alternatives presented to the commuter is $\{\car,\redbus,\bluebus\}$, the odds of selecting a car over a bus (red or blue, it doesn't matter) now drops to $\exp(-1/2) / [\exp(-1) + \exp(-1)] \approx 0.8:1$.
Thus, by introducing an \emph{irrelevant} feature (the color of the bus), the odds of selecting the alternative with the highest utility have dropped dramatically, to the extent that commuting by car is no longer the most probable outcome in this example.

Of course, the shift in choice probabilities may not always be that dramatic, but the point of this example is that the presence of an irrelevant alternative (the \bluebus) would always induce such a shift \textendash\ 
which is, of course, absurd.
In fact, the red bus\,/\,blue bus paradox was originally proposed as a  sharp criticism of the \ac{IIA} axiom that underlies the multinomial logit choice model \citep{Luc59} and which makes it unsuitable for choice problems with inherent similarities between different alternatives.
In turn, this has led to a vast corpus of literature in social choice and decision theory, with an extensive array of different axioms and models proposed to overcome the failures of the \ac{IIA} assumption.
For an introduction to the topic, we refer the reader to the masterful accounts of \citet{McF74a}, \citet{BAL85} and \citet{AdPT92}.

Perhaps surprisingly, the implications of the red bus\,/\,blue bus paradox have not been explored in the context of online learning, despite the fact that similarities between alternatives are prevalent in the field's application domains \textendash\ for example, in recommender systems with categorized product recommendation catalogues, in the economics of transport and product differentiation, etc.
What makes this gap particularly pronounced is the fact that logit choice underlies some of the most widely used algorithmic schemes for learning in multi-armed bandit problems \textendash\ namely the \ac{EXP3} \citep{Vov90,LW94,ACBFS95} as well as its variants, Hedge \citep{ACBF02}, EXP3.P \citep{ACBFS02}, EXP3-IX \citep{KNVM14}, EXP4 \citep{ACBFS02} / EXP4-IX \citep{Neu15}, etc.
Thus, given the vulnerability of logit choice to irrelevant alternatives, it stands to reason that said algorithms may be suboptimal when faced with a set of alternatives with many inherent similarities.

\para{Our contributions}

Our paper examines this question in the context of repeated decision problems where a learner seeks to minimize their regret in the presence of a large number of distinct alternatives with a hierarchy of embedded (non-combinatorial) similarities.
This similarity structure, which we formalize in \cref{sec:model}, is defined in terms of a nested series of attributes \textendash\ like ``type'' or ``color'' \textendash\ and induces commensurate similarities to the losses of alternatives that lie in the same class (just as the red and blue buses have identical losses in the example described above).

Inspired by the \acl{NLC} model introduced by \citet{McF74a} to resolve the original red bus\,/\,blue bus paradox, we develop in \cref{sec:NEW} a \acdef{NEW} algorithm for no-regret learning in decision problems of this type.
Our main result is that the regret incurred by \ac{NEW} is bounded as $\bigoh(\sqrt{\nEff \log\nElems \cdot \nRuns})$, where $\nElems$ is the total number of alternatives and $\nEff$ is the ``effective'' number when taking similarities into account (for example, in the standard red bus\,/\,blue bus paradox, $\nEff = 2$, \cf \cref{sec:results}).
The gap between nested and non-nested algorithms can be quantified by the problem's \acdef{PoAf}, defined here as the ratio $\eff = \sqrt{\nElems / \nEff}$  measuring the worst-case ratio between the regret guarantees of the \ac{NEW} and \ac{EXP3} algorithms (the latter scaling as $\bigoh(\sqrt{\nElems\log\nElems \cdot \nRuns})$ in the problem at hand).

In practical applications (such as the type of recommendation problems that arise in online advertising), $\eff$ can be exponential in the number of attributes, indicating that the \ac{NEW} algorithm could lead to significant performance gains in this context.
We verify that this is indeed the case in a range of synthetic experiments in \cref{sec:numerics}.

\para{Related Work}

The problem of exploiting the structure of the loss model and/or any side information available to the learner is a staple of the bandit literature.
More precisely, in the setting of contextual bandits, the learner is assumed to observe some ``context-based'' information and tries to learn the ``context to reward'' mapping underlying the model in order to make better predictions.
Bandit algorithms of this type \textendash\ like EXP4 \textendash\ are often studied as ``expert'' models \citep{ACBFS02,CBL06} or attempt to model the agent's loss function with a semi-parametric contextual dependency in the stochastic setting to derive optimistic action selection rules \citep{AYY11};
for a survey, we refer the reader to \cite{LS20} and references therein.
While the nested bandit model we study assumes an additional layer of information relative to standard bandit models, there are no experts or a contextual mapping conditioning the action taken, so it is not comparable to the contextual setup.

The type of feedback we consider assumes that the learner observes the ``intra-class'' losses of their chosen alternative, similar to the semi-bandit in the study of combinatorial bandit algorithms \cite{CBL12,GLLO07}.
However, the similarity with combinatorial bandit models ends there:
even though the categorization of alternatives gives rise to a tree structure with losses obtained at its leaves, there is no combinatorial structure defining these costs, and modeling this as a combinatorial bandit would lead to the same number of arms and ground elements, thus invalidating the concept.

Besides these major threads in the literature, \citep{TS18} recently showed that the range of losses can be exploited with an additional free observation, while \citep{CBS18} improves the regret guarantees by using effective loss estimates.
However, both works are susceptible to the advent of irrelevant alternatives and can incur significant regret when faced with such a problem.
Finally, in the Lipschitz bandit setting, \citep{CBGG+17,HMMR21} obtain order-optimal regret bounds by building a hierarchical covering model in the spirit of \cite{BMSS11};
the correlations induced by a Lipschitz loss model cannot be compared to our model, so there is no overlap of techniques or results.


\section{The model}
\label{sec:model}

We begin in this section by defining our general nested choice model.
Because the technical details involved can become cumbersome at times, it will help to keep in mind the running example of a music catalogue where songs are classified by, say,
genre (classical music, jazz, rock,\dots),
artist (Rachmaninov, Miles Davis, Led Zeppelin,\dots),
and
album.
This is a simple \textendash\ but not simplistic \textendash\ use case which requires the full capacity of our model, so we will use it as our ``go-to'' example throughout.

\subsection{Attributes, classes, and the relations between them}
\label{eq:attributes}

Let $\elems = \setdef{\elem_{\iElem}}{\iElem = 1,\dotsc,\nElems}$ be a set of \emph{alternatives} (or \emph{atoms}) indexed by $\iElem = 1,\dotsc,\nElems$.
A \emph{similarity structure} (or \emph{structure of attributes}) on $\elems$ is defined as a tower of nested \emph{similarity partitions} (or \emph{attributes}) $\classes\atlvl{\lvl}$, $\lvl=0,\dotsc,\nLvls$, of $\elems$ with $\{\elems\} \eqdef \classes\atlvl{0} \refined \classes\atlvl{1} \refined \dotsm \refined \classes\atlvl{\nLvls} \defeq \setdef{\{\elem\}}{\elem \in \elems}$.
As a result of this definition, each partition $\classes\atlvl{\lvl}$ captures successively finer attributes of the elements of $\elems$ (in our music catalogue example, these attributes would correspond to genre, artist, album, etc.).%
\footnote{The trivial partitions $\classes\atlvl{0} = \{\elems\}$ and $\classes\atlvl{\nLvls} = \setdef{\{\elem\}}{\elem\in\elems}$ do not carry much information in themselves, but they are included for completeness and notational convenience later on.}
Accordingly, each constituent set $\set$ of a partition $\classes\atlvl{\lvl}$ will be referred to as a \emph{similarity class} and we assume it collects all elements of $\elems$ that share the attribute defining $\classes\atlvl{\lvl}$:
for example, a similarity class for the attribute ``artist'' might consist of all Beethoven symphonies, all songs by Led Zeppelin, etc.

Collectively, a structure of attributes will be represented by the disjoint union
\begin{equation}
\label{eq:struct}
\struct
	\defeq \coprod\nolimits_{\lvl=0}^{\nLvls} \classes\atlvl{\lvl}
	\equiv \union\nolimits_{\lvl=0}^{\nLvls} \setdef{(\set,\lvl)}{\set\in\classes\atlvl{\lvl}}
\end{equation}
of all class/attribute pairs of the form $(\set,\lvl)$ for $\set\in\classes\atlvl{\lvl}$.
In a slight abuse of terminology (and when there is no danger of confusion),
the pair $\class = (\set,\lvl)$ will also be referred to as a ``class'',
and
we will write $\class\in\classes\atlvl{\lvl}$ and $\elem\in\class$ instead of $\set\in\classes\atlvl{\lvl}$ and $\elem\in\set$ respectively.
By contrast, when we need to clearly distinguish between a class and its underlying set, we will write $\set = \setof{\class}$ for the set of atoms contained in $\class$ and
$\lvl = \attof{\class}$ for the attached attribute label.

\smallskip
\begin{remark}
The reason for including the attribute label $\lvl$ in the definition of $\struct$ is that a set of alternatives may appear in different partitions of $\elems$ in a different context.
For example, if ``IV'' is the only album by Led Zeppelin in the catalogue, the album's track list represents both the set of ``all songs in IV'' as well as the set of ``all Led Zeppelin songs''.
However, the focal attribute in each case is different \textendash\ ``artist'' in the former versus ``album'' in the latter \textendash\ and this additional information would be lost in the non-discriminating union $\union_{\lvl=0}^{\nLvls} \classes\atlvl{\lvl}$ (unless, of course, the partitions $\classes\atlvl{\lvl}$ happen to be mutually disjoint, in which case the distinction between ``union'' and ``disjoint union'' becomes set-theoretically superfluous).
\endenv
\end{remark}

Moving forward, if a class $\parent\in\classes\atlvl{\lvl}$ contains the class $\child\in\classes\atlvl{\lvlalt}$ for some $\lvlalt>\lvl$, we will say that $\child$ is a \emph{descendant} of $\parent$ (resp.~$\parent$ is an \emph{ancestor} of $\child$), and we will write ``$\child \desc \parent$'' (resp.~``$\parent \anc \child$'').%
\footnote{More formally, we will write $\child \desc \parent$ when $\setof{\child} \subseteq \setof{\parent}$ and $\attof{\child} > \attof{\parent}$.
The corresponding weak relation ``$\desceq$'' is defined in the standard way, \ie allowing for the case $\attof{\child} = \attof{\parent}$ which in turn implies that $\child = \parent$.}
As a special case of this relation, if $\child \desc \parent$ and $\lvlalt = \lvl+1$, we will say that $\child$ is a \emph{child} of $\parent$ (resp.~$\parent$ is  \emph{parent} of $\child$) and we will write ``$\child \childof \parent$'' (resp.~``$\parent \parentof \child$'').
For completeness, we will also say that $\alt\class$ and $\altalt\class$ are \emph{siblings} if they are children of the same parent,
and we will write $\alt\class \sibling \altalt\class$ in this case.
Finally, when we wish to focus on descendants sharing a certain attribute, we will write ``$\child \desc[\lvl] \parent$'' as shorthand for the predicate ``$\child \desc \parent$ and $\attof{\child} = \lvl$''.

Building on this, a similarity structure on $\elems$ can also be represented graphically as a rooted directed tree \textendash\ an \emph{arborescence} \textendash\ by connecting two classes $\parent,\child\in\struct$ with a directed edge $\parent\to\child$ whenever $\parent \parentof \child$.
By construction, the root of this tree is $\source$ itself,%
\footnote{Stricto sensu, the root of the tree is $(\elems,0)$, but since there is no danger of confusion, the attribute label ``0'' will be dropped.}
and
the unique directed path $\lineage{\class}$ from $\source$ to any class $\class\in\struct$ will be referred to as the \emph{lineage} of $\class$.
For notational simplicity, we will not distinguish between $\struct$ and its graphical representation, and we will use the two interchangeably;
for an illustration, see \cref{fig:tree}.


\begin{figure}[tbp]
\ctikzfig{Figures/tree1}
\caption{A structure with $\nLvls = 3$ attributes on the set $\elems = \{\elem_{1},\dotsc,\elem_{8}\}$;
for example, the class $\class_{2}^{1}$ consists of $\{\elem_{3},\elem_{4}\}$.}
\label{fig:tree}
\vspace{-2ex}
\end{figure}



\subsection{The loss model}
\label{sec:costs}

Throughout what follows, we will consider loss models in which alternatives that share a common set of attributes incur similar costs, with the degree of similarity depending on the number of shared attributes.
More precisely, given a similarity class $\parent \in \struct$, we will assume that all its immediate subclasses $\child$ share the same base cost $\cost_{\parent}$ (determined by the parent class $\parent$) plus an idiosyncratic cost increment $\incr_{\child}$ (which is specific to the child $\child\childof\parent$ in question).
Formally, starting with $\cost_{\source} = 0$ (for the root class $\source$), this boils down to the recursive definition
\begin{equation}
\label{eq:cost-rec}
\cost_{\child}
	= \cost_{\parent} + \incr_{\child}
	\quad
	\text{for all $\child\childof\parent$},
\end{equation}
which, when unrolled over the lineage $\lineage{\class}$ of a target class $\class\in\classes\atlvl{\lvl}$, yields the expression
\begin{equation}
\label{eq:cost-class}
\cost_{\class}
	= \insum_{\classalt \anceq \class} \incr_{\classalt}
	= \incr_{\class\atlvl{1}} + \dotsm + \incr_{\class\atlvl{\lvl}}.
\end{equation}
Thus, in particular, when $\class \gets \elem \in \elems$, the cost assigned to an individual alternative $\elem\in\elems$ will be given by
\begin{equation}
\label{eq:cost-elem}
\cost_{\elem}
	= \insum_{\lvl=1}^{\nLvls} \incr_{\class\atlvl{\lvl}}
	= \insum_{\class \ni \elem} \incr_{\class}
	\quad
	\text{for all $\elem\in\elems$}.
\end{equation}

Finally, to quantify the ``intra-class'' variability of costs, we will assume throughout that the idiosyncratic cost increments within a given parent class $\parent$ are bounded as
\begin{equation}
\label{eq:range}
\incr_{\child}
	\in [0,\range_{\parent}]
	\quad
	\text{for all $\child\childof\parent$}.
\end{equation}
This terminology is justified by the fact that, under the loss model \eqref{eq:cost-rec}, the costs $\cost_{\child},\cost_{\childalt}$ to any two \emph{sibling} classes $\child,\childalt\childof{\parent}$ (\ie any two classes parented by $\parent$) differ by at most $\range_{\parent}$.
Analogously, the costs to any two alternatives $\elem,\elemalt\in\elems$ that share a set of common attributes $\class\atlvl{1},\dotsc,\class\atlvl{\lvl}$ will differ by at most $\sum_{\lvlalt=\lvl+1}^{\nLvls} \range_{\class\atlvl{\lvlalt}}$.

\begin{example}
To represent the original \redbus\,/\,\bluebus problem as an instance of the above framework,
let $\classes\atlvl{1} = \{\{\redbus,\bluebus\},\car\}$ be the partition of the set $\elems = \braces{\redbus,\bluebus,\car}$ by type (``\texttt{bus}'' or ``\texttt{car}''), and let $\classes\atlvl{2}$ be the corresponding sub-partition by color (``\texttt{red}'' or ``\texttt{blue}'' for elements of the class ``\texttt{bus}'').
The fact that color does not affect travel times may then be represented succinctly by taking $\range\atlvl{\textrm{color}} = 0$ (representing the fact that color does not affect travel times).
\endenv
\end{example}

\begin{remark}
We make no distinction here between $\cost_{\elem}$ and $\cost_{\{\elem\}}$, \ie between an alternative $\elem$ of $\elems$ and the (unique) singleton class of $\{\elem\} \in \classes\atlvl{\nLvls}$ containing it.
This is done purely for reasons of notational convenience.
\endenv
\end{remark}

\begin{remark}
For posterity, we also note that the optimizing agent is assumed to be aware of the cost decomposition \eqref{eq:cost-elem} after selecting an alternative $\elem\in\elems$.
In the context of combinatorial bandits \cite{CBL12} this would correspond to the so-called ``semi-bandit'' setting.
\endenv
\end{remark}

\subsection{Sequence of events}
\label{sec:sequence}

With all this in hand, we will consider a generic online decision process that unfolds over a set of alternatives $\elems$ endowed with a similarity structure $\struct = \coprod_{\lvl}\classes\atlvl{\lvl}$ as follows:
\begin{enumerate}
\item
At each stage $\run = \running$, the learner selects an alternative $\curr[\elem] \in \elems$ by selecting attributes from $\struct$ one-by-one.
\item
Concurrently, nature sets the idiosyncratic, intra-class losses $\incr_{\class,\run}$ for each similarity class $\class\in\struct$.
\item
The learner incurs $\incr_{\class,\run}$ for each chosen class $\class\ni\curr[\elem]$ for a total cost of $\curr[\cost] = \sum_{\class\ni\curr[\elem]} \incr_{\class,\run}$, and the process repeats.
\end{enumerate}
To align our presentation with standard bandit models with losses in $[0,1]$, we will assume throughout that $\sum_{\class\ni\elem} \range_{\class} \leq 1$ for all $\elem\in\elems$, meaning in particular that the maximal cost incurred by any alternative $\elem\in\elems$ is upper bounded by $1$.
Other than this normalization, the sequence of idiosyncratic loss vectors $\incr_{\run} \in \R^{\struct}$, $\run=\running$, is assumed arbitrary and unknown to the learner as per the standard adversarial setting \cite{CBL06,SS11}.

To avoid deterministic strategies that could be exploited by an adversary, we will assume that the learner selects an alternative $\curr[\elem]$ at time $\run$ based on a mixed strategy $\curr \in \simplex(\elems)$, \ie $\curr[\elem] \sim \curr$.
The regret of a policy $\curr$, $\run=\running$, against a benchmark strategy $\test\in\simplex(\elems)$ is then defined as the cumulative difference between the player's mean cost under $\test$ and $\curr$, that is
\begin{equation}
\label{eq:regret-test}
\reg_{\test}(\nRuns)
	= \sum_{\run=\start}^{\nRuns}
		\bracks*{\ex_{\curr}[\cost_{\curr[\elem],\run}] - \ex_{\test}[\cost_{\curr[\elem],\run}]}
	= \sum_{\run=\start}^{\nRuns}
		\braket{\curr[\cost]}{\curr - \test}
\end{equation}
where $\curr[\cost] = (\cost_{\elem,\run})_{\elem\in\elems} \in\R^{\elems}$ denotes the vector of costs encountered by the learner at time $\run$, \ie $\cost_{\elem,\run} = \sum_{\class\ni\elem} \incr_{\class,\run}$ for all $\elem\in\elems$.
This definition will be our main figure of merit in the sequel.

\section{The \acl{NEW} algorithm}
\label{sec:NEW}

Our goal in what follows will be to design a learning policy capable of exploiting the type of similarity structures introduced in the previous section.
The main ingredients of our method are a nested attribute selection and cost estimation rule, which we describe in detail in \cref{sec:choice,sec:NIWE} respectively;
the proposed \acdef{NEW} algorithm is then developed and discussed in \cref{sec:algo}.

\subsection{Probabilities, propensities, and \acl{NLC}}
\label{sec:choice}

We begin by introducing the attribute selection scheme that forms the backbone of our proposed policy.
Our guiding principle in this is the \acdef{NLC} rule of \citet{McF74a} which selects an alternative $\elem\in\elems$ by traversing $\struct$ one attribute at a time and prescribing the corresponding conditional choice probabilities at each level of $\struct$.

To set the stage for all this, if $\strat = (\strat_{1},\dotsc,\strat_{\nElems}) \in \simplex(\elems)$ is a mixed strategy on $\elems$
we will write
\begin{align}
\label{eq:prob-class}
\strat_{\class}
	&\txs
	= \sum_{\elem\in\class} \strat_{\elem}
\shortintertext{for the probability of choosing $\class \in \struct$ under $\strat$, and}
\label{eq:prob-cond}
\strat_{\classalt \vert \class}
	&= \strat_{\classalt} / \strat_{\class}
\end{align}
for the conditional probability of choosing a descendant $\classalt$ of $\class$ assuming that $\class$ has already been selected under $\strat$.%
\footnote{%
Note here that the joint probability of selecting \emph{both} $\class$ and $\classalt$ under $\strat$ is simply $\strat_{\classalt}$ whenever $\classalt \desceq \class$.}
Then the \ac{NLC} rule proceeds as follows:
first, it prescribes choice probabilities $\strat_{\class\atlvl{1}}$ for all classes $\class\atlvl{1}\in\classes\atlvl{1}$ (\ie the coarsest ones);
subsequently, once a class $\class\atlvl{1} \in \classes\atlvl{1}$ has been selected, \ac{NLC} prescribes the conditional choice probabilities $\strat_{\class\atlvl{2} \vert \class\atlvl{1}}$ for all children $\class\atlvl{2}$ of $\class\atlvl{1}$ and draws a class from $\classes\atlvl{2}$ based on $\strat_{\class\atlvl{2} \vert \class\atlvl{1}}$.
The process then continues downwards along $\struct$ until reaching the finest partition $\classes\atlvl{\nLvls}$ and selecting an atom $\{\elem\} \equiv \leafpath{\class}$.

This step-by-step selection process captures the ``nested'' part of the \acl{NLC} rule;
the ``logit'' part refers to the way that the conditional probabilities \eqref{eq:prob-cond} are actually prescribed given the agent's predisposition towards each alternative $\elem\in\elems$.
To make this precise, suppose that the learner associates to each element $\elem\in\elems$ a \emph{propensity score} $\score_{\elem} \in \R$ indicating their tendency \textendash\ or \emph{propensity} \textendash\ to select it.
The associated propensity score of a similarity class $\class\atlvl{\lvl-1}\in\classes\atlvl{\lvl-1}$, $\lvl=1,\dotsc,\nLvls$, is then defined inductively as
\begin{equation}
\label{eq:score}
\score_{\class\atlvl{\lvl-1}}
	= \temp\atlvl{\lvl} \log\insum_{\class\atlvl{\lvl} \childof \class\atlvl{\lvl-1}} \exp(\score_{\class\atlvl{\lvl}} / \temp\atlvl{\lvl})
\end{equation}
where
$\temp\atlvl{\lvl} > 0$ is a tunable parameter that reflects the learner's \emph{uncertainty level} regarding the $\lvl$-th attribute $\classes\atlvl{\lvl}$ of $\struct$.
In words, this means that the score of a class is the weighted softmax of the scores of its children;
thus, starting with the individual alternatives of $\elems$ \textendash\ that is, the \emph{leaves} of $\struct$ \textendash\ propensity scores are propagated backwards along $\struct$, and this is repeated one attribute at a time until reaching the root of $\struct$.

\begin{remark}
We should also note that \cref{eq:score} assigns a propensity score to \emph{any} similarity class $\class\in\struct$.
However, because the primitives of this assignment are the original scores assigned to each alternative $\elem\in\elems$, we will reserve the notation $\score = (\score_{1},\dotsc,\score_{\nElems}) \in \R^{\elems}$ for the \emph{profile} of propensity scores $(\score_{\elem})_{\elem\in\elems}$ that comprises the basis of the recursive definition \eqref{eq:score}.
\endenv
\end{remark}

With all this in hand, given a propensity score profile $\score = (\score_{1},\dotsc,\score_{\nElems})\in\R^{\elems}$, the \acdef{NLC} rule is defined via the family of conditional selection probabilities
\begin{equation}
\label{eq:NLC}
\tag{NLC}
\choice[\class\atlvl{\lvl} \vert \class\atlvl{\lvl-1}](\score)
	= \frac
		{\exp(\score_{\class\atlvl{\lvl}} / \temp\atlvl{\lvl})}
		{\exp(\score_{\class\atlvl{\lvl-1}} / \temp\atlvl{\lvl})}
\end{equation}
where:
\begin{enumerate}
[left=0pt,itemsep=0pt,topsep=0pt,parsep=\smallskipamount]
\item
$\class\atlvl{\lvl} \in \classes\atlvl{\lvl}$ and $\class\atlvl{\lvl-1} \in \classes\atlvl{\lvl-1}$ is a child\,/\,parent pair of similarity classes of $\struct$.
\item
$\temp\atlvl{1} \geq \dotsm \geq \temp\atlvl{\nLvls} > 0$ is a nonincreasing sequence of uncertainty parameters (indicating a higher uncertainty level for coarser attributes;
we discuss this later).
\end{enumerate}
In more detail, the choice of an alternative $\elem\in\elems$ under \eqref{eq:NLC} proceeds as follows:
given a propensity score $\score_{\elem}\in\R$ for each $\elem\in\elems$, every similarity class $\class\atlvl{\nLvls-1} \in \classes\atlvl{\nLvls-1}$ is assigned a propensity score via the recursive softmax expression \eqref{eq:score}, and the same procedure is applied inductively up to the root $\source$ of $\struct$.
Then, to select an alternative $\elem\in\elems$, the conditional logit choice rule \eqref{eq:NLC} proceeds in a top-down manner,
first by selecting a similarity class $\class\atlvl{1} \childof \class\atlvl{0} \equiv \source$,
then by selecting a child $\class\atlvl{2} \childof \class\atlvl{1}$ of $\class\atlvl{1}$,
and so on until reaching a leaf $\{\elem\} \equiv \leafpath{\class}$ of $\struct$.

Equivalently, unrolling \eqref{eq:NLC} over the lineage $\lineage{\class}$ of a target class $\class\in\classes\atlvl{\lvl}$, we obtain the expression
\begin{equation}
\label{eq:NLC-tot}
\choice[\class](\score)
	= \prod\nolimits_{\lvlalt=1}^{\lvl}
		\frac{\exp(\score_{\class\atlvl{\lvlalt}}/\temp\atlvl{\lvlalt})}{\exp(\score_{\class\atlvl{\lvlalt-1}}/\temp\atlvl{\lvlalt})}
\end{equation}
for the total probability of selecting class $\class$ under the propensity score profile $\score\in\R^{\elems}$.
Clearly, \eqref{eq:NLC} and \eqref{eq:NLC-tot} are mathematically equivalent, so we will refer to either one as the definition of the \acl{NLC} rule.

\subsection{The \acl{NIWE}}
\label{sec:NIWE}

The second key ingredient of our method is how to estimate the costs of alternatives that were not chosen under \eqref{eq:NLC}.
To that end, given a cost vector $\cost\in[0,1]^{\elems}$ and a mixed strategy $\strat\in\simplex(\elems)$ with full support, a standard way to do this is via the \acl{IWE} \citep{BCB12,LS20}
\begin{equation}
\label{eq:IWE}
\tag{IWE}
\est\cost_{\elem}
	= \frac{\oneof{\elem = \est\elem}}{\strat_{\elem}} \cost_{\elem}
\end{equation}

where $\est\elem \sim \strat$ is the (random) element of $\elems$ chosen under $\strat$.

This estimator enjoys the following important properties:
\begin{enumerate}
[\itshape a\upshape)]
\item
It is non-negative.
\item
It is \emph{unbiased}, \ie
\begin{equation}
\exof{\est\cost_{\elem}}
	= \cost_{\elem}
	\quad
	\text{for all $\elem\in\elems$}.
\end{equation}
\item
Its \emph{importance-weighted mean square} is bounded as
\begin{equation}
\exof*{\insum_{\elem\in\elems} \strat_{\elem} \est\cost_{\elem}^{2}}
	\leq \nElems
\end{equation}
\end{enumerate}
This trifecta of properties plays a key role in establishing the no-regret guarantees of the vanilla \acl{EW} algorithm \cite{ACBF02,LW94,Vov90};
at the same time however, \eqref{eq:IWE} fails to take into account any side information provided by similarities between different elements of $\elems$.
This is perhaps most easily seen in the original red bus\,/\,blue bus paradox:
if the commuter takes a red bus, the observed utility would be immediately translateable to the blue bus (and vice versa).
However, \eqref{eq:IWE} is treating the red and blue buses as unrelated, so $\est\cost_{\bluebus}$ is not updated under \eqref{eq:IWE}, even though $\cost_{\bluebus} = \cost_{\redbus}$ by default.

To exploit this type of similarities, we introduce below a layered estimator that shadows the step-by-step selection process of \eqref{eq:NLC}.
To define it, let $\strat\in\simplex(\elems)$ be a mixed strategy on $\elems$ with full support, and assume that an element $\est\elem \in \elems$ is selected progressively according to $\strat$ as in the case of \eqref{eq:NLC}:%
\footnote{To clarify, this process adheres to the ``nested'' part of \eqref{eq:NLC};
the conditional probabilities $\strat_{\classalt \vert \class}$ may of course differ.}
First, the learner chooses a similarity class $\est\class\atlvl{1} \in \classes\atlvl{1}$ with probability $\probof{\est\class\atlvl{1} = \class\atlvl{1}} = \strat_{\class\atlvl{1}}$;
subsequently, conditioned on the choice of $\est\class\atlvl{1}$, a class $\est\class\atlvl{2} \childof \est\class\atlvl{1}$ is selected with probability $\probof{\est\class\atlvl{2} = \class\atlvl{2} \vert \est\class\atlvl{1}} = \strat_{\class\atlvl{2} \vert \est\class\atlvl{1}}$,
and the process repeats until reaching a leaf $\est\class\atlvl{\nLvls} = \{\est\elem\}$ of $\struct$ (at which point the selection procedure terminates and returns $\est\elem$).
Then, given
a loss profile $\incr\in [0, +\infty)^{\struct}$
and a mixed strategy $\strat\in\simplex(\elems)$,
the \acdef{NIWE} is defined for all $\lvl=1,\dotsc,\nLvls$ as

\begin{equation}
\label{eq:NIWE}
\tag{NIWE}
\incrmodel_{\class\atlvl{\lvl}}
	= \frac{\oneof[\big]{\class\atlvl{\lvl} = \est\class\atlvl{\lvl},\dotsc,\class\atlvl{1} = \est\class\atlvl{1}}}
			{\strat_{\class\atlvl{\lvl} \vert \class\atlvl{\lvl-1}} \!\dotsm \strat_{\class\atlvl{2} \vert \class\atlvl{1}} \strat_{\class\atlvl{1}}}
			\incr_{\class\atlvl{\lvl}}
\end{equation}

where the chain of categorical random variables $\elems \equiv \est\class\atlvl{0} \parentof \est\class\atlvl{1} \parentof \dotsm \parentof \est\class\atlvl{\nLvls} = \{\est\elem\}$ is drawn according to $\strat\in\simplex(\elems)$ as outlined above.%
\footnote{The indicator in \eqref{eq:NIWE} is assumed to take precedence over $\strat_{\class\atlvl{\lvlalt} \vert \class\atlvl{\lvlalt-1}}$,
\ie $\basemodelcost_{\class\atlvl{\lvl}} = 0$ if $\class\atlvl{\lvlalt} \neq \est\class\atlvl{\lvlalt}$ for some $\lvlalt=1,\dotsc,\lvl$.}

This estimator will play a central part in our analysis, so some remarks are in order.
First and foremost, the non-nested estimator \eqref{eq:IWE} is recovered as a special case of \eqref{eq:NIWE} when there are no similarity attributes on $\elems$ (\ie $\nLvls = 1$).
Second, in a bona fide nested model, we should note that $\basemodelcost_{\class\atlvl{\lvl}}$ is $\est\class\atlvl{\lvl}$-measurable but \emph{not} $\est\class\atlvl{\lvl-1}$-measurable:
this property has no analogue in \eqref{eq:IWE}, and it is an intrinsic feature of the step-by-step selection process underlying \eqref{eq:NIWE}.
Third, it is also important to note that \eqref{eq:NIWE} concerns the idiosyncratic losses of each chosen class, \emph{not} the base costs $\cost_{\elem}$ of each alternative $\elem\in\elems$.
This distinction is again redundant in the non-nested case, but it leads to a distinct estimator for $\cost_{\elem}$ in nested environments, namely
\begin{equation}
\label{eq:NIWE-cost}
\model_{\elem}
	= \insum_{\class \ni \elem} \incrmodel_{\class}
	\quad
	\text{for all $\elem\in\elems$}.
\end{equation}
In particular, in the red bus\,/\,blue bus paradox, this means that an observation for the class ``\texttt{bus}'' automatically updates both $\est\cost_{\redbus}$ and $\est\cost_{\bluebus}$, thus overcoming one of the main drawbacks of \eqref{eq:IWE} when facing irrelevant alternatives.

To complete the comparison with the non-nested setting,
we summarize below the most important properties of the layered estimator \eqref{eq:NIWE}:

\begin{restatable}{proposition}{NIWE}
\label{prop:NIWE}
Let $\struct = \coprod_{\lvl=1}^{\nLvls} \classes\atlvl{\lvl}$ be a similarity structure on $\elems$.
Then,
given a mixed strategy $\strat\in\simplex(\elems)$ and a vector of cost increments $\incr\in\R^{\struct}$ as per \eqref{eq:range},
the estimator \eqref{eq:NIWE} satisfies the following:
\begin{enumerate}
\item
It is unbiased:
\begin{equation}
\label{eq:unbiased}
\exof*{\incrmodel_{\class}}
	= \incr_{\class}
	\quad
	\text{for all $\class\in\classes$}.
\end{equation}
\item
It enjoys the importance-weighted mean-square bound
\begin{equation}
\label{eq:varbound-base}
\exof*{\strat_{\class} \incrmodel_{\class}^{2}}
	\leq \range_{\class}^{2}
	\quad
	\text{for all $\class\in\classes$}.
\end{equation}
\end{enumerate}
Accordingly, the loss estimator \eqref{eq:NIWE-cost} is itself unbiased and enjoys the bound
\begin{equation}
\label{eq:varbound-cost}
\exof*{\insum_{\elem\in\elems} \strat_{\elem} \model_{\elem}^{2}}
	\leq \nEff
\end{equation}
where $\nEff$ is defined as
\begin{equation}
\label{eq:n-eff}
\sqrt{\nEff}
	= \insum_{\lvl=1}^{\nLvls} \sqrt{\nClasses\atlvl{\lvl}}\bar{\range}\atlvl{\lvl}
\end{equation}
with
$\nClasses\atlvl{\lvl} = \abs{\classes\atlvl{\lvl}}$ denoting the number of classes of attribute $\classes\atlvl{\lvl}$,
and
\begin{equation}
\label{eq:range-mean}
\bar\range\atlvl{\lvl}
	= \sqrt{
		\frac{1}{\nClasses\atlvl{\lvl}}
		\insum_{\class\atlvl{\lvl} \in \classes\atlvl{\lvl}} \range_{\class\atlvl{\lvl}}^{2}
		}
\end{equation}
denoting the ``root-mean-square'' range of all classes in $\classes\atlvl{\lvl}$.
\end{restatable}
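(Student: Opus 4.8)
The plan is to first collapse the layered estimator \eqref{eq:NIWE} into a single-class importance weight, deduce the two per-class properties almost immediately, and then assemble the cost bound \eqref{eq:varbound-cost} through a two-step triangle inequality. The key preliminary observation is that, for a fixed class $\class$ at level $\lvl$ with lineage $\source \equiv \class\atlvl{0} \parentof \class\atlvl{1} \parentof \dotsm \parentof \class\atlvl{\lvl} \equiv \class$, the nested indicator in \eqref{eq:NIWE} degenerates: since the sampled chain $\est\class\atlvl{0} \parentof \dotsm \parentof \est\class\atlvl{\nLvls}$ is itself a directed path, the event $\{\est\class\atlvl{\lvl} = \class\atlvl{\lvl}\}$ already forces $\est\class\atlvl{\lvlalt} = \class\atlvl{\lvlalt}$ at every preceding level, so the indicator equals $\oneof{\est\class = \class}$. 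Simultaneously, by \eqref{eq:prob-cond} the denominator telescopes, $\strat_{\class\atlvl{\lvl}\vert\class\atlvl{\lvl-1}} \dotsm \strat_{\class\atlvl{1}} = \strat_{\class}$, so that $\incrmodel_{\class} = \oneof{\est\class = \class}\,\incr_{\class}/\strat_{\class}$, and the same telescoping of conditional probabilities along the lineage gives $\prob(\est\class = \class) = \strat_{\class}$.

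With this reduction in hand the per-class claims are routine. Unbiasedness \eqref{eq:unbiased} follows since $\ex[\oneof{\est\class = \class}] = \strat_{\class}$, and for the mean-square bound \eqref{eq:varbound-base} I would use $\oneof{\est\class = \class}^{2} = \oneof{\est\class = \class}$ to get $\ex[\strat_{\class}\incrmodel_{\class}^{2}] = \incr_{\class}^{2}$, which is at most $\range_{\class}^{2}$ by the range bound \eqref{eq:range}. The unbiasedness of the cost estimator \eqref{eq:NIWE-cost} is then immediate from linearity of expectation together with the decomposition $\cost_{\elem} = \insum_{\class \ni \elem}\incr_{\class}$ of \eqref{eq:cost-elem}.

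The substantive step is the variance bound \eqref{eq:varbound-cost}. Writing $\model_{\elem} = \insum_{\lvl=1}^{\nLvls}\incrmodel_{\class\atlvl{\lvl}}$, where $\class\atlvl{\lvl}$ is the level-$\lvl$ ancestor of $\elem$, I would read the target quantity as a squared weighted seminorm, $\insum_{\elem}\strat_{\elem}\model_{\elem}^{2} = \norm{\model}_{\strat}^{2}$ with $\norm{f}_{\strat}^{2} = \insum_{\elem}\strat_{\elem}f_{\elem}^{2}$ (here $\strat$ is the fixed strategy and the randomness sits in $\model$). The plan is to apply Minkowski twice — first in $\norm{\cdot}_{\strat}$ to split $\model$ across levels, then in $L^{2}(\prob)$ — to obtain
\[
\parens*{\ex[\norm{\model}_{\strat}^{2}]}^{1/2}
\leq \insum_{\lvl=1}^{\nLvls}\parens*{\ex[\norm{\incrmodel^{(\lvl)}}_{\strat}^{2}]}^{1/2},
\]
where $\incrmodel^{(\lvl)}$ is the vector with $\elem$-entry $\incrmodel_{\class\atlvl{\lvl}}$. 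Each per-level term is controlled by grouping atoms according to their level-$\lvl$ ancestor: using $\insum_{\elem\in\class}\strat_{\elem} = \strat_{\class}$ one has $\insum_{\elem}\strat_{\elem}\incrmodel_{\class\atlvl{\lvl}}^{2} = \insum_{\class\in\classes\atlvl{\lvl}}\strat_{\class}\incrmodel_{\class}^{2}$, whence \eqref{eq:varbound-base} and the definition \eqref{eq:range-mean} give $\ex[\norm{\incrmodel^{(\lvl)}}_{\strat}^{2}] \leq \insum_{\class\in\classes\atlvl{\lvl}}\range_{\class}^{2} = \nClasses\atlvl{\lvl}\bar\range\atlvl{\lvl}^{2}$. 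Summing the square roots reproduces $\insum_{\lvl}\sqrt{\nClasses\atlvl{\lvl}}\,\bar\range\atlvl{\lvl} = \sqrt{\nEff}$ exactly as in \eqref{eq:n-eff}, and squaring yields \eqref{eq:varbound-cost}.

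The \emph{main obstacle} is the cross-level correlation hidden in $\model_{\elem}^{2} = (\insum_{\lvl}\incrmodel_{\class\atlvl{\lvl}})^{2}$: the estimators at different levels of a shared lineage are strongly dependent — all of them are nonzero precisely when the sampled path follows that lineage — so a direct expansion of the square would force one to bound the off-diagonal products $\ex[\incrmodel_{\class\atlvl{\lvl}}\incrmodel_{\class\atlvl{\lvlalt}}]$ by hand. The two-step Minkowski argument sidesteps this entirely, since the triangle inequality needs no independence; the probabilistic content enters only through the clean per-class estimate \eqref{eq:varbound-base}, and everything else is bookkeeping — verifying that the regrouping over level-$\lvl$ classes is exact and that the resulting sum matches \eqref{eq:n-eff}.
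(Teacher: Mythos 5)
Your proposal is correct, and for the two per-class properties it coincides with the paper's own argument: the paper performs exactly the same collapse of \eqref{eq:NIWE} to $\incrmodel_{\class} = \oneof{\est\class = \class}\,\incr_{\class}/\strat_{\class}$ (the nested indicator forces the entire lineage, and the conditional probabilities telescope to $\strat_{\class}$), from which \eqref{eq:unbiased} and \eqref{eq:varbound-base} follow just as you say. For the aggregate bound \eqref{eq:varbound-cost}, however, you take a genuinely different route. The paper expands the square into a diagonal part $\sum_{\class}\strat_{\class}\incrmodel_{\class}^{2}$ plus ancestor--descendant cross terms $2\sum_{\alt\class}\sum_{\class\anc\alt\class}\strat_{\alt\class}\incrmodel_{\class}\incrmodel_{\alt\class}$, bounds each cross term by hand via the Peter--Paul inequality $2ab \leq \epsilon^{-1}a^{2} + \epsilon b^{2}$, and then optimizes the weights level-pair by level-pair, taking $\epsilon_{\lvl,\alt\lvl} = \sqrt{\nClasses\atlvl{\alt\lvl}/\nClasses\atlvl{\lvl}}\,\bar{\range}\atlvl{\alt\lvl}/\bar{\range}\atlvl{\lvl}$ to recover $\bigl(\sum_{\lvl}\sqrt{\nClasses\atlvl{\lvl}}\,\bar{\range}\atlvl{\lvl}\bigr)^{2} = \nEff$ \textendash\ so the off-diagonal products that you say a direct expansion ``would force one to bound by hand'' are precisely what the paper handles. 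Your argument reaches the identical constant without ever writing the cross terms: viewing $f \mapsto \bigl(\exof{\norm{f}_{\strat}^{2}}\bigr)^{1/2}$ as the $L^{2}$ norm on $\samples\times\elems$ under $\prob\otimes\strat$ and splitting $\model = \sum_{\lvl}\incrmodel^{(\lvl)}$, the triangle inequality together with the exact regrouping $\sum_{\elem}\strat_{\elem}\bigl(\incrmodel^{(\lvl)}_{\elem}\bigr)^{2} = \sum_{\class\in\classes\atlvl{\lvl}}\strat_{\class}\incrmodel_{\class}^{2}$ (valid because each $\classes\atlvl{\lvl}$ is a partition of $\elems$) does all the work. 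The two proofs are in fact term-for-term equivalent \textendash\ the tuned Peter--Paul step is the expanded, equality-tight form of the Cauchy--Schwarz inequality that Minkowski encodes \textendash\ but yours automates the $\epsilon$-optimization; what the paper's version buys is an explicit display of where the cross-lineage correlations enter, while yours buys brevity and immunity to bookkeeping slips in the double sum over level pairs. Two minor remarks: your ``Minkowski twice'' phrasing does go through (pointwise triangle inequality for $\norm{\cdot}_{\strat}$, then monotonicity and Minkowski in $L^{2}(\prob)$), though a single application in the product space is cleaner; and your reading of $\range_{\class}$ as a bound on $\incr_{\class}$ itself follows the same indexing convention as the paper's own proof of \eqref{eq:varbound-base} (strictly, \eqref{eq:range} indexes ranges by the parent class), so there is no discrepancy on that point.
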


Of course, \cref{prop:NIWE} yields the standard properties of \eqref{eq:IWE} as a special case when $\nLvls = 1$ (in which case there are no similarities to exploit between alternatives).
To streamline our presentation, we prove this result in \cref{app:aux}.

\subsection{The \acl{NEW} algorithm}
\label{sec:algo}

We are finally in a position to present the \acdef{NEW} algorithm in detail.
Building on the original \acl{EW} blueprint \cite{LW94,ACBF02,Vov90}, the main steps of the \ac{NEW} algorithm can be summed up as follows:
\begin{enumerate}

\item
For each stage $\run=\running$, the learner maintains and updates a propensity score profile $\curr[\dstate] \in \R^{\elems}$.

\item
The learner selects an action $\curr[\elem] \in \elems$ based on the \acl{NLC} rule $\curr[\elem] \sim \choice(\curr[\learn]\curr[\dstate])$ where $\curr[\learn] \geq 0$ is the method's \emph{learning rate} and $\choice$ is given by \eqref{eq:NLC}.

\item
The learner incurs $\incr_{\class,\run}$ for each class $\class\ni\curr[\elem]$ and constructs a model
$\curr[\model]$ of the cost vector $\curr[\cost]$ of stage $\run$ via \eqref{eq:NIWE}.

\item
The learner updates their propensity score profile based on $\curr[\model]$ and the process repeats.
\end{enumerate}
For a presentation of the algorithm in pseudocode form, see \cref{alg:NEW};
the tuning of the method's uncertainty parameters $\temp\atlvl{1} \geq \dotsc \geq \temp\atlvl{L} > 0$ and the learning rate $\curr[\learn]$ is discussed in the next section, where we undertake the analysis of the \ac{NEW} algorithm.


\begin{algorithm}[tbp]
\small
\caption{\Acf{NEW}}

\begin{algorithmic}[1]
\setlength{\abovedisplayskip}{\smallskipamount}
\setlength{\belowdisplayskip}{\smallskipamount}
\addtolength{\baselineskip}{1pt}

\Require
	set of alternatives $\elems$;
	attribute structure $\struct = \coprod_{\lvl=1}^{\nLvls} \classes\atlvl{\lvl}$

\Statex
\hspace{-\parindent}
\textbf{Params:}
	uncertainty levels $\temp\atlvl{1},\dotsc,\temp\atlvl{\nLvls} > 0$;
	learning rate $\curr[\learn]\geq0$

\Statex
\hspace{-\parindent}
\textbf{Input:}
	sequence of class costs $\curr[\incr] \in [0,1]^{\struct}$, $\run=\running$
    
\smallskip
\hrule
\smallskip

\State
\textbf{initialize}
	$\dstate \gets 0 \in \R^{\elems}$,
	$\class\atlvl{0} = \elems$
	\Comment{initialization}

\For{$\run=\running$}
	\Comment{scoring phase}

	\For{$\lvl=\nLvls-1,\dotsc,0$ and \textbf{for all} $\parent \in \classes\atlvl{\lvl}$}
	
		\State
		$\dstate_{\parent}	\gets \temp\atlvl{\lvl+1} \log\sum_{\child \childof \parent} \exp(\dstate_{\child} / \temp\atlvl{\lvl+1})$
		\Comment{as per \eqref{eq:score}}

	\State
	\textbf{set}
		$\incrmodel_{\class} \gets 0$
		\Comment{baseline guess}
	
	\EndFor

	\For{$\lvl=1,\dotsc,\nLvls$}
		\Comment{selection phase}
	
		\State
		\textbf{select}
			class $\class\atlvl{\lvl} \childof \class\atlvl{\lvl-1}$
			\Comment{class choice}
		\begin{equation*}
		\explain{\upshape(\ref{eq:NLC})}
		\class\atlvl{\lvl}
			\sim \state_{\class\atlvl{\lvl} \vert \class\atlvl{\lvl-1}}
			= \frac
				{\exp(\curr[\learn] \dstate_{\class\atlvl{\lvl}} / \temp\atlvl{\lvl})}
				{\exp(\curr[\learn] \dstate_{\class\atlvl{\lvl-1}} / \temp\atlvl{\lvl})}
			\hspace{-2em}
		\end{equation*}

		\State
		\textbf{get}
			$\incr_{\class\atlvl{\lvl},\run}$
			\Comment{intra-class cost}
		
		\State
		\textbf{set}
			$\dis\incrmodel_{\class\atlvl{\lvl}} \gets \incrmodel_{\class\atlvl{\lvl}} + \frac{\incr_{\class\atlvl{\lvl},\run}}{\state_{\class\atlvl{\lvl} \vert \class\atlvl{\lvl-1}} \!\!\dotsm \state_{\class\atlvl{1} \vert \class\atlvl{0}}}$
			\Comment{\eqref{eq:NIWE}}
	
	\EndFor
	
	\State
	\textbf{set}
		$\model_{\elem} \gets \sum_{\class\ni\elem} \incrmodel_{\class}$ for all $\elem\in\elems$
			\Comment{loss model}
	\State
	\textbf{set}
		$\dstate \gets \dstate - \model$
		\Comment{update propensities}

\EndFor

\end{algorithmic}
\label{alg:NEW}
\end{algorithm}


\section{Analysis and results}
\label{sec:results}

We are now in a position to state and discuss our main regret guarantees for the \ac{NEW} algorithm.
These are as follows:

\begin{restatable}{theorem}{NEW}
\label{thm:NEW}
Suppose that \cref{alg:NEW} is run with a non-increasing learning rate $\curr[\learn] > 0$ and uncertainty parameters $\temp\atlvl{1} \geq \dotsm \geq \temp\atlvl{\nLvls} > 0$ against a sequence of cost vectors $\curr[\cost] \in [0,1]^{\elems}$, $\run=\running$, as per \eqref{eq:cost-elem}.
Then, for all $\test\in\simplex(\elems)$, the learner enjoys the regret bound
\begin{align}
\label{eq:reg-NEW}
\exof{\reg_{\test}(\nRuns)}
	&\leq \frac{\hrange}{\afterlast[\learn]}
		+ \frac{\nEff}{2\temp\atlvl{\nLvls}} \sum_{\run=\start}^{\nRuns} \curr[\learn]
\end{align}
with
$\nEff$ given by \eqref{eq:n-eff}
and
$\hrange \equiv \hrange(\temp\atlvl{1},\dotsc,\temp\atlvl{\nLvls})$ defined by setting $\score = 0$ in \eqref{eq:score} and taking $\hrange = \score_{\source}$, \ie
\begin{equation}
\label{eq:hrange}
\hrange
	= \log\bracks*{
		\sum_{\class\atlvl{1} \childof \class\atlvl{0}}
		\bracks*{
			\sum_{\class\atlvl{2} \childof \class\atlvl{1}}
				\!\dotsi
				\bracks*{
					\sum_{\class\atlvl{\nLvls}\childof\class\atlvl{\nLvls-1}}
						\!\!\!\!1
				}^{\frac{\temp\atlvl{\nLvls}}{\temp\atlvl{\nLvls-1}}}
			\!\!\!\!\!\dotsi\,
		}^{\frac{\temp\atlvl{2}}{\temp\atlvl{1}}}
	}^{\temp\atlvl{1}}
\end{equation}
In particular, if \cref{alg:NEW} is run with $\temp\atlvl{1} = \dotsm = \temp\atlvl{\nLvls} = \sqrt{\nEff/2}$ and $\curr[\learn] = \sqrt{\log\nElems/(2\run)}$, we have
\begin{equation}
\label{eq:reg-NEW-tuned}
\exof{\reg_{\test}(\nRuns)}
	\leq 2 \sqrt{\nEff \log\nElems \cdot \nRuns}.
\end{equation}
\end{restatable}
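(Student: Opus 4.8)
The plan is to recognize \cref{alg:NEW} as an instance of dual averaging (``follow the regularized leader'') driven by the \emph{nested softmax potential}, and then to run the standard energy-telescoping argument, invoking the two estimates of \cref{prop:NIWE} only at the very end. Concretely, define $\Phi(\dstate) \defeq \dstate_{\source}$ to be the root propensity score generated by the recursion \eqref{eq:score} as a function of the leaf profile $\dstate \in \R^{\elems}$. Differentiating \eqref{eq:score} shows that $\Phi$ is convex and that its gradient is exactly the \eqref{eq:NLC} map, $[\nabla\Phi(\dstate)]_{\elem} = \choice[\{\elem\}](\dstate)$; hence the propensity update $\dstate \gets \dstate - \model$ together with the selection $\curr[\elem] \sim \choice(\curr[\learn]\curr[\dstate])$ is precisely dual averaging with potential $\Phi$, aggregate dual variable $\curr[\dstate] = -\sum_{\runalt<\run}\iter[\model]$, and step-size $\curr[\learn]$, so that the sampling distribution at stage $\run$ is $\curr = \nabla\Phi(\curr[\learn]\curr[\dstate])$. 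Moreover, setting $\dstate = 0$ in \eqref{eq:score} and propagating upward reproduces verbatim the nested log-sum-exp of \eqref{eq:hrange}, giving $\Phi(0) = \hrange$; since $\Phi$ is the convex conjugate of a ``nested entropy'' regularizer $\hreg$ on $\simplex(\elems)$, this identifies $\hrange$ with the range $\max\hreg - \min\hreg$ of that regularizer.

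The technical heart of the proof is a second-order bound for $\Phi$. Differentiating \eqref{eq:score} twice and using the ordering $\temp\atlvl{1} \geq \dotsm \geq \temp\atlvl{\nLvls}$, I would show that for every $\dstate$ and every $\delta \in \R^{\elems}$,
\[
\inner{\delta}{\nabla^{2}\Phi(\dstate)\,\delta}
	\leq \frac{1}{\temp\atlvl{\nLvls}} \insum_{\elem\in\elems} [\nabla\Phi(\dstate)]_{\elem}\,\delta_{\elem}^{2},
\]
i.e.\ $\Phi$ is $1/\temp\atlvl{\nLvls}$-smooth relative to the local weighted norm induced by the current choice distribution. The intuition is that the finest layer carries the smallest uncertainty parameter and thus the sharpest (most curved) softmax, so the global curvature is controlled by $\temp\atlvl{\nLvls}$; establishing this \emph{cleanly through the nested layers} \textemdash{} rather than layer-by-layer, which would produce an unwanted product of constants \textemdash{} is where I expect the main difficulty, and it is precisely where the monotonicity of the $\temp\atlvl{\lvl}$ is used in an essential way. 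Because the step-size enters \emph{inside} the potential, it is convenient to work with the rescaled potential $\dstate \mapsto \Phi(\curr[\learn]\dstate)/\curr[\learn]$, whose Hessian carries an extra factor $\curr[\learn]$; this is what turns the second-order remainder into a term \emph{linear} in $\curr[\learn]$, matching the $\sum_{\run} \curr[\learn]$ appearing in \eqref{eq:reg-NEW}.

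With these two facts in hand the argument is routine. A Taylor expansion of the rescaled potential along the trajectory, summed over $\run = \start,\dotsc,\nRuns$ and combined with the standard telescoping for a non-increasing step-size (which converts the regularizer range into the leading $\hrange/\afterlast[\learn]$ term), yields the deterministic estimate
\[
\sum_{\run=\start}^{\nRuns} \inner{\curr[\model]}{\curr - \test}
	\leq \frac{\hrange}{\afterlast[\learn]}
		+ \frac{1}{2\temp\atlvl{\nLvls}} \sum_{\run=\start}^{\nRuns} \curr[\learn] \insum_{\elem\in\elems} X_{\elem,\run}\, \model_{\elem,\run}^{2}.
\]
Taking expectations then closes the loop: since $\curr$ is $\filter_{\run-1}$-measurable, the unbiasedness part of \cref{prop:NIWE} gives $\exof{\inner{\curr[\model]}{\curr - \test} \given \filter_{\run-1}} = \inner{\curr[\cost]}{\curr - \test}$, so the left-hand side becomes the true regret $\reg_{\test}(\nRuns)$ of \eqref{eq:regret-test}; and the mean-square bound \eqref{eq:varbound-cost}, applied conditionally with $\strat = \curr$, gives $\exof{\insum_{\elem} X_{\elem,\run}\, \model_{\elem,\run}^{2} \given \filter_{\run-1}} \leq \nEff$. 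Substituting both and using the tower rule delivers \eqref{eq:reg-NEW}.

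Finally, for \eqref{eq:reg-NEW-tuned} I would simply plug in the prescribed values. With $\temp\atlvl{1} = \dotsm = \temp\atlvl{\nLvls} = \sqrt{\nEff/2}$ all the exponents in \eqref{eq:hrange} equal $1$, so the nested expression collapses to $\hrange = \sqrt{\nEff/2}\,\log\nElems$; likewise $\nEff/(2\temp\atlvl{\nLvls}) = \sqrt{\nEff/2}$. Inserting $\curr[\learn] = \sqrt{\log\nElems/(2\run)}$ into \eqref{eq:reg-NEW} and bounding $\sum_{\run\leq\nRuns} \run^{-1/2} \leq 2\sqrt{\nRuns}$ then balances the two terms and collapses them to the announced $2\sqrt{\nEff\log\nElems \cdot \nRuns}$.
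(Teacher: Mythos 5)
Your high-level architecture coincides with the paper's: \ac{NEW} is recast as \ac{FTRL}/dual averaging for the potential $\Phi(\score) = \score_{\source} = \hconj(\score)$, whose gradient is the \eqref{eq:NLC} map (\cref{prop:mirror}); a rate-deflated Fenchel-coupling energy is telescoped to produce the $\hrange/\afterlast[\learn]$ term plus per-stage second-order terms (\cref{prop:template}); \cref{prop:NIWE} is invoked only at the end; and your identification $\hrange = \Phi(0) = -\min\hreg$ and the final tuning arithmetic (including \cref{lem:hmin} for equal uncertainty parameters) are all correct and match the paper. The genuine gap sits exactly at the step you call the technical heart. Even granting your pointwise Hessian estimate $\inner{\delta}{\nabla^{2}\hconj(\score)\,\delta} \leq \temp\atlvl{\nLvls}^{-1} \sum_{\elem} \choice[\elem](\score)\,\delta_{\elem}^{2}$, a Taylor/mean-value expansion produces a remainder whose weights are the choice probabilities $\choice[\elem](\xi)$ at an \emph{intermediate} point $\xi$ of the segment from $\curr[\learn]\curr[\dstate]$ to $\curr[\learn]\next[\dstate]$ \textendash\ not the weights $X_{\elem,\run}$ at the current iterate that your deterministic estimate asserts. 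In the non-nested \ac{EXP3} case this can be repaired because only the sampled arm carries a nonzero increment and its probability is monotone decreasing along the ray; in the nested case \emph{every} element of each chosen class carries a nonzero increment, these increments have wildly different magnitudes (of order $1/\strat_{\est\class\atlvl{\lvl}}$ at depth $\lvl$), and the probability of, say, a sibling of the chosen subclass inside the chosen class can \emph{increase} along the segment by an uncontrolled factor (its conditional probability within the class tends to $1$ while it still carries the full increment $\incrmodel_{\est\class\atlvl{1}}$). Since the increments are unbounded, no uniform domination $\choice[\elem](\xi) \lesssim \choice[\elem](\curr[\learn]\curr[\dstate])$ is available, and the mean-square bound \eqref{eq:varbound-cost} only applies when the weights coincide with the sampling distribution itself \textendash\ so the expectation step would not close.

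The paper flags precisely this obstruction (``the increments of \eqref{eq:NEW} are unbounded in norm, so the global bound is far too lax'') and replaces Taylor expansion by the one-sided, current-point inequality of \cref{prop:hconj-increment-bound},
\begin{equation*}
\hconj(\score - \cost) - \hconj(\score)
	\leq -\braket{\choice(\score)}{\cost}
	+ \frac{1}{2\temp\atlvl{\nLvls}} \sum_{\elem\in\elems} \choice[\elem](\score)\,\cost_{\elem}^{2}
	\qquad
	\text{for } \cost \geq 0,
\end{equation*}
proved by a descending induction over the tree through a ``nested power sum'' decomposition (\cref{lem:hconj-and-nps,lem:upperbound-nps}): the inequality $e^{-x} \leq 1 - x + x^{2}/2$ for $x \geq 0$ is applied at the leaf layer \textendash\ so the non-negativity of \eqref{eq:NIWE}, which your proposal never invokes, is essential \textendash\ and $(1+x)^{\beta} \leq 1 + \beta x$ with $\beta = \temp\atlvl{\lvl+1}/\temp\atlvl{\lvl} \leq 1$ is used when ascending, which is where the monotonicity $\temp\atlvl{1} \geq \dotsm \geq \temp\atlvl{\nLvls}$ enters. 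You diagnosed the right difficulty and the right role for the ordering of the $\temp\atlvl{\lvl}$, but the instrument you propose (relative smoothness plus Taylor) cannot deliver the stated inequality with current-point weights; to repair the proof you would need to replace that step with an argument of the paper's induction type, keeping all weights anchored at $\score$.
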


\cref{thm:NEW} is our main regret guarantee for \ac{NEW} so, before discussing its proof (which we carry out in detail in \cref{app:entropy,app:aux,app:regret}), some remarks are in order.

The first thing of note is  the comparison to the corresponding bound for \ac{EXP3}, namely
\begin{equation}
\label{eq:reg-EXP3}
\exof{\reg_{\test}(\nRuns)}
	\leq 2\sqrt{\nElems \log\nElems \cdot \nRuns}.
\end{equation}
This shows that the guarantees of \ac{NEW} and \ac{EXP3} differ by a factor of%
\footnote{Depending on the source, the bound \eqref{eq:reg-EXP3} may differ up to a factor of $\sqrt{2}$, compare for example \citep[Corollary 4.2]{SS11} and \citep[Theorem 11.2]{LS20}.
This factor is due to the fact that \eqref{eq:reg-EXP3} is usually stated for a known horizon $\nRuns$ (which saves a factor of $\sqrt{2}$ relative to anytime algorithms).
Ceteris paribus, the bound \eqref{eq:reg-NEW-tuned} can be sharpened by the same factor, but we omit the details.}
\begin{equation}
\label{eq:eff}
\eff
	= \sqrt{\nElems/\nEff},
\end{equation}
which, for reasons that become clear below,
we call the \acdef{PoAf}.

Since the variabilities of the idiosyncratic losses within each attribute have been normalized to $1$ (recall the relevant discussion in \cref{sec:sequence}), Hölder's inequality trivially gives $\nEff \leq \nElems$, no matter the underlying similarity structure.
Of course, if there are no similarities to exploit ($\nLvls = 1$), we get $\nEff = \nElems$, in which case the two bounds coincide ($\eff=1$).

At the other extreme, suppose we have a red bus\,/\,blue bus type of problem with, say, $\nClasses\atlvl{1} = 2$ similarity classes, $\nClasses\atlvl{2} = 100$ alternatives per class, and a negligible intra-class loss differential ($\range\atlvl{2} \approx 0$).
In this case, \ac{EXP3} would have to wrestle with $\nElems = \nClasses\atlvl{1} \nClasses\atlvl{2} = 200$ alternatives, while \ac{NEW} would only need to discriminate between $\nEff \approx \nClasses\atlvl{1} = 2$ alternatives, leading to an improvement by a factor of $\eff \approx 10$ in terms of regret guarantee.
Thus, even though the red bus\,/\,blue bus paradox could entangle \ac{EXP3} and cause the algorithm to accrue significant regret over time, this is no longer the case under the \ac{NEW} method;
we also explore this issue numerically in \cref{sec:numerics}.

As another example, suppose that each non-terminal class in $\struct$ has $\nChildren$ children and the variability of the idiosyncratic losses likewise scales down by a factor of $\nChildren$ per attribute.
In this case, a straightforward calculation shows that $\nEff$ scales as $\Theta(\nChildren)$, so the gain in efficiency would be of the order of $\eff = \sqrt{\nElems / \nEff} = \Theta(\nChildren^{(\nLvls-1)/2})$,
\ie polynomial in $\nChildren$ and exponential in $\nLvls$.
This gain in performance can become especially pronounced
when there is a very large number of atlernatives organized in categories and subcategories of geometrically decreasing impact on the end cost of each alternative.
We explore this issue in practical scenarios in \cref{sec:numerics,app:numerics}.

Finally, we should also note that the parameters of \ac{NEW} have been tuned so as to facilitate the comparison with \ac{EXP3}.
This tuning is calibrated for the case where $\struct$ is fully symmetric, \ie all subcategories of a given attribute have the same number of children.
Otherwise, in full generality, the tuning of the algorithm's uncertainty levels would boil down to a transcedental equation involving the nested term $\hrange(\temp\atlvl{1},\dotsc,\temp\atlvl{\nLvls})$ of \eqref{eq:reg-NEW}.
This can be done efficiently offline via a line search, but since the result would be structure-dependent, we do not undertake this analysis here.

\para{Proof outline of \cref{thm:NEW}}

The detailed proof of \cref{thm:NEW} is quite lengthy, so we defer it to \cref{app:entropy,app:aux,app:regret} and only sketch here the main ideas.

The first basic step is to derive a suitable ``potential function'' that can be used to track the evolution of the \ac{NEW} policy relative to the benchmark $\test \in \simplex(\elems)$.
The main ingredient of this potential is the ``nested'' entropy function
\begin{equation}
\label{eq:hreg}
\hreg(\strat)
	= \insum_{\lvlalt=0}^{\nLvls}
		\diff\atlvl{\lvlalt} \insum_{\class\atlvl{\lvlalt} \in \classes\atlvl{\lvlalt}}
		\strat_{\class\atlvl{\lvlalt}} \log\strat_{\class\atlvl{\lvlalt}},
\end{equation}
where $\diff\atlvl{\lvlalt} = \temp\atlvl{\lvlalt} - \temp\atlvl{\lvlalt+1}$ for all $\lvlalt = 1,\dotsc,\nLvls$ (with $\temp\atlvl{\nLvls+1} = 0$ by convention).%
\footnote{In the non-nested case, \eqref{eq:hreg} boils down to the standard (negative) entropy $\hreg(\strat) = \sum_{\elem} \strat_{\elem} \log\strat_{\elem}$.
However, the inverse problem of deriving the ``correct'' form of $\hreg$ in a nested environment involves a technical leap of faith and a fair degree of trial-and-error.}
As we show in \cref{prop:nest2cond} in \cref{app:entropy}, the ``tiers'' of $\hreg$ can be unrolled to give the ``non-tiered'' recursive representation
\begin{equation}
\hreg(\strat)
	= \insum_{\class\in\struct} \hreg(\strat \vert \class)
\end{equation}
where $\hreg(\strat \vert \class) = \temp\atlvl{\lvl+1} \sum_{\classalt \childof \class} \strat_{\classalt} \log (\strat_{\classalt} / \strat_{\class})$ denotes the ``conditional'' entropy of $\strat$ relative to class $\class \in \classes\atlvl{\lvl}$.
Then, by means of this decomposition and a delicate backwards induction argument, we show in \cref{prop:mirror} that
\begin{enumerate*}
[\itshape a\upshape)]
\item
the recursively defined propensity score $\score_{\source}$ of $\source$ can be expressed \emph{non-recursively} as $\score_{\source} = \argmax_{\strat\in\simplex(\elems)} \{ \braket{\score}{\strat} - \hreg(\strat) \}$;
and
\item
that the choice rule \eqref{eq:NLC} can be expressed itself as
\end{enumerate*}
\begin{equation}
\label{eq:potential}
\choice[\elem](\score)
	= \frac{\pd\score_{\elems}}{\pd\score_{\elem}}
	\quad
	\text{for all $\score\in\R^{\elems}$, $\elem\in\elems$}.
\end{equation}

This representation of \eqref{eq:NLC} provides the first building block of our proof because, by Danskin's theorem \cite{Ber97}, it allows us to rewrite \cref{alg:NEW} in more concise form as
\begin{equation}
\label{eq:NEW}
\tag{\acs{NEW}}
\begin{gathered}
\next[\dstate]
	= \curr[\dstate]
		- \curr[\model]
	\\
\next
	= \argmax_{\strat\in\simplex(\elems)}
		\{ \braket{\next[\learn] \next[\dstate]}{\strat} - \hreg(\strat) \}
\end{gathered}
\end{equation}
with $\curr[\model]$ given by \eqref{eq:NIWE-cost} appplied to $\strat \gets \curr$.
Importantly, this shows that the \ac{NEW} algorithm is an instance of the well-known \acdef{FTRL} algorithmic framework \cite{SSS06,SS11}.
Albeit interesting, this observation is not particularly helpful in itself because
there is no universal, ``regularizer-agnostic'' analysis giving optimal (or near-optimal) regret rates for \ac{FTRL} with bandit/partial information.%
\footnote{For the analysis of specific versions of \ac{FTRL} with non-entropic regularizers, \cf \cite{ABL11,ZS19} and references therein.}
Nonetheless, by adapting a series of techniques that are used in the analysis of \ac{FTRL} algorithms, we show in \cref{app:regret} that the iterates of \eqref{eq:NEW} satisfy the ``energy inequality''
\begin{align}
\braket{\model_{\run}}{\curr - \test}
	&\leq \curr[\energy] - \energy_{\run+1}
		+ \frac{1}{\curr[\learn]} \fench(\curr,\curr[\learn]\next[\dstate])
	\notag\\
	&+ \parens{\next[\learn]^{-1} - \curr[\learn]^{-1}} \bracks{\hreg(\test) - \min\hreg}
\end{align}
where
$\curr[\model]$ is the \acl{NIWE} \eqref{eq:NIWE-cost} for the cost vector encountered $\curr[\cost]$, and we have set
\begin{equation}
\fench(\strat,\score)
	= \hreg(\strat) + \score_{\source} - \braket{\score}{\strat}
\end{equation}
and
$\curr[\energy] = \curr[\learn]^{-1} \fench(\test,\curr[\learn]\curr[\dstate])$.

Then, by \cref{prop:NIWE}, we obtain:

\begin{restatable}{proposition}{template}
\label{prop:template}
The \ac{NEW} algorithm enjoys the bound
\begin{equation}
\label{eq:template}
\exof{\reg_{\test}(\nRuns)}
	\leq \frac{\hrange}{\afterlast[\learn]}
		+ \sum_{\run=\start}^{\nRuns} \frac{\exof{\fench(\curr,\curr[\learn]\next[\dstate])}}{\curr[\learn]}.
\end{equation}
\end{restatable}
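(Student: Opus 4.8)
The plan is to take the energy inequality displayed just above as the starting point, turn its left-hand side into expected regret, and collapse its right-hand side by telescoping. First I would record that the estimator driving the updates is unbiased with respect to the correct conditioning: the strategy $\curr$ played at stage $\run$ is determined \emph{before} the stage-$\run$ draw, being a measurable function of the propensities $\curr[\dstate]$ and hence of the earlier estimates $\prev[\model],\dotsc,\init[\model]$ only. Thus, writing $\curr[\filter]$ for the history up to the stage-$\run$ selection, \cref{prop:NIWE} applied with $\strat\gets\curr$ gives $\exof{\curr[\model]\given\curr[\filter]} = \curr[\cost]$, so that $\exof{\braket{\curr[\model]}{\curr-\test}} = \exof{\braket{\curr[\cost]}{\curr-\test}}$ for the fixed benchmark $\test$. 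Summing over $\run=\start,\dotsc,\nRuns$ yields $\exof{\reg_{\test}(\nRuns)} = \sum_{\run=\start}^{\nRuns}\exof{\braket{\curr[\model]}{\curr-\test}}$, and it remains only to bound the summed right-hand side of the energy inequality in expectation.

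Next I would sum the energy inequality over $\run=\start,\dotsc,\nRuns$. The leading differences telescope to $\init[\energy]-\energy_{\nRuns+1}$, while the factors $\next[\learn]^{-1}-\curr[\learn]^{-1}$ multiply the constant $\hreg(\test)-\min\hreg$, so that contribution telescopes to $(\afterlast[\learn]^{-1}-\init[\learn]^{-1})[\hreg(\test)-\min\hreg]$. For the boundary terms I would invoke the initialization $\init[\dstate]=0$ from \cref{alg:NEW}: setting $\score=0$ in $\fench$ gives $\fench(\test,0) = \hreg(\test)+\hrange$, since $\score_{\source}=\hrange$ at $\score=0$ by the definition of $\hrange$ in \eqref{eq:hrange}; hence $\init[\energy]=\init[\learn]^{-1}(\hreg(\test)+\hrange)$. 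I would then drop $-\energy_{\nRuns+1}$ using $\energy_{\nRuns+1}\geq0$, which is a restatement of Fenchel–Young: because $\score_{\source}=\max_{\strat}\{\braket{\score}{\strat}-\hreg(\strat)\}$ we have $\hreg(\strat)+\score_{\source}-\braket{\score}{\strat}\geq0$, i.e.\ $\fench\geq0$.

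The key algebraic simplification is to recognize that $\hrange=-\min\hreg$: evaluating the conjugate $\score_{\source}=\max_{\strat}\{\braket{\score}{\strat}-\hreg(\strat)\}$ at $\score=0$ gives $\hrange=-\min_{\strat}\hreg(\strat)$. Substituting $-\min\hreg=\hrange$ into the telescoped third term and combining it with $\init[\energy]$ collapses the two learning-rate boundary contributions to exactly $(\hreg(\test)+\hrange)/\afterlast[\learn]$. Finally, since each tier coefficient $\diff\atlvl{\lvlalt}=\temp\atlvl{\lvlalt}-\temp\atlvl{\lvlalt+1}$ is nonnegative (the $\temp\atlvl{\lvlalt}$ are nonincreasing) and each $\strat_{\class}\log\strat_{\class}\leq0$, the nested entropy satisfies $\hreg(\test)\leq0$, whence $(\hreg(\test)+\hrange)/\afterlast[\learn]\leq\hrange/\afterlast[\learn]$. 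Together with the surviving middle sum $\sum_{\run}\curr[\learn]^{-1}\fench(\curr,\curr[\learn]\next[\dstate])$, taking expectations produces the claimed bound \eqref{eq:template}.

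I expect the telescoping itself to be routine; the delicate point is the bookkeeping of the learning-rate-weighted terms so that the two boundary contributions cancel cleanly into a single $\hrange/\afterlast[\learn]$ with no residual $\hreg(\test)$ left over. This hinges on two structural facts specific to the tiered regularizer \eqref{eq:hreg} — namely $\hrange=-\min\hreg$ and $\hreg\leq0$ — both of which must be read off from the definition of $\hreg$ and the ordering $\temp\atlvl{1}\geq\dotsm\geq\temp\atlvl{\nLvls}>0$; absent either one, the boundary terms do not reduce to the stated form.
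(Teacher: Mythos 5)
Your proof is correct and follows essentially the same route as the paper's: you telescope the energy inequality, identify the initial energy as $\init[\learn]^{-1}\fench(\test,0)$ with $\hconj(0)=\hrange$, drop the terminal energy by positive-definiteness of the Fenchel coupling, and eliminate the estimation error in expectation via the unbiasedness of \eqref{eq:NIWE} together with the $\curr[\filter]$-measurability of $\curr$, exactly as in \cref{app:regret}. The only difference is cosmetic \textendash\ and in fact a small improvement in rigor: you spell out the final reduction $\hreg(\test)-\min\hreg\leq\hrange$ through the two identities $\hrange=\hconj(0)=-\min\hreg$ and $\hreg(\test)\leq0$ (the latter from $\diff\atlvl{\lvlalt}\geq0$ and $\strat_{\class}\log\strat_{\class}\leq0$), a step the paper's proof leaves implicit when passing from its telescoped bound to the stated form of \eqref{eq:template}.
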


\cref{prop:template} provides the first half of the bound \eqref{eq:reg-NEW}, with the precise form of $\hrange$ derived in \cref{lem:hmin}.
The second half of \eqref{eq:reg-NEW} revolves around the term $\exof{\fench(\curr,\curr[\learn]\next[\dstate])}$ and boils down to estimating how propensity scores are back-propagated along $\struct$.
In particular, the main difficulty is to bound the difference $\new\score_{\source} - \score_{\source}$ in the propensity score of the root node $\source$ of $\struct$ when the underlying score profile $\score\in\R^{\elems}$ is incremented to $\new\score = \score + \dvec$ for some $\dvec\in\R^{\elems}$.

A first bound that can be obtained by convex analysis arguments is $\abs{\new\score_{\source} - \score_{\source}} \leq \braket{\score}{\choice(\score)} + \supnorm{\dvec}^{2}$;
however, because the increments of \eqref{eq:NEW} are unbounded in norm, this global bound is far too lax for our puposes.
A similar issue arises in the analysis of \ac{EXP3}, and is circumvented by deriving a bound for the log-sum-exp function using the identity $\exp(x) \leq 1 + x + x^{2}/2$ for $x\leq0$ and the fact that the estimator \eqref{eq:IWE} is non-negative \citep{LS20,SS11,CBL06}.
Extending this idea to nested environments is a very delicate affair, because each tier in $\struct$ introduces an additional layer of error propagation in the increments $\next[\dstate] - \curr[\dstate]$.
However, by a series of inductive arguments that traverse $\struct$ both forward and backward, we are able to show the bound
\begin{equation}
\label{eq:hconj-diff}
\new\score_{\source} - \score_{\source}
	\leq \braket{\score}{\choice(\score)}
		+ \frac{1}{2\temp\atlvl{\nLvls}}
			\sum_{\lvl=1}^{\nLvls}
			\sum_{\class\atlvl{\lvl}\in\classes\atlvl{\lvl}}
			\choice[\class\atlvl{\lvl}](\score) \incr_{\class\atlvl{\lvl}}^{2}
\end{equation}
which, after taking expecations and using the bounds of \cref{prop:NIWE}, finally yields the pseudo-regret bound \eqref{eq:reg-NEW}.


\section{Numerical experiments}
\label{sec:numerics}

\begin{figure}[t]
    \centering
    \includegraphics[height=4cm]{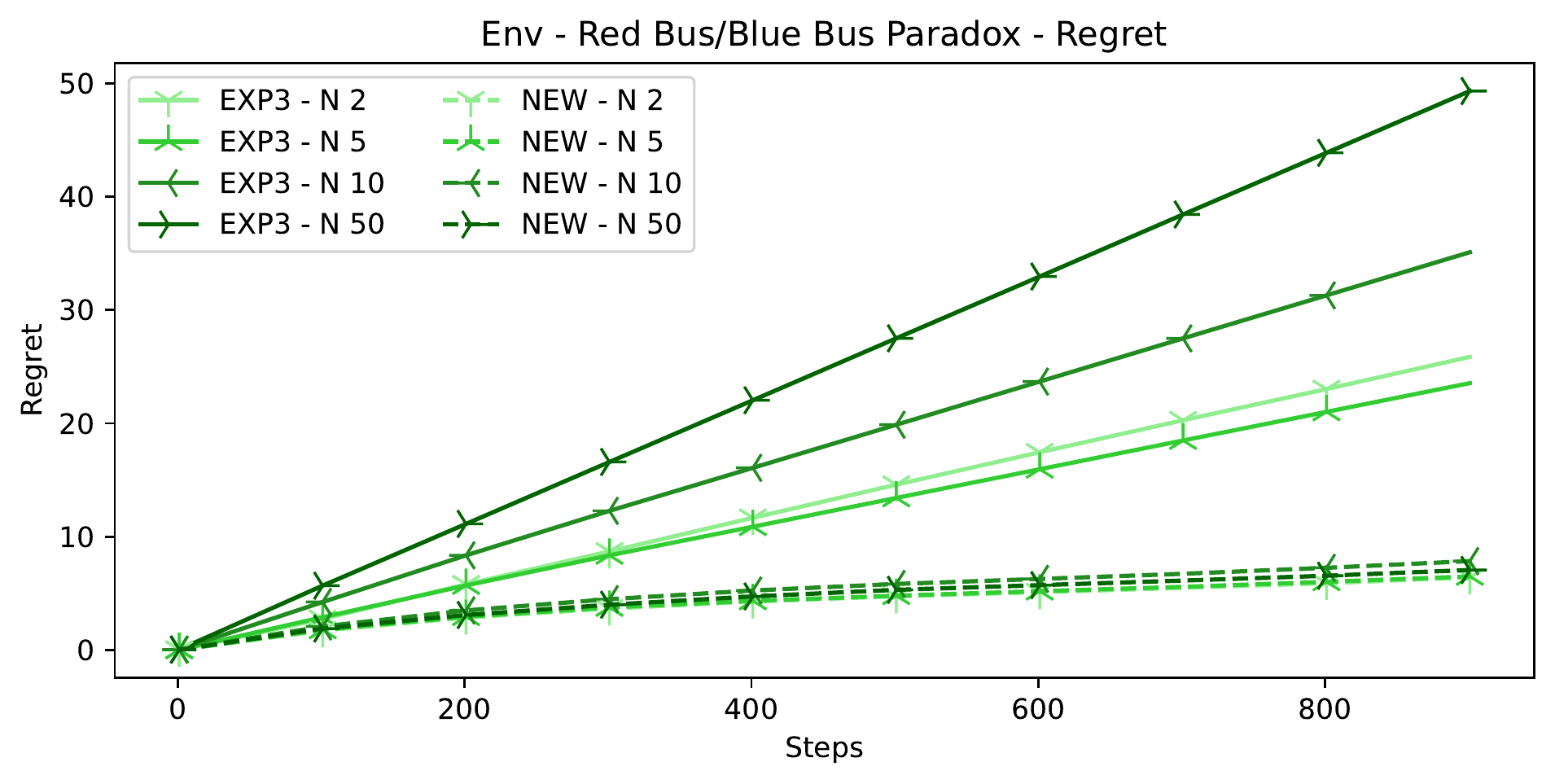}
    \caption{Regret of \ac{EXP3} and \ac{NEW} in the red bus\,/\,blue bus problem with different numbers of buses.}
    \label{fig:bbrb_regret}
    \vspace{-1ex}
\end{figure}

\begin{figure}[t]
    \centering
    \includegraphics[height=4cm]{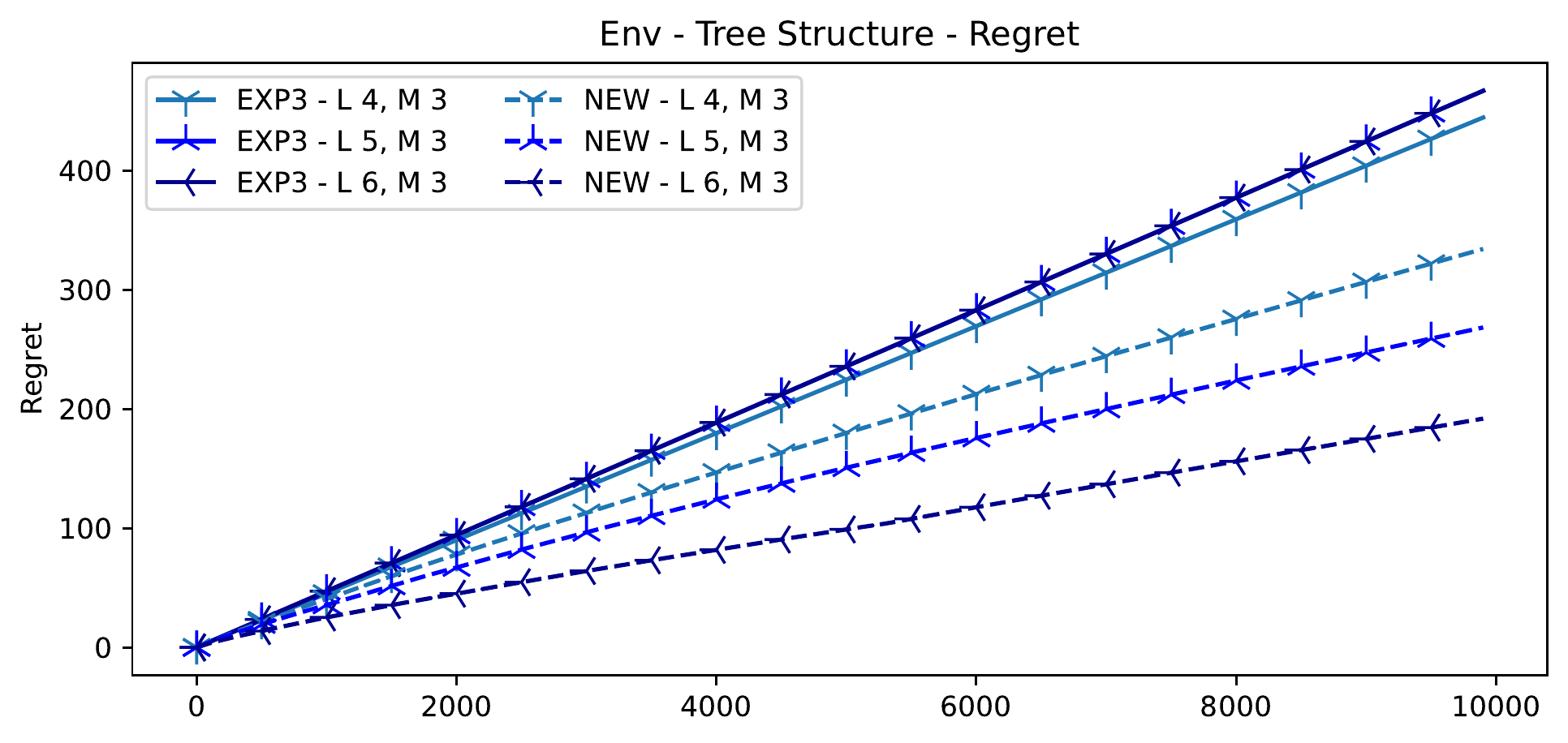}
    \caption{Regret of \ac{EXP3} and \ac{NEW} in a tree environment with different values of levels $L$ and classes per level $M$}
    \label{fig:tree_regret_depth}
    \vspace{-1ex}
\end{figure}

\begin{figure}[t]
    \centering
    \includegraphics[height=4cm]{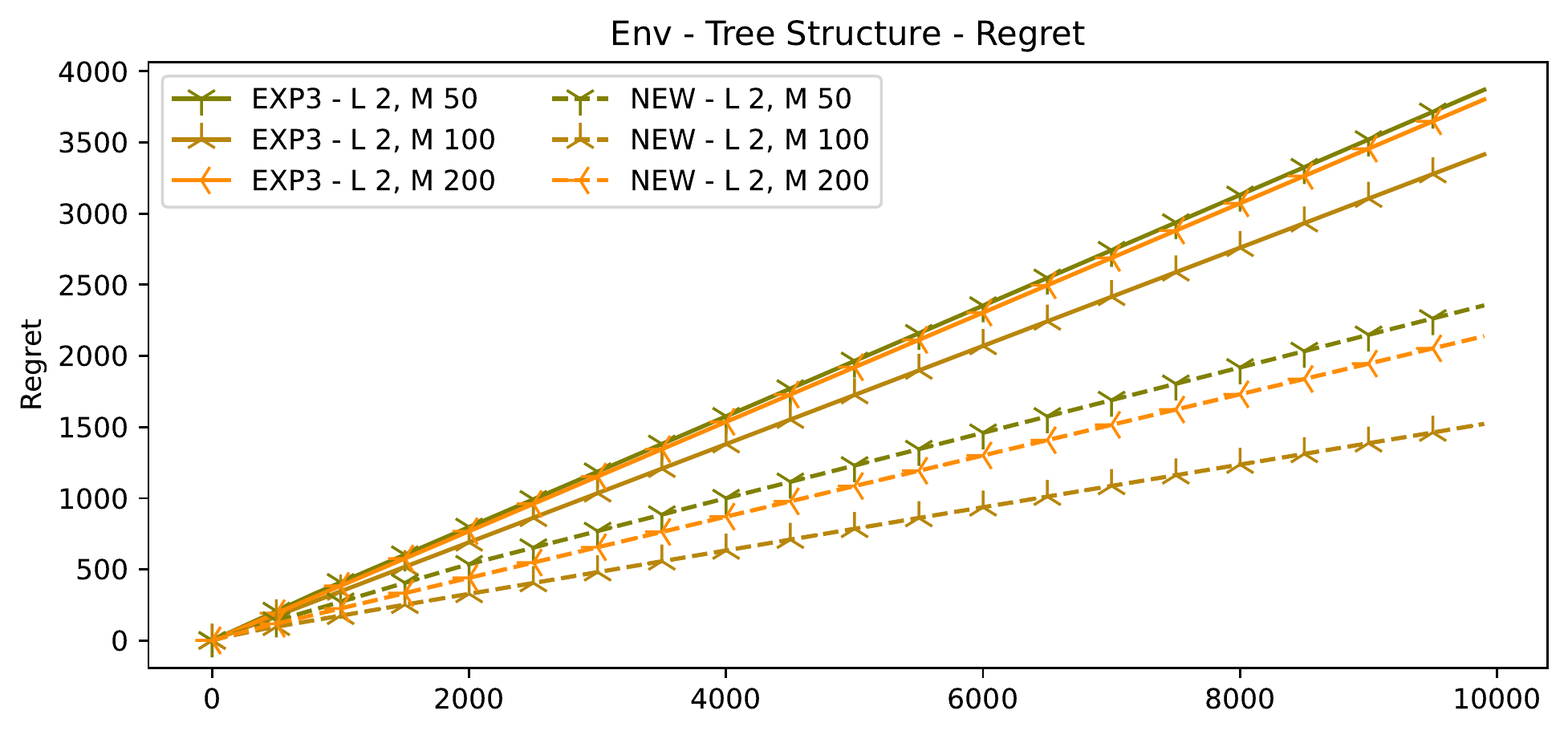}
    \caption{Regret of \ac{EXP3} and \ac{NEW} in a tree environment with different values of levels $L$ and classes per level $M$}
    \label{fig:tree_regret_breadth}
    \vspace{-1ex}
\end{figure}

In this section we present a series of numerical experiments designed to test the efficiency of our method compared to \ac{EXP3}.
We use a synthetic environment where we simulate nested similarity partitions with trees. While \ac{NEW} exploits the similarity structure by making forward/backward passes through the associated tree with its logit choice rule \eqref{eq:NLC}, \ac{EXP3} is simply run over the leaves of the tree, \ie $\elems$.
All experiment details (as well as additional results) are presented in \cref{sec:num-app}.
For every setting, we report the results of our experiments by plotting the average regret of each algorithm for $20$ seeds of randomly drawn losses.
The code to reproduce the experiments can be found at \url{https://github.com/criteo-research/Nested-Exponential-Weights}. 

\para{Benefits in the red bus/blue bus problem}

We consider here a variant of the red bus/blue bus problem with $N$ different buses (the original paradox has $N=2$).
In this experiment (see illustration in \cref{fig:blueredbus}, Appendix \ref{sec:num-bbrb}) we allow each bus to have non-zero intrinsic losses and illustrate in \cref{fig:bbrb_regret} how both algorithms perform when $N$ grows. 
We observe there that for all configurations \ac{NEW} achieves better regret than \ac{EXP3}. While both methods achieve sublinear regret, \ac{EXP3} requires far more steps to identify the best alternative as $N$ grows and suffers overall from worse regret while \ac{NEW} achieves similar regret and does not suffer as much from the number of irrelevant alternatives.
We provide additional plots in \cref{sec:num-bbrb} which show that \ac{NEW} performs consistently better than \ac{EXP3} when there exists a similarity structure allowing to efficiently update scores of classes that have very similar losses.

\para{Performance in general nested structures}
In this setting we generate symmetric trees and experiment with different values of number of levels $\nLvls$ and number of child per nodes $M = \vert \class\atlvl{\lvl} \vert$ for $\lvl=1, \dots, \nLvls$.
Specifically, in \cref{fig:tree_regret_depth} with a fixed $M$, we see that \ac{NEW} obtains better regret than \ac{EXP3} even when $\nLvls$ increases.
We provide variance plots for the experiments that generated the same performance on the plots in \ref{sec:num-tree} as well as additional visualisations.   
Finally, in \cref{fig:tree_regret_breadth}, we can see that for a shallow tree ($\nLvls=2$)  \ac{NEW} performs always better than \ac{EXP3}, even for high values of $M$. Indeed, when the number of children per nodes $M$ increases, the tree loses its ``factorized'' structure which also affects \ac{NEW} due to the less "structured" tree.
Thus, again, \ac{NEW} performs consistently better than \ac{EXP3} when it is possible to efficiently handle classes with similar losses. 

Overall, our experiments confirm that a learning algorithm based on \acl{NLC} can lead to significant benefits in problems with a high degree of similarity between alternatives.
This leaves open the question of whether a similar approach can be applied to structures with \emph{non-nested} attributes;
we defer this question to future work.


\appendix
\setcounter{remark}{0}
\numberwithin{equation}{section}		
\numberwithin{lemma}{section}		
\numberwithin{proposition}{section}		
\numberwithin{theorem}{section}		
\numberwithin{corollary}{section}		

\section{The nested entropy and its properties}
\label{app:entropy}

Our aim in this appendix is to prove the basic properties of the series of (negative) entropy functions that fuel the regret analysis of the \acf{NEW} algorithm.

To begin with, given a similarity structure $\struct$ on $\elems$ and a sequence of uncertainty parameters $\temp\atlvl{1} \geq \dotsm \geq \temp\atlvl{\nLvls} > 0$ (with $\temp\atlvl{\nLvls+1} = 0$ by convention), we define:
\begin{enumerate}
\addtolength{\itemsep}{\smallskipamount}

\item
The \emph{conditional entropy} of $\strat\in\simplex(\elems)$ relative to a target class $\class\in\classes\atlvl{\lvl}$:
\begin{equation}
\label{eq:entropy-cond}
\hreg(\strat \vert \class)
	= \temp\atlvl{\lvl+1} \sum_{\classalt \childof \class}
		\strat_{\classalt} \log \frac{\strat_{\classalt}}{\strat_{\class}}
	= \temp\atlvl{\lvl+1} \, \strat_{\class}
		\sum_{\classalt \childof \class}
			\strat_{\classalt \vert \class} \log\strat_{\classalt \vert \class}.
\end{equation}

\item
The \emph{nested entropy} of $\strat\in\simplex(\elems)$ relative to $\class\in\classes\atlvl{\lvl}$:
\begin{equation}
\label{eq:entropy-nest}
\hreg_{\class}(\strat)
	= \sum_{\lvlalt=\lvl}^{\nLvls} \diff\atlvl{\lvlalt}
		\;\sum_{\mathclap{\class\atlvl{\lvlalt} \desceq[\lvlalt] \class}}\;
			\strat_{\class\atlvl{\lvlalt}} \log\strat_{\class\atlvl{\lvlalt}}
\end{equation}
where $\diff\atlvl{\lvlalt} = \temp\atlvl{\lvlalt} - \temp\atlvl{\lvlalt+1}$ for all $\lvlalt = 1,\dotsc,\nLvls$.

\item
The \emph{restricted entropy} of $\strat\in\simplex(\elems)$ relative to $\class\in\classes\atlvl{\lvl}$:
\begin{equation}
\label{eq:entropy-restr}
\hreg_{\vert\class}(\strat)
	= \hreg_{\class}(\strat)
		+ \chi_{\simplex(\class)}(\strat)
	= \begin{cases}
		\hreg_{\class}(\strat)
			&\quad
			\text{if $\strat\in\simplex(\class)$},
		\\
		\infty
			&\quad
			\text{otherwise},
	\end{cases}
\end{equation}
where $\chi_{\simplex(\class)}$ denotes the (convex) characteristic function of $\simplex(\class)$, \ie 
$\chi_{\simplex(\class)}(\strat) = 0$ if $\strat\in\simplex(\class)$ and $\chi_{\simplex(\class)}(\strat) = \infty$ otherwise.
[Obviously, $\hreg_{\vert\class}(\strat) = \hreg_{\class}(\strat)$ whenever $\strat \in \simplex(\class)$.]
\end{enumerate}

\begin{remark}
As per our standard conventions, we are treating $\class$ interchangeably as a subset of $\elems$ or as an element of $\struct$;
by analogy, to avoid notational inflation, we are also viewing $\simplex(\class)$ as a subset of $\simplex(\elems)$ \textendash\ more precisely, a face thereof.
Finally, in all cases, the functions $\hreg(\strat\vert\class)$, $\hreg_{\class}(\strat)$ and $\hreg_{\vert\class}(\strat)$ are assumed to take the value $+\infty$ for $\strat \in \R^{\elems} \setminus \simplex(\elems)$.
\endenv
\end{remark}

\begin{remark}
For posterity, we also note that the nested and restricted entropy functions ($\hreg_{\class}(\strat)$ and $\hreg_{\vert\class}(\strat)$ respectively) are both convex \textendash\ though not necessarily \emph{strictly} convex \textendash\ over $\simplex(\elems)$.
This is a consequence of the fact that each summand $\strat_{\class} \log\strat_{\class}$ in \eqref{eq:entropy-nest} is convex in $\strat$ and that $\diff\atlvl{\lvlalt} = \temp\atlvl{\lvlalt} - \temp\atlvl{\lvlalt+1} \geq 0$ for all $\lvlalt=1,\dotsc,\nLvls$.
Of course, any two distributions $\strat,\stratalt\in\simplex(\elems)$ that assign the same probabilities to elements of $\class$ but not otherwise have $\hreg_{\class}(\strat) = \hreg_{\class}(\stratalt)$, so $\hreg_{\class}$ is \emph{not} strictly convex over $\simplex(\elems)$ if $\class \neq \elems$.
However, since the function $\sum_{\elem\in\class} \strat_{\elem} \log\strat_{\elem}$ is strictly convex over $\simplex(\class)$, it follows that $\hreg_{\class}$ \textendash\ and hence $\hreg_{\vert\class}$ \textendash\ \emph{is} strictly convex over $\simplex(\class)$.
\endenv
\end{remark}

Our main goal in the sequel will be to prove the following fundamental properties of the entropy functions defined above:
\smallskip

\begin{proposition}
\label{prop:nest2cond}
For all $\class \in \struct\atlvl{\lvl}$, $\lvl=1,\dotsc,\nLvls$, and for all $\strat\in\simplex(\elems)$, we have:
\begin{align}
\label{eq:nest2cond}
\hreg_{\class}(\strat)
	&= \sum_{\classalt\desceq\class} \hreg(\strat \vert \classalt)
	+ \temp\atlvl{\lvl} \, \strat_{\class} \log\strat_{\class}.
\intertext{Consequently, for all $\strat\in\simplex(\class)$, we have:}
\label{eq:restr2cond}
\hreg_{\vert\class}(\strat)
	&= \sum_{\classalt\desceq\class} \hreg(\strat \vert \classalt).
\end{align}
\end{proposition}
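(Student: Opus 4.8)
The plan is to establish \eqref{eq:nest2cond} by backward induction on the attribute level $\lvl = \attof{\class}$, starting from the leaves and working up to $\class$, and then to read off \eqref{eq:restr2cond} as an immediate specialization. The one structural fact doing the real work throughout is the additivity of choice probabilities over the partition, $\strat_{\class} = \sum_{\child \childof \class} \strat_{\child}$, which is exactly \eqref{eq:prob-class}.

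The first step is to record a one-level recursion for the nested entropy itself. Splitting off the top tier $\lvlalt = \lvl$ in the definition \eqref{eq:entropy-nest} and observing that, for each $\lvlalt \geq \lvl+1$, the level-$\lvlalt$ weak descendants of $\class$ partition into the level-$\lvlalt$ weak descendants of the children of $\class$, I would obtain
\begin{equation*}
\hreg_{\class}(\strat) = \diff\atlvl{\lvl}\,\strat_{\class}\log\strat_{\class} + \sum_{\child\childof\class}\hreg_{\child}(\strat).
\end{equation*}
This reduces the claim for $\class$ to the same claim for each of its children and sets up the induction.

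For the base case $\lvl = \nLvls$, the class $\class$ is a leaf, so the descendant sum $\sum_{\classalt\desceq\class}\hreg(\strat\vert\classalt)$ consists only of $\hreg(\strat\vert\class)$, which vanishes since $\class$ has no children (equivalently, $\temp\atlvl{\nLvls+1}=0$), while $\hreg_{\class}(\strat) = \diff\atlvl{\nLvls}\strat_{\class}\log\strat_{\class} = \temp\atlvl{\nLvls}\strat_{\class}\log\strat_{\class}$; thus \eqref{eq:nest2cond} holds. For the inductive step I would insert the induction hypothesis for each child $\child\childof\class$ into the recursion above, then use that the weak descendants of the children of $\class$ are precisely the strict descendants of $\class$ to recombine $\sum_{\child\childof\class}\sum_{\classaltalt\desceq\child}\hreg(\strat\vert\classaltalt) = \sum_{\classaltalt\desceq\class}\hreg(\strat\vert\classaltalt) - \hreg(\strat\vert\class)$, and finally expand $\hreg(\strat\vert\class) = \temp\atlvl{\lvl+1}\sum_{\child\childof\class}\strat_{\child}\log\strat_{\child} - \temp\atlvl{\lvl+1}\strat_{\class}\log\strat_{\class}$ via $\sum_{\child\childof\class}\strat_{\child} = \strat_{\class}$. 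The two copies of $\sum_{\child}\strat_{\child}\log\strat_{\child}$ cancel, and collecting the surviving coefficient of $\strat_{\class}\log\strat_{\class}$ gives $\diff\atlvl{\lvl} + \temp\atlvl{\lvl+1} = \temp\atlvl{\lvl}$, which is exactly \eqref{eq:nest2cond}.

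Finally, \eqref{eq:restr2cond} is immediate: for $\strat\in\simplex(\class)$ all mass lies in $\class$, so $\strat_{\class} = 1$ and the extra term $\temp\atlvl{\lvl}\strat_{\class}\log\strat_{\class}$ vanishes; since $\hreg_{\vert\class}(\strat) = \hreg_{\class}(\strat)$ on $\simplex(\class)$, the identity \eqref{eq:nest2cond} collapses onto \eqref{eq:restr2cond}. I expect no genuine obstacle here beyond careful bookkeeping; the only delicate point is the cancellation of the cross terms $\strat_{\child}\log\strat_{\child}$ between the child entropies and the conditional entropy $\hreg(\strat\vert\class)$ at the subtree root, which is precisely where the additivity $\strat_{\class} = \sum_{\child\childof\class}\strat_{\child}$ enters and where an off-by-one between the uncertainty indices $\temp\atlvl{\lvl}$ and $\temp\atlvl{\lvl+1}$ would spoil the telescoping.
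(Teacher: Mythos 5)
Your proof is correct, and it takes a recognizably different route from the paper's. The paper argues \emph{tier-by-tier} within the subtree of $\class$: for each fixed attribute level $\lvlalt > \lvl$ it establishes the identity \eqref{eq:ent-nest3}, which converts the level-$\lvlalt$ entropy sum $\temp\atlvl{\lvlalt}\sum_{\classalt\desceq[\lvlalt]\class}\strat_{\classalt}\log\strat_{\classalt}$ into conditional entropies at level $\lvlalt-1$ plus a level-$(\lvlalt-1)$ entropy sum, and then telescopes these identities over $\lvlalt = \nLvls,\dotsc,\lvl$ using $\diff\atlvl{\lvlalt} = \temp\atlvl{\lvlalt}-\temp\atlvl{\lvlalt+1}$ and the convention $\temp\atlvl{\nLvls+1}=0$. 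You instead prove the subtree recursion $\hreg_{\class}(\strat) = \diff\atlvl{\lvl}\,\strat_{\class}\log\strat_{\class} + \sum_{\child\childof\class}\hreg_{\child}(\strat)$ (which does not appear in the paper) and run a backward induction on $\attof{\class}$, so that the entire bookkeeping is localized to a single parent/children step: there, $\sum_{\child\childof\class}\strat_{\child}=\strat_{\class}$ kills the cross terms $\strat_{\child}\log\strat_{\child}$, and $\diff\atlvl{\lvl}+\temp\atlvl{\lvl+1}=\temp\atlvl{\lvl}$ produces exactly the coefficient in \eqref{eq:nest2cond}. The substance is the same \textendash\ both arguments rest on precisely those two facts \textendash\ but your decomposition is vertical (by subtree roots) where the paper's is horizontal (by tiers); yours buys a shorter, more modular induction that matches the descending-induction style the paper itself adopts later (\cf \eqref{eq:conj-recursive} and the proofs of \cref{lem:hconj-and-nps-internal,lem:upperbound-nps-internal}), while the paper's version makes the global telescoping of the coefficients $\diff\atlvl{\lvlalt}$ across all levels explicit in one pass. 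Your base case at $\lvl=\nLvls$ and the deduction of \eqref{eq:restr2cond} via $\strat_{\class}=1$ on $\simplex(\class)$ coincide with the paper's; the only implicit convention in both treatments is $0\log 0 = 0$ when some class carries zero mass, which is harmless since a parent with zero mass forces all of its descendants to have zero mass as well.
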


\begin{proposition}
\label{prop:mirror}
For all $\class \in \struct$ and all $\score\in\R^{\elems}$, we have:
\begin{enumerate}
\item
The recursively defined propensity score $\score_{\class}$ of $\class$ as given by \eqref{eq:score} can be equivalently expressed as
\begin{align}
\label{eq:ent-conj}
\score_{\class}
	&= \max_{\strat \in \simplex(\elems)}
		\braces{\braket{\score}{\strat} - \hreg_{\vert\class}(\strat)}
\intertext{%
\item
The conditional probability of choosing $\elem\in\elems$ given that $\class$ has been chosen under \eqref{eq:NLC} is given by
}
\label{eq:ent-mirror}
\choice[\elem \vert \class](\score)
	&= \frac{\pd\score_{\class}}{\pd\score_{\elem}}
	= \argmax_{\strat \in \simplex(\elems)}
		\braces{\braket{\score}{\strat} - \hreg_{\vert\class}(\strat)}
\end{align}
\end{enumerate}
\end{proposition}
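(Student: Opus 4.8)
The plan is to prove both parts simultaneously by backward induction on the arborescence $\struct$, climbing from the leaves to the root and invoking at each node the elementary Gibbs (log-sum-exp) duality
\begin{equation*}
\temp \log \insum_{i} \exp(a_{i}/\temp)
	= \max_{\pi} \Big\{ \insum_{i} \pi_{i} a_{i} - \temp \insum_{i} \pi_{i} \log\pi_{i} \Big\},
\end{equation*}
valid for every $\temp > 0$, where $\pi$ ranges over probability vectors on the index set and the unique maximizer is the Gibbs law $\pi_{i} \propto \exp(a_{i}/\temp)$. Reading \eqref{eq:score} through this lens, the recursive softmax defining $\score_{\class}$ from its children is exactly one application of this duality at temperature $\temp\atlvl{\lvl+1}$, with maximizing weights equal to the conditional probabilities \eqref{eq:NLC}. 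Since $\hreg_{\vert\class}$ coincides with $\hreg_{\class}$ on $\simplex(\class)$ and equals $+\infty$ off it, the maximization in \eqref{eq:ent-conj} reduces to one over $\simplex(\class)$, so I will establish $\score_{\class} = \max_{\strat\in\simplex(\class)}\{\braket{\score}{\strat} - \hreg_{\class}(\strat)\}$ for every $\class\in\struct$, together with the identification of the maximizer.

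The base case $\class=\{\elem\}$ is immediate: $\simplex(\{\elem\})$ is a single point on which $\hreg_{\{\elem\}}\equiv 0$, so the right-hand side equals $\score_{\elem}=\score_{\{\elem\}}$. For the inductive step, fix $\class\in\classes\atlvl{\lvl}$ and write any $\strat\in\simplex(\class)$ in nested form: let $\pi_{\child}=\strat_{\child}$ be the marginal over the children $\child\childof\class$ and let $\sigma^{\child}\in\simplex(\child)$, $\sigma^{\child}_{\elem}=\strat_{\elem}/\strat_{\child}$, be the induced conditional on each child (extended by continuity where $\strat_{\child}=0$). The linear term splits as $\braket{\score}{\strat}=\sum_{\child}\pi_{\child}\braket{\score}{\sigma^{\child}}$, and the crux is the matching \emph{entropy chain rule}
\begin{equation*}
\hreg_{\class}(\strat)
	= \temp\atlvl{\lvl+1} \insum_{\child\childof\class} \pi_{\child}\log\pi_{\child}
		+ \insum_{\child\childof\class} \pi_{\child}\,\hreg_{\child}(\sigma^{\child}).
\end{equation*}
I would obtain this from \cref{prop:nest2cond}: splitting the descendant sum in \eqref{eq:restr2cond} into the top conditional entropy $\hreg(\strat\vert\class)=\temp\atlvl{\lvl+1}\sum_{\child}\pi_{\child}\log\pi_{\child}$ plus the contributions of the children's subtrees, and using that each $\hreg(\strat\vert\classalt)$ is scale-invariant within $\classalt$, so $\sum_{\classalt\desceq\child}\hreg(\strat\vert\classalt)=\pi_{\child}\,\hreg_{\child}(\sigma^{\child})$. (Equivalently, substituting $\strat_{\classalt}=\pi_{\child}\sigma^{\child}_{\classalt}$ directly into the tiered sum \eqref{eq:entropy-nest} and collapsing the coefficients via the telescoping identity $\sum_{\lvlalt=\lvl+1}^{\nLvls}\diff\atlvl{\lvlalt}=\temp\atlvl{\lvl+1}$.)

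Combining the two displays, $\braket{\score}{\strat}-\hreg_{\class}(\strat)$ becomes $\sum_{\child}\pi_{\child}[\braket{\score}{\sigma^{\child}}-\hreg_{\child}(\sigma^{\child})] - \temp\atlvl{\lvl+1}\sum_{\child}\pi_{\child}\log\pi_{\child}$, so the maximization over $\simplex(\class)$ factorizes into inner maximizations over each $\sigma^{\child}$ and an outer one over $\pi$. The inner problems return $\score_{\child}$ by the inductive hypothesis; the outer problem is precisely the Gibbs duality at temperature $\temp\atlvl{\lvl+1}$, whose value is $\score_{\class}$ by \eqref{eq:score} and whose maximizer is $\pi^{\ast}_{\child}=\choice[\child\vert\class](\score)$. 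This closes the induction, and unrolling the nested maximizer along each lineage identifies the optimal $\strat^{\ast}$ with the total \eqref{eq:NLC} probabilities, giving $\strat^{\ast}_{\elem}=\choice[\elem\vert\class](\score)$. For the gradient identity in \eqref{eq:ent-mirror} I would invoke Danskin's theorem: $\hreg_{\vert\class}$ is strictly convex on the face $\simplex(\class)$ (as noted after \eqref{eq:entropy-restr}), hence the maximizer is unique and $\score_{\class}$ is differentiable in $\score$ with $\pd\score_{\class}/\pd\score_{\elem}=\strat^{\ast}_{\elem}=\choice[\elem\vert\class](\score)$.

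The step I expect to be the main obstacle is the entropy chain rule, since producing the leading temperature $\temp\atlvl{\lvl+1}$ hinges delicately on the telescoping of the increments $\diff\atlvl{\lvlalt}=\temp\atlvl{\lvlalt}-\temp\atlvl{\lvlalt+1}$ and on the scale-invariance of the conditional entropies. A secondary nuisance is support bookkeeping: the conditionals $\sigma^{\child}$ are undefined on faces where $\strat_{\child}=0$, and one must argue by continuity of $t\mapsto t\log t$ at $t=0$ that both the chain rule and the maximizer extend to the boundary of $\simplex(\class)$.
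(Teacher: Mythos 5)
Your proof is correct, but it follows a genuinely different route from the paper's. The paper proceeds by writing out the first-order (KKT) optimality conditions for the full problem $(\Opt_{\class})$ in one shot: it computes $\pd\hreg_{\class}/\pd\strat_{\elem}$ explicitly (whose blow-up as $\strat_{\elem}\to0$ forces interior solutions), introduces a Lagrange multiplier, exponentiates to obtain the product chain \eqref{eq:chain}, and then unrolls this chain level by level \textendash\ dividing consecutive relations to recover the conditional logit ratios \eqref{eq:chain-cond} and hence the maximizer \eqref{eq:prob-anc}; the optimal value is then obtained separately in a third step by substituting back into the conditional-entropy decomposition of \cref{prop:nest2cond} and telescoping, and the gradient identity follows from the Legendre identity \eqref{eq:Legendre}. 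You instead run a single backward induction over $\struct$, invoking at each node the scalar Gibbs/log-sum-exp duality, glued together by the entropy chain rule $\hreg_{\class}(\strat) = \temp\atlvl{\lvl+1}\sum_{\child\childof\class}\pi_{\child}\log\pi_{\child} + \sum_{\child\childof\class}\pi_{\child}\hreg_{\child}(\sigma^{\child})$ \textendash\ which is indeed a valid consequence of \eqref{eq:restr2cond}, the scale-invariance of the conditional entropies, and the telescoping identity $\sum_{\lvlalt=\lvl+1}^{\nLvls}\diff\atlvl{\lvlalt} = \temp\atlvl{\lvl+1}$ (I checked this; the top term at level $\lvl+1$ vanishes because $\sigma^{\child}_{\child}=1$, so the bookkeeping closes). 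Your approach buys modularity and economy: the value and the maximizer emerge simultaneously at every level, uniqueness comes for free from the strict convexity of each local Gibbs problem, no gradient of the nested entropy or multiplier bookkeeping is needed, and the induction directly mirrors the recursion \eqref{eq:conj-recursive} that the paper only records as a remark. What the paper's computation buys in exchange is a set of explicit intermediate identities \textendash\ notably \eqref{eq:chain-cond}, \eqref{eq:prob-anc}, and the evaluation $\hreg_{\class}(\strat) = \braket{\score}{\strat} - \strat_{\class}\score_{\class}$ at the optimum \textendash\ which are convenient workhorses elsewhere in the appendices, plus an interiority argument (gradient blow-up at the boundary) that disposes of the face issues you flag; your continuity/extension remark handles the same point adequately, since terms with $\pi_{\child}=0$ contribute nothing and the Gibbs maximizer at each node has full support anyway.
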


These propositions will be the linchpin of the analysis to follow, so some remarks are in order:

\begin{remark}
Note here that the maximum in \eqref{eq:ent-conj} is taken over the \emph{restricted} entropy function $\hreg_{\vert\class}$, \emph{not} the nested entropy $\hreg_{\class}$.
This distinction will play a crucial role in the sequel;
in particular, since $\hreg_{\vert\class}$ is strictly convex over $\simplex(\class)$, it implies that the $\argmax$ in \eqref{eq:ent-mirror} is a singleton.
\endenv
\end{remark}

\begin{remark}
The first part of \cref{prop:mirror} can be rephrased more concisely (but otherwise equivalently) as
\begin{equation}
\label{eq:score2conj}
\score_{\class}
	= \hconj_{\vert\class}(\score)
\end{equation}
where
\begin{equation}
\label{eq:conj}
\hconj_{\vert\class}(\score)
	= \max_{\strat\in\simplex(\elems)}
		\braces{\braket{\score}{\strat} - \hreg_{\vert\class}(\strat)}
\end{equation}
denotes the convex conjugate of $\hreg_{\vert\class}$.
This interpretation is conceptually important because it spells out the precise functional dependence between the (primitive) propensity score profile $\score \in \R^{\elems}$ and the propensity scores $\score_{\class}$ that are propagated to higher-tier similarity classes $\class\in\struct$ via the recursive definition \eqref{eq:score}.
In particular, this observation leads to the recursive rule
\begin{equation}
\label{eq:conj-recursive}
\exp\parens*{\frac{\hconj_{\vert\class}(\score)}{\temp\atlvl{\lvl+1}}}
	= \sum_{\child\childof\class}
		\exp\parens*{\frac{\hconj_{\vert\child}(\score)}{\temp\atlvl{\lvl+1}}}
	\quad
	\text{for all $\class\in\classes\atlvl{\lvl}$, $\lvl=0,1,\dotsc,\nLvls-1$}.
\end{equation}
We will we use this representation freely in the sequel.
\endenv
\end{remark}

\begin{remark}
It is also worth noting that the propensity scores $\score_{\class}$, $\class\atlvl{\lvl} \in \classes\atlvl{\lvl}$, can also be seen as primitives for the arborescence $\alt\struct = \coprod_{\lvlalt=0}^{\lvl} \classes\atlvl{\lvlalt}$ obtained from $\struct$ by excising all (proper) descendants of $\classes\atlvl{\lvl}$.
Under this interpretation, the second part of \cref{prop:mirror} readily gives the more general expression
\begin{equation}
\label{eq:mirror-class}
\choice[\classalt \vert \class](\score)
	= \frac{\pd\score_{\class}}{\pd\score_{\classalt}}
	\quad
	\text{for all $\classalt \desceq \class$},
\end{equation}
where, in the \acl{RHS}, $\score_{\class}$ is to be construed as a function of $\score_{\classalt}$, defined recursively via \eqref{eq:score} applied to the truncated arborescence $\alt\struct$.
Even though we will not need this specific result, it is instructive to keep it in mind for the sequel.
\end{remark}

The rest of this appendix is devoted to the proofs of \cref{prop:nest2cond,prop:mirror}.

\begin{proof}[Proof of \cref{prop:nest2cond}]
Let $\lvl = \attof{\class}$, and fix some attribute label $\lvlalt > \lvl$.
We will proceed inductively by collecting all terms in \eqref{eq:nest2cond} associated to the attribute $\classes\atlvl{\lvlalt}$ and then summing everything together.
Indeed, we have:
\begin{subequations}
\label{eq:ent-nest1}
\begin{align}
\temp\atlvl{\lvlalt}
	\sum_{\classalt \desceq[\lvlalt] \class}
		\strat_{\classalt} \log\strat_{\classalt}
	&= \temp\atlvl{\lvlalt}
		\sum_{\class\atlvl{\lvlalt-1} \desceq[\lvlalt-1] \class}
			\bracks*{
				\sum_{\child \childof \class\atlvl{\lvlalt-1}}
					\strat_{\child} \log\strat_{\child}
				}
	\explain{collect attributes}
	\\
	&= \temp\atlvl{\lvlalt}
		\sum_{\class\atlvl{\lvlalt-1} \desceq[\lvlalt-1] \class}
			\bracks*{
				\sum_{\child \childof \class\atlvl{\lvlalt-1}}
					\strat_{\child \vert \class\atlvl{\lvlalt-1}} \strat_{\class\atlvl{\lvlalt-1}}
					\log(\strat_{\child \vert \class\atlvl{\lvlalt-1}} \strat_{\class\atlvl{\lvlalt-1}})
				}
	\explain{by definition}
	\\
	&= \temp\atlvl{\lvlalt}
		\sum_{\class\atlvl{\lvlalt-1} \desceq[\lvlalt-1] \class} 
			\bracks*{
				\sum_{\child \childof \class\atlvl{\lvlalt-1}}
					\strat_{\child \vert \class\atlvl{\lvlalt-1}} \strat_{\class\atlvl{\lvlalt-1}}
					\log\strat_{\child \vert \class\atlvl{\lvlalt-1}}
				}
	\label{eq:ent-nest1a}\\
	&+ \temp\atlvl{\lvlalt}
		\sum_{\class\atlvl{\lvlalt-1} \desceq[\lvlalt-1] \class}
			\bracks*{
				\sum_{\child \childof \class\atlvl{\lvlalt-1}}
					\strat_{\child \vert \class\atlvl{\lvlalt-1}} \strat_{\class\atlvl{\lvlalt-1}}
					\log\strat_{\class\atlvl{\lvlalt-1}}
				}
	\label{eq:ent-nest1b}
\end{align}
\end{subequations}
with the tacit understanding that any empty sum that appears above is taken equal to zero.

Now, by the definition of the nested entropy, we readily obtain that
\begin{subequations}
\label{eq:ent-nest2}
\begin{equation}
\label{eq:ent-nest2a}
\eqref{eq:ent-nest1a}
	= \sum_{\class\atlvl{\lvlalt-1} \desceq[\lvlalt-1] \class}
			\hreg(\strat \vert \class\atlvl{\lvlalt-1})
\end{equation}
whereas, by noting that $\sum_{\child \childof \class\atlvl{\lvlalt-1}} \strat_{\child \vert \class\atlvl{\lvlalt-1}} = 1$ (by the definition of conditional class choice probabilities), \cref{eq:ent-nest1b} becomes
\begin{equation}
\label{eq:ent-nest2b}
\eqref{eq:ent-nest1b}
	= \temp\atlvl{\lvlalt}
		\sum_{\class\atlvl{\lvlalt-1} \desceq[\lvlalt-1] \class}
			\strat_{\class\atlvl{\lvlalt-1}} \log\strat_{\class\atlvl{\lvlalt-1}}.
\end{equation}
\end{subequations}
Hence, combining \cref{eq:ent-nest1,eq:ent-nest2a,eq:ent-nest2b}, we get:
\begin{equation}
\label{eq:ent-nest3}
\temp\atlvl{\lvlalt}
	\sum_{\classalt \desceq[\lvlalt] \class}
		\strat_{\classalt} \log\strat_{\classalt}
	= \sum_{\class\atlvl{\lvlalt-1} \desceq[\lvlalt-1] \class}
			\hreg(\strat \vert \class\atlvl{\lvlalt-1})
	+ \temp\atlvl{\lvlalt}
		\sum_{\class\atlvl{\lvlalt-1} \desceq[\lvlalt-1] \class}
			\strat_{\class\atlvl{\lvlalt-1}} \log\strat_{\class\atlvl{\lvlalt-1}}.
\end{equation}

The above expression is our basic inductive step.
Indeed, summing \eqref{eq:ent-nest3} over all $\lvlalt = \nLvls,\dotsc,\lvl = \attof{\class}$, we obtain:
\begin{align}
\hreg_{\class}(\strat)
	&= \sum_{\lvlalt=\lvl}^{\nLvls}
		(\temp\atlvl{\lvlalt} - \temp\atlvl{\lvlalt+1})
		\;\sum_{\mathclap{\classalt \desceq[\lvlalt] \class}}\;
			\strat_{\classalt} \log\strat_{\classalt}
	\explain{by definition}
	\\
	&= \sum_{\lvlalt=\nLvls}^{\lvl+1}
		\bracks*{
			\temp\atlvl{\lvlalt} \sum_{\mathclap{\classalt \desceq[\lvlalt] \class}}
				\strat_{\classalt} \log\strat_{\classalt}
			- \temp\atlvl{\lvlalt+1} \sum_{\mathclap{\classalt \desceq[\lvlalt] \class}}
				\strat_{\classalt} \log\strat_{\classalt}
			}
	+ (\temp\atlvl{\lvl} - \temp\atlvl{\lvl+1}) \, \strat_{\class} \log\strat_{\class}
	\explain{isolate $\class$}
	\\
	&= \sum_{\lvlalt=\nLvls}^{\lvl+1}
		\bracks*{
			\sum_{\class\atlvl{\lvlalt-1} \desceq[\lvlalt-1] \class}
				\hreg(\strat \vert \class\atlvl{\lvlalt-1})
			+ \temp\atlvl{\lvlalt}
				\sum_{\class\atlvl{\lvlalt-1} \desceq[\lvlalt-1] \class}
					\strat_{\class\atlvl{\lvlalt-1}} \log\strat_{\class\atlvl{\lvlalt-1}}
			- \temp\atlvl{\lvlalt+1} \sum_{\mathclap{\classalt \desceq[\lvlalt] \class}}
				\strat_{\classalt} \log\strat_{\classalt}
			}
	\notag\\
	&\qquad
		+ (\temp\atlvl{\lvl} - \temp\atlvl{\lvl+1}) \, \strat_{\class} \log\strat_{\class}
	\explain{by \eqref{eq:ent-nest3}}
	\\
	&= \sum_{\lvlalt=\lvl}^{\nLvls-1}
		\sum_{\classalt \desceq[\lvlalt] \class} \hreg(\strat \vert \classalt)
	+ \temp\atlvl{\lvl} \,
			\strat_{\class} \log\strat_{\class}
	- \temp\atlvl{\nLvls+1}
		\;\sum_{\mathclap{\classalt \desceq[\nLvls] \class}}\;
			\strat_{\classalt} \log\strat_{\classalt}
\label{eq:ent-nest4}
\end{align}
with the last equality following by telescoping the terms involving $\temp\atlvl{\lvlalt}$.
Now, given that $\temp\atlvl{\nLvls+1} = 0$ by convention, the third sum above is zero.
Finally, since the conditional entropy of $\strat$ relative to any childless class is zero by definition, the first sum in \eqref{eq:ent-nest4} can be rewritten as $\sum_{\lvlalt=\lvl}^{\nLvls-1} \sum_{\classalt \desceq[\lvlalt] \class} \hreg(\strat \vert \classalt) = \sum_{\classalt \desceq \class} \hreg(\strat \vert \classalt)$, and our claim follows.

Finally, \eqref{eq:restr2cond} is a consequence of the fact that $\strat_{\class} = 1$ whenever $\strat \in \simplex(\class)$ \textendash\ \ie whenever $\supp(\strat) \subseteq \class$.
\end{proof}

%

\begin{proof}[Proof of \cref{prop:mirror}]
We begin by noting that the optimization problem \eqref{eq:ent-conj} can be written more explicitly as
\begin{equation}
\label{eq:opt-class}
\tag{$\Opt_{\class}$}
\begin{aligned}
\textup{maximize}
	&\quad
	\braket{\score}{\strat} - \hreg_{\class}(\strat),
	\\
\textup{subject to}
	&\txs
	\quad
	\strat \in \simplex(\elems)
	\text{ and }
	\supp(\strat) \subseteq \class.
\end{aligned}
\end{equation}
We will proceed to show that the (unique) solution of \eqref{eq:opt-class} is given by the vector of conditional probabilities $(\choice[\elem \vert \class](\score))_{\elem\in\elems}$.
The expression \eqref{eq:ent-conj} for the maximal value of \eqref{eq:opt-class} will then be derived from \cref{prop:nest2cond}, and the differential representation \eqref{eq:ent-mirror} will follow from Legendre's identity.
We make all this precise in a series of individual steps below.

\para{Step 1: Optimality conditions for \eqref{eq:opt-class}}

For all $\elem\in\class$, the definition of the nested entropy gives
\begin{align}
\label{eq:nest-grad}
\frac{\pd\hreg_{\class}}{\pd\strat_{\elem}}
	&= \sum_{\lvlalt=\lvl}^{\nLvls} \diff\atlvl{\lvlalt}
		\;\sum_{\mathclap{\classalt \desceq[\lvlalt] \class}}\;
			\frac{\pd}{\pd\strat_{\elem}}
				\parens{\strat_{\classalt} \log\strat_{\classalt}}
	= \sum_{\lvlalt=\lvl}^{\nLvls} \diff\atlvl{\lvlalt}
		\;\sum_{\mathclap{\classalt \desceq[\lvlalt] \class}}\;
			(1 + \log\strat_{\classalt})
			\frac{\pd\strat_{\classalt}}{\pd\strat_{\elem}}
	\notag\\
	&= \sum_{\lvlalt=\lvl}^{\nLvls} \diff\atlvl{\lvlalt}
		\;\sum_{\mathclap{\classalt \desceq[\lvlalt] \class}}\;
			(1 + \log\strat_{\classalt})
			\oneof{\elem \in \classalt}
	\notag\\
	&= \sum_{\lvlalt=\lvl}^{\nLvls} \diff\atlvl{\lvlalt}
				(1 + \log\strat_{\class\atlvl{\lvlalt}})
	\notag\\
	&= \temp\atlvl{\lvl}
		+ \sum_{\lvlalt=\lvl}^{\nLvls} \diff\atlvl{\lvlalt} \log\strat_{\class\atlvl{\lvlalt}}
\end{align}
where $\classleaf{\class}{\elem}$ denotes the lineage of $\elem$ up to $\class$ (inclusive).
This implies that $\pd_{\elem}\hreg_{\class}(\strat) \to -\infty$ whenever $\strat_{\elem}\to0$, so any solution $\strat$ of \eqref{eq:opt-class} must have $\strat_{\elem} > 0$ for all $\elem\in\class$.
In view of this, the first-order optimality conditions for \eqref{eq:opt-class} become
\begin{equation}
\label{eq:KKT}
\score_{\elem}
	- \frac{\pd\hreg_{\class}}{\pd\strat_{\elem}}
	= \score_{\elem}
		- \temp\atlvl{\lvl}
		- \sum_{\lvlalt=\lvl}^{\nLvls} \diff\atlvl{\lvlalt} \log\strat_{\class\atlvl{\lvlalt}}
	= \coef
	\quad
	\text{for all $\elem\in\class$},
\end{equation}
where $\coef$ is the Lagrange multiplier for the equality constraint $\sum_{\elem\in\elems} \strat_{\elem} = 1$.%
\footnote{Since $\strat_{\elem} > 0$ for all $\elem\in\class$, 
the multipliers for the corresponding inequality constraints all vanish by complementary slackness.}
Thus, after rearranging terms and exponentiating, we get
\begin{equation}
\label{eq:chain}
\strat_{\class\atlvl{\nLvls}}^{\diff\atlvl{\nLvls}}
	\cdot \strat_{\class\atlvl{\nLvls-1}}^{\diff\atlvl{\nLvls-1}}
	\dotsm
	\strat_{\class\atlvl{\lvl}}^{\diff\atlvl{\lvl}}
	= \frac{\exp(\score_{\elem})}{\pf},
\end{equation}
for some proportionality constant $\pf \equiv \pf(\score) > 0$.

\para{Step 2: Solving \eqref{eq:opt-class}}

The next step of our proof will focus on unrolling the chain \eqref{eq:chain}, one attribute at a time.
To start, recall that $\diff\atlvl{\nLvls} = \temp\atlvl{\nLvls}$, so \eqref{eq:chain} becomes
\begin{equation}
\label{eq:chain1}
\strat_{\class\atlvl{\nLvls}}
	\cdot \strat_{\class\atlvl{\nLvls-1}}^{\diff\atlvl{\nLvls-1} / \temp\atlvl{\nLvls}}
	\dotsm \strat_{\class\atlvl{\lvl}}^{\diff\atlvl{\lvl} / \temp\atlvl{\nLvls}}
	= \frac
		{\exp(\score_{\class\atlvl{\nLvls}}/\temp\atlvl{\nLvls})}
		{\pf^{1/\temp\atlvl{\nLvls}}},
\end{equation}
where we used the fact that $\class\atlvl{\nLvls} = \elem$ by definition.
Now, since $\class\atlvl{\nLvls-1} \desceq \class\atlvl{\lvl} = \class$, it follows that all children of $\class\atlvl{\nLvls-1}$ are also desendants of $\class$, so \eqref{eq:chain1} applies to all siblings of $\class\atlvl{\nLvls}$ as well.
Hence, summing \eqref{eq:chain1} over $\class\atlvl{\nLvls} \childof \class\atlvl{\nLvls-1}$, we get
\begin{equation}
\label{eq:chain2}
\strat_{\class\atlvl{\nLvls-1}}
	\cdot \strat_{\class\atlvl{\nLvls-1}}^{\diff\atlvl{\nLvls-1} / \temp\atlvl{\nLvls}}
	\dotsm \strat_{\class\atlvl{\lvl}}^{\diff\atlvl{\lvl} / \temp\atlvl{\nLvls}}
	= \frac
		{\exp(\score_{\class\atlvl{\nLvls-1}} / \temp\atlvl{\nLvls})}
		{\pf^{1/\temp\atlvl{\nLvls}}},
\end{equation}
where we used
the definition \eqref{eq:prob-class} of $\strat_{\class\atlvl{\nLvls-1}} = \sum_{\class\atlvl{\nLvls} \childof \class\atlvl{\nLvls-1}} \strat_{\class\atlvl{\nLvls}}$
and the recursive definition \eqref{eq:score} for $\score_{\class\atlvl{\nLvls-1}}$, \ie the fact that
$\exp(\score_{\class\atlvl{\nLvls-1}} / \temp\atlvl{\nLvls}) = \sum_{\class\atlvl{\nLvls} \childof \class\atlvl{\nLvls-1}} \exp(\score_{\class\atlvl{\nLvls}} / \temp\atlvl{\nLvls})$.
Therefore, noting that
\begin{equation}
1 + \frac{\diff\atlvl{\nLvls-1}}{\temp\atlvl{\nLvls}}
	= 1 + \frac{\temp\atlvl{\nLvls-1} - \temp\atlvl{\nLvls}}{\temp\atlvl{\nLvls}}
	= \frac{\temp\atlvl{\nLvls-1}}{\temp\atlvl{\nLvls}}
\end{equation}
the product \eqref{eq:chain2} becomes
\begin{equation}
\strat_{\class\atlvl{\nLvls-1}}^{\temp\atlvl{\nLvls-1}}
	\cdot \strat_{\class\atlvl{\nLvls-2}}^{\diff\atlvl{\nLvls-2}}
	\dotsm \strat_{\class\atlvl{\lvl}}^{\diff\atlvl{\lvl}}
	= \frac
		{\exp(\score_{\class\atlvl{\nLvls-1}})}
		{\pf}
\end{equation}
or, equivalently
\begin{equation}
\label{eq:chain3}
\strat_{\class\atlvl{\nLvls-1}}
	\cdot \strat_{\class\atlvl{\nLvls-2}}^{\diff\atlvl{\nLvls-2} / \temp\atlvl{\nLvls-1}}
	\dotsm \strat_{\class\atlvl{\lvl}}^{\diff\atlvl{\lvl} / \temp\atlvl{\nLvls-1}}
	= \frac
		{\exp(\score_{\class\atlvl{\nLvls-1}} / \temp\atlvl{\nLvls-1})}
		{\pf^{1/\temp\atlvl{\nLvls-1}}}.
\end{equation}
This last equation has the same form as \eqref{eq:chain2} applied to the chain $\class\atlvl{\lvl} \parentof \class\atlvl{\lvl+1} \parentof \dotsm \parentof \class\atlvl{\nLvls-1}$ instead of $\class\atlvl{\lvl} \parentof \class\atlvl{\lvl+1} \parentof \dotsm \parentof \class\atlvl{\nLvls}$.
Thus, proceeding inductively, we conclude that
\begin{equation}
\label{eq:chain4}
\strat_{\class\atlvl{\lvlalt}}^{\temp\atlvl{\lvlalt}} \prod_{j=\lvlalt-1}^{\lvl} \strat_{\class\atlvl{j}}^{\diff\atlvl{j}}
	= \frac{\exp(\score_{\class\atlvl{\lvlalt}})}{\pf}
	\quad
	\text{for all $\lvlalt=\nLvls,\dotsc,\lvl$}
\end{equation}
with the empty product $\prod_{j\in\varnothing} \strat_{\class\atlvl{j}}^{\diff\atlvl{j}}$ taken equal to $1$ by standard convention.

Now, substituting $\lvlalt \gets \lvlalt+1$ in \eqref{eq:chain4}, we readily get
\begin{equation}
\label{eq:chain5}
\strat_{\class\atlvl{\lvlalt+1}}^{\temp\atlvl{\lvlalt+1}}
	\cdot \strat_{\class\atlvl{\lvlalt}}^{\diff\atlvl{\lvlalt}} \prod_{j=\lvlalt-1}^{\lvl} \strat_{\class\atlvl{j}}^{\diff\atlvl{j}}
	= \frac{\exp(\score_{\class\atlvl{\lvlalt+1}})}{\pf}
	\quad
	\text{for all $\lvlalt=\nLvls-1,\dotsc,\lvl$}.
\end{equation}
Consequently, recalling that $\diff\atlvl{\lvlalt} = \temp\atlvl{\lvlalt} - \temp\atlvl{\lvlalt+1}$ and dividing \eqref{eq:chain4} by \eqref{eq:chain5}, we get
\begin{equation}
\label{eq:chain6}
\frac
	{\strat_{\class\atlvl{\lvlalt+1}}^{\temp\atlvl{\lvlalt+1}}}
	{\strat_{\class\atlvl{\lvlalt}}^{\temp\atlvl{\lvlalt+1}}}
	= \frac
		{\exp(\score_{\class\atlvl{\lvlalt+1}})}
		{\exp(\score_{\class\atlvl{\lvlalt}})},
\end{equation}
and hence
\begin{equation}
\label{eq:chain-cond}
\frac
	{\strat_{\class\atlvl{\lvlalt+1}}}
	{\strat_{\class\atlvl{\lvlalt}}}
	= \frac
		{\exp(\score_{\class\atlvl{\lvlalt+1}} / \temp\atlvl{\lvlalt+1})}
		{\exp(\score_{\class\atlvl{\lvlalt}} / \temp\atlvl{\lvlalt+1})}
	= \choice[\class\atlvl{\lvlalt+1} \vert \class\atlvl{\lvlalt}](\score)
\end{equation}
by the definition of the conditional logit choice model \eqref{eq:NLC}.
Therefore, by unrolling the chain
\begin{equation}
\label{eq:chain7}
\strat_{\elem \vert \class}
	= \frac{\strat_{\elem}}{\strat_{\class}}
	= \frac{\strat_{\class\atlvl{\nLvls}}}{\strat_{\class\atlvl{\nLvls-1}}}
	\cdot \frac{\strat_{\class\atlvl{\nLvls-1}}}{\strat_{\class\atlvl{\nLvls-2}}}
	\dotsm \frac{\strat_{\class\atlvl{\lvl+1}}}{\strat_{\class\atlvl{\lvl}}}
	= \choice[\class\atlvl{\nLvls} \vert \class\atlvl{\nLvls-1}](\score)
	\times \choice[\class\atlvl{\nLvls-1} \vert \class\atlvl{\nLvls-2}](\score)
	\times \dotsm
	\times \choice[\class\atlvl{\lvl+1} \vert \class\atlvl{\lvl}](\score)
\end{equation}
we obtain the nested expression
\begin{equation}
\strat_{\elem}
	= \strat_{\class} \prod_{\lvlalt=\lvl}^{\nLvls-1} \choice[\class\atlvl{\lvlalt+1} \vert \class\atlvl{\lvlalt}](\score)
	\quad
	\text{for all $\elem\in\class$}.
\end{equation}
Thus, with $\strat_{\class} = 1$ (by the fact that $\supp(\strat) = \class$), we finally conclude that
\begin{equation}
\label{eq:prob-anc}
\strat_{\elem}
	= \prod_{\lvlalt=\lvl}^{\nLvls-1} \choice[\class\atlvl{\lvlalt+1} \vert \class\atlvl{\lvlalt}](\score)
	= \choice[\elem \vert \class](\score)
	\quad
	\text{for all $\elem\in\class$}.
\end{equation}

\para{Step 3: The maximal value of \eqref{eq:opt-class}}

To obtain the value of the maximization problem \eqref{eq:opt-class}, we will proceed to substitute \eqref{eq:prob-anc} in the expression \eqref{eq:nest2cond} provided by \cref{prop:nest2cond} for $\hreg_{\class}(\strat)$.
To that end, for all $\lvlalt = \lvl,\dotsc,\nLvls-1$ and all $\class\atlvl{\lvlalt} \desceq[\lvlalt] \class$, the definition \eqref{eq:entropy-cond} of the conditional entropy gives:
\begin{align}
\hreg(\strat \vert \class\atlvl{\lvlalt})
	&= \temp\atlvl{\lvlalt+1} \, \strat_{\class\atlvl{\lvlalt}}
		\sum_{\class\atlvl{\lvlalt+1} \childof \class\atlvl{\lvlalt}}
			\strat_{\class\atlvl{\lvlalt+1} \vert \class\atlvl{\lvlalt}}
			\log \strat_{\class\atlvl{\lvlalt+1} \vert \class\atlvl{\lvlalt}}
	\explain{by definition}
	\\
	&= \temp\atlvl{\lvlalt+1} \, \strat_{\class\atlvl{\lvlalt}}
		\sum_{\class\atlvl{\lvlalt+1} \childof \class\atlvl{\lvlalt}}
			\strat_{\class\atlvl{\lvlalt+1} \vert \class\atlvl{\lvlalt}}
			\log \frac
				{\exp(\score_{\class\atlvl{\lvlalt+1}} / \temp\atlvl{\lvlalt+1})}
				{\exp(\score_{\class\atlvl{\lvlalt}} / \temp\atlvl{\lvlalt+1})}
	\explain{by \eqref{eq:chain-cond}}
	\\
	&= \strat_{\class\atlvl{\lvlalt}}
		\sum_{\class\atlvl{\lvlalt+1} \childof \class\atlvl{\lvlalt}}
			\strat_{\class\atlvl{\lvlalt+1} \vert \class\atlvl{\lvlalt}}
			\score_{\class\atlvl{\lvlalt+1}}
		- \strat_{\class\atlvl{\lvlalt}} \score_{\class\atlvl{\lvlalt}}
	\explain{since $\sum_{\class\atlvl{\lvlalt+1} \childof \class\atlvl{\lvlalt}} \strat_{\class\atlvl{\lvlalt+1} \vert \class\atlvl{\lvlalt}} = 1$}\\
	&= \sum_{\class\atlvl{\lvlalt+1} \childof \class\atlvl{\lvlalt}}
			\strat_{\class\atlvl{\lvlalt+1}}
			\score_{\class\atlvl{\lvlalt+1}}
		- \strat_{\class\atlvl{\lvlalt}} \score_{\class\atlvl{\lvlalt}}
\end{align}
and hence
\begin{align}
\label{eq:ent-cond}
\sum_{\class\atlvl{\lvlalt} \desceq[\lvlalt] \class}
	\hreg(\strat \vert \class\atlvl{\lvlalt})
	= \sum_{\class\atlvl{\lvlalt} \desceq[\lvlalt] \class}
		\bracks*{
			\sum_{\class\atlvl{\lvlalt+1} \childof \class\atlvl{\lvlalt}}
				\strat_{\class\atlvl{\lvlalt+1}}
				\score_{\class\atlvl{\lvlalt+1}}
			- \strat_{\class\atlvl{\lvlalt}} \score_{\class\atlvl{\lvlalt}}
			}
	= \sum_{\class\atlvl{\lvlalt+1} \desceq[\lvlalt+1] \class}
			\strat_{\class\atlvl{\lvlalt+1}}
			\score_{\class\atlvl{\lvlalt+1}}
		- \sum_{\class\atlvl{\lvlalt} \desceq[\lvlalt] \class}
			\strat_{\class\atlvl{\lvlalt}}
			\score_{\class\atlvl{\lvlalt}}.
\end{align}
Thus, telescoping this last releation over $\lvlalt = \lvl,\dotsc,\nLvls$ and invoking \cref{prop:nest2cond}, we obtain:
\begin{align}
\hreg_{\class}(\strat)
	&= \sum_{\classalt \desceq \class} \hreg(\strat \vert \classalt)
		+ \temp\atlvl{\lvlalt} \, \strat_{\class} \cancel{\log\strat_{\class}}
	\explain{by \cref{prop:nest2cond}}
	\\
	&= \sum_{\lvlalt = \lvl}^{\nLvls-1} \sum_{\class\atlvl{\lvlalt} \desceq[\lvlalt] \class}
		\hreg(\strat \vert \class\atlvl{\lvlalt})
	\explain{collect parent classes}
	\\
	&= \sum_{\lvlalt = \lvl}^{\nLvls-1}
		\bracks*{
			\sum_{\class\atlvl{\lvlalt+1} \desceq[\lvlalt+1] \class}
				\strat_{\class\atlvl{\lvlalt+1}}
				\score_{\class\atlvl{\lvlalt+1}}
			- \sum_{\class\atlvl{\lvlalt} \desceq[\lvlalt] \class}
				\strat_{\class\atlvl{\lvlalt}}
				\score_{\class\atlvl{\lvlalt}}
				}
	\explain{by \eqref{eq:ent-cond}}
	\\
	&= \braket{\score}{\strat} - \strat_{\class} \score_{\class}
\end{align}
where, in the second line, we used the fact that the conditional entropy $\hreg(\strat \vert \class\atlvl{\nLvls})$ relative to any childless class $\class\atlvl{\nLvls} \in \classes\atlvl{\nLvls}$ is zero by definition.
Accordingly, substituting back to \eqref{eq:opt-class} we conclude that
\begin{equation}
\val\eqref{eq:opt-class}
	= \braket{\score}{\strat} - \hreg_{\class}(\strat)
	= \strat_{\class} \score_{\class}
	= \score_{\class},
\end{equation}
as claimed.

\para{Step 4: Differential representation of conditional probabilities}

To prove the second part of the proposition, recall that the restricted entropy function $\hreg_{\vert\class}$ is convex, and let
\begin{equation}
\label{eq:hconj}
\hconj_{\vert\class}(\score)
	= \max_{\strat \in \simplex(\elems)}
		\braces{\braket{\score}{\strat} - \hreg_{\vert\class}(\strat)}
\end{equation}
denote its convex conjugate.%
\footnote{Note here that $\hconj_{\vert\class}(\score)$ is bounded from above by the convex conjugate $\hconj_{\class}(\score)$ of $\hreg_{\class}(\strat)$ because the latter does not include the constraint $\supp(\strat) \subseteq \class$.}
By standard results in convex analysis \citep[\eg Theorem 23.5 in][]{Roc70}, $\hconj_{\vert\class}$ is differentiable in $\score$ and we have the Legendre identity:
\begin{equation}
\label{eq:Legendre}
\strat
	= \nabla\hconj_{\vert\class}(\score)
	\iff
\score
	\in \subd\hreg_{\vert\class}(\strat)
	\iff
\strat
	\in \argmax_{\stratalt\in\simplex(\elems)}
		\braces{\braket{\score}{\stratalt} - \hreg_{\vert\class}(\stratalt)}
\end{equation}
Now, by \eqref{eq:prob-anc}, we have $\strat_{\elem} = \choice[\elem \vert \class](\score)$ whenever $\strat$ solves \eqref{eq:opt-class} and hence, by Fermat's rule, whenever $\score - \subd\hreg_{\vert\class}(\strat) \ni 0$.
Our claim then follows by noting that $\hconj_{\vert\class}(\score) = \score_{\class}$ and combining the first and third legs of the equivalence \eqref{eq:Legendre}.
\end{proof}

These properties of the nested entropy function (and its restricted variant) will play a key role in deriving a suitable energy function for the \acl{NEW} algorithm.
We make this precise in \cref{app:regret} below.

\section{Auxiliary bounds and results}
\label{app:aux}

\newmacro{\nps}{\sigma}
\newmacro{\varcst}{\alpha}
\newmacro{\conjlvl}{k}
\newmacro{\rbound}{R}

Throughout this appendix, we assume the following primitives:
\begin{itemize}
\item
A fixed sequence of real numbers $\temp\atlvl{1} \geq \temp\atlvl{2} \geq \cdots \geq \temp\atlvl{\nLvls} > 0$;
all entropy-related objects will be defined relative to this sequence as per the previous section.
\item
A score vector $\score \in \R^{\elems}$ that defines inductively the score $\score_{\class}$ of any class $\class \in \classes$ using \eqref{eq:score}, as well as the associated nested choice probability $\choice(\score)$ as per \eqref{eq:NLC}.
\item
A vector of cost increments $\incr = (\incr_{\class})_{\class\in\classes} \in \R^{\classes}$ that defines an associated \emph{cost vector} $\cost \in \R^{\elems}$ as per \eqref{eq:cost-elem}, viz.
\begin{equation}
\cost_{\elem}
	= \sum_{\class\ni\elem}\incr_{\class}
	\quad
	\text{for all $\elem\in\elems$}.
\end{equation}
\end{itemize}

Moreover, for all $\cost, \score \in \R^{\elems}$, we define the \emph{nested power sum} function $\nps_{\cost,\score} \from \classes \backslash \classes_{\nLvls} \rightarrow \R$ which, to any $\class \in \classes \backslash \classes_{\nLvls}$, associates the real number
\begin{equation}
\nps_{\cost, \score}(\class) 
= \begin{cases}
		\sum\limits_{\elem \childof \class}
			\choice[\elem\vert\class](\score)\exp\parens*{-\cost_{\elem}/\temp\atlvl{\nLvls}}
			&\quad
			\text{if $\attof{\class}=\nLvls - 1$},
		\\
		\sum\limits_{\alt{\class} \childof \class}
			\choice[\alt{\class}\vert\class](\score)
			\nps_{\cost, \score}(\alt{\class})^{\frac{\temp\atlvl{\lvl + 2}}{\temp\atlvl{\lvl + 1}}}
			&\quad
			\text{if $\attof{\class}=\lvl<\nLvls-1$}.
	\end{cases}
\end{equation}

The following lemma links the increments of the conjugate entropy $\hconj$ to the nested power sum defined above:

\begin{lemma}
\label{lem:hconj-and-nps}
For all $\score \in \R^{\elems}$, $\cost \in \R^{\elems}$, we have
\begin{equation}
\label{eq:hconj-and-nps}
\hconj\parens*{\score - \cost}
	= \hconj\parens*{\score} + \temp\atlvl{1} \log\parens*{\nps_{\cost, \score}(\elems)}.
\end{equation}
\end{lemma}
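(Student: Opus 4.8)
The plan is to reduce the whole statement to a single backward induction over the levels of $\struct$, tracking how the propensity score of each class changes when the primitive profile is shifted from $\score$ to $\score - \cost$. The key observation is that, by \eqref{eq:score2conj}, the recursively defined score of a class $\class$ under a given primitive profile is exactly $\hconj_{\vert\class}$ evaluated at that profile, and for the root one has $\hconj = \hconj_{\vert\elems}$. Hence the two quantities $\hconj(\score)$ and $\hconj(\score - \cost)$ appearing in \eqref{eq:hconj-and-nps} are precisely the root scores obtained from the profiles $\score$ and $\score - \cost$, and it suffices to compare them class by class.

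Concretely, I would establish the following refined claim for every non-leaf class $\class$ at level $\lvl = \attof{\class}$:
\begin{equation*}
\hconj_{\vert\class}(\score - \cost)
	= \hconj_{\vert\class}(\score) + \temp\atlvl{\lvl+1} \log \nps_{\cost,\score}(\class).
\end{equation*}
Specializing to $\class = \elems$ (so that $\lvl = 0$, $\temp\atlvl{\lvl+1} = \temp\atlvl{1}$) and using $\hconj = \hconj_{\vert\elems}$ then gives \eqref{eq:hconj-and-nps} at once.

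The base case is level $\lvl = \nLvls - 1$, whose children are the leaves of $\struct$. Here I would invoke the softmax recursion \eqref{eq:conj-recursive} (equivalently \eqref{eq:score}), writing $\exp(\hconj_{\vert\class}(\score)/\temp\atlvl{\nLvls}) = \sum_{\elem \childof \class} \exp(\score_{\elem}/\temp\atlvl{\nLvls})$ and its analogue with $\score_{\elem}$ replaced by $\score_{\elem} - \cost_{\elem}$; dividing the latter by $\exp(\hconj_{\vert\class}(\score)/\temp\atlvl{\nLvls})$ converts each exponential into the conditional probability $\choice[\elem \vert \class](\score)$, so the summand becomes $\choice[\elem \vert \class](\score)\exp(-\cost_{\elem}/\temp\atlvl{\nLvls})$, which is exactly the first branch of the definition of $\nps$. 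For the inductive step at a level $\lvl < \nLvls - 1$, I would again start from \eqref{eq:conj-recursive} applied to $\score - \cost$, substitute the inductive hypothesis for each child $\classalt \childof \class$ (a class at level $\lvl+1$, carrying the factor $\temp\atlvl{\lvl+2}$), and observe that $\exp(\hconj_{\vert\classalt}(\score - \cost)/\temp\atlvl{\lvl+1})$ factors as $\exp(\hconj_{\vert\classalt}(\score)/\temp\atlvl{\lvl+1}) \cdot \nps_{\cost,\score}(\classalt)^{\temp\atlvl{\lvl+2}/\temp\atlvl{\lvl+1}}$. Pulling out the common normalizer $\exp(\hconj_{\vert\class}(\score)/\temp\atlvl{\lvl+1})$ turns the remaining factor $\exp(\hconj_{\vert\classalt}(\score)/\temp\atlvl{\lvl+1})$ into $\choice[\classalt \vert \class](\score)$, and the bracketed sum is recognized as the second branch of $\nps_{\cost,\score}(\class)$, completing the step.

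The only delicate point I expect is the bookkeeping of the nested exponents $\temp\atlvl{\lvl+2}/\temp\atlvl{\lvl+1}$: one must verify that the exponent produced when the child's power sum passes through the level-$\lvl$ softmax matches exactly the exponent hard-wired into the recursive definition of $\nps$, and that the base level $\nLvls - 1$ dovetails correctly with the first recursive step. Everything else is the standard ``pull out the normalizer'' manipulation that rewrites a softmax in terms of choice probabilities, which should be routine once the inductive statement above is fixed.
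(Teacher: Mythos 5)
Your proposal is correct and takes essentially the same route as the paper: your refined class-level claim is exactly \cref{lem:hconj-and-nps-internal} (stated there in exponential rather than logarithmic form), and the paper proves it by the same descending induction \textemdash\ base case at level $\nLvls-1$ via the recursion \eqref{eq:conj-recursive}, inductive step by substituting the child-level identity, raising to the power $\temp\atlvl{\lvl+1}/\temp\atlvl{\lvl}$, and pulling out the normalizer to produce the conditional probabilities $\choice[\classalt\vert\class](\score)$ \textemdash\ before specializing $\class \gets \elems$ to obtain \eqref{eq:hconj-and-nps}. The exponent bookkeeping you flag as the delicate point works out exactly as you anticipate and matches the paper's computation.
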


\Cref{lem:hconj-and-nps} will be proved as a corollary of the more general result below:

\begin{lemma}
\label{lem:hconj-and-nps-internal}
Fix some $\score \in \R^{\elems}$ and $\cost \in \R^{\elems}$.
Then, for all $\class\atlvl{\lvl} \in \classes\atlvl{\lvl}$, $\lvl < \nLvls$,we have
\begin{equation}
\label{eq:hconj-and-nps-internal}
\exp\parens*{\frac{\hconj_{\vert\class\atlvl{\lvl}}\parens*{\score - \cost}}{\temp\atlvl{\lvl+1}}}
	= \exp\parens*{\frac{\hconj_{\vert\class\atlvl{\lvl}}\parens*{\score}}{\temp\atlvl{\lvl+1}}}\nps_{\cost, \score}(\class\atlvl{\lvl})
\end{equation}
\end{lemma}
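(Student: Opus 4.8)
The plan is to reduce the statement to the recursive propagation of propensity scores and then argue by backward induction on the attribute level. Concretely, \cref{prop:mirror} gives $\hconj_{\vert\class}(\score) = \score_{\class}$, so both sides of \eqref{eq:hconj-and-nps-internal} are values propagated up the arborescence $\struct$ from leaf profiles, the left side from $\score - \cost$ and the right side from $\score$. The natural engine for this is the recursive rule \eqref{eq:conj-recursive}, which holds as an identity in the underlying score profile and hence applies verbatim to both $\score$ and $\score - \cost$. I would therefore induct on $\lvl$, descending from the deepest internal level $\lvl = \nLvls - 1$ up to $\lvl = 0$, matching the two-case structure of the definition of $\nps_{\cost,\score}$.

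For the base case $\lvl = \nLvls - 1$, the children of $\class\atlvl{\nLvls-1}$ are the atoms $\elem\in\elems$, for which $\hconj_{\vert\elem}(\score - \cost) = \score_{\elem} - \cost_{\elem}$. Applying \eqref{eq:conj-recursive} at temperature $\temp\atlvl{\nLvls}$ and writing $\exp((\score_{\elem}-\cost_{\elem})/\temp\atlvl{\nLvls}) = \exp(\score_{\elem}/\temp\atlvl{\nLvls})\exp(-\cost_{\elem}/\temp\atlvl{\nLvls})$, I would factor out $\exp(\hconj_{\vert\class}(\score)/\temp\atlvl{\nLvls})$ using the definition \eqref{eq:NLC} of the conditional probabilities $\choice[\elem\vert\class](\score)$. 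What remains is exactly the first branch of $\nps_{\cost,\score}$, establishing the claim at this level.

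For the inductive step, fix $\class\atlvl{\lvl}$ with $\lvl < \nLvls - 1$ and assume the result at level $\lvl+1$, i.e.\ with $\temp\atlvl{\lvl+2}$ in the denominators. Expanding $\exp(\hconj_{\vert\class\atlvl{\lvl}}(\score-\cost)/\temp\atlvl{\lvl+1})$ via \eqref{eq:conj-recursive} yields a sum over children $\child$ of $\exp(\hconj_{\vert\child}(\score-\cost)/\temp\atlvl{\lvl+1})$. Rewriting each summand as $(\exp(\hconj_{\vert\child}(\score-\cost)/\temp\atlvl{\lvl+2}))^{\temp\atlvl{\lvl+2}/\temp\atlvl{\lvl+1}}$ lets me invoke the induction hypothesis to obtain $\exp(\hconj_{\vert\child}(\score)/\temp\atlvl{\lvl+1})\,\nps_{\cost,\score}(\child)^{\temp\atlvl{\lvl+2}/\temp\atlvl{\lvl+1}}$. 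Factoring out $\exp(\hconj_{\vert\class\atlvl{\lvl}}(\score)/\temp\atlvl{\lvl+1})$ through the conditional probabilities $\choice[\child\vert\class\atlvl{\lvl}](\score)$ then leaves precisely the second branch of $\nps_{\cost,\score}$, closing the induction.

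The main obstacle is the bookkeeping of the temperature ratios: each level is natively expressed at its own temperature $\temp\atlvl{\lvl+1}$, whereas the induction hypothesis one level down is stated at $\temp\atlvl{\lvl+2}$, so the decisive manipulation is the exponent change $\exp(\cdot/\temp\atlvl{\lvl+1}) = (\exp(\cdot/\temp\atlvl{\lvl+2}))^{\temp\atlvl{\lvl+2}/\temp\atlvl{\lvl+1}}$. This is exactly what produces the nested power $\nps_{\cost,\score}(\child)^{\temp\atlvl{\lvl+2}/\temp\atlvl{\lvl+1}}$ in the definition of the nested power sum, so getting these exponents and their denominators to line up with the recursion for $\nps_{\cost,\score}$ is where the care is needed; everything else is routine algebra. \Cref{lem:hconj-and-nps} then follows immediately as the case $\class = \elems$ at level $\lvl = 0$.
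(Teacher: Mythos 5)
Your proposal is correct and follows essentially the same route as the paper's proof: a descending induction on the attribute level driven by the recursive rule \eqref{eq:conj-recursive}, with the base case using $\hconj_{\vert\elem}(\score-\cost) = \hconj_{\vert\elem}(\score) - \cost_{\elem}$ at the leaves and the inductive step hinging on exactly the exponent change $\exp(\cdot/\temp\atlvl{\lvl+1}) = \bigl(\exp(\cdot/\temp\atlvl{\lvl+2})\bigr)^{\temp\atlvl{\lvl+2}/\temp\atlvl{\lvl+1}}$ followed by factoring out $\exp\bigl(\hconj_{\vert\class}(\score)/\temp\atlvl{\lvl+1}\bigr)$ through the conditional probabilities $\choice[\child\vert\class](\score)$ to recover the recursion defining $\nps_{\cost,\score}$. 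The only difference is cosmetic indexing (you fix the class at level $\lvl$ and assume the claim at level $\lvl+1$, while the paper fixes level $\lvl-1$ and assumes level $\lvl$), which changes nothing of substance.
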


\begin{proof}[Proof of \cref{lem:hconj-and-nps}]
Simply invoke \cref{lem:hconj-and-nps-internal} with $\class \gets \elems$.
\end{proof}

\begin{proof}[Proof of \cref{lem:hconj-and-nps-internal}]

We proceed by descending induction on $\lvl = \attof{\class}$. 

\para{Base step}
Fix some $\class \in \classes$ with $\attof{\class}=\nLvls-1$.
We then have:
\begin{align}
\exp\parens*{\frac{\hconj_{\vert\class}\parens*{\score - \cost}}{\temp\atlvl{\nLvls}}}
	&= \sum\limits_{\elem \childof \class}
		\exp\parens*{\frac{\hconj_{\vert\elem}\parens*{\score - \cost}}{\temp\atlvl{\nLvls}}}
	\explain{by \cref{eq:conj-recursive}}
	\\
	&= \sum\limits_{\elem \childof \class}
		\exp\parens*{\frac{\hconj_{\vert \elem}\parens*{\score} - \cost_{\elem}}{\temp\atlvl{\nLvls}}} \explain{the \elem 's \ are leaves}
	\\
	&= \sum\limits_{\elem \childof \class}
		\exp\parens*{\frac{\hconj_{\vert \elem}\parens*{\score}}{\temp\atlvl{\nLvls}}}
		\exp\parens*{-\frac{\cost_{\elem}}{\temp\atlvl{\nLvls}}}
	\notag\\
	&= \exp\parens*{\frac{\hconj_{\vert \class} \parens*{\score}}{\temp\atlvl{\nLvls}}}
			\underbrace{\sum\limits_{\elem \childof \class} \left[ \dfrac{\exp\parens*{\frac{\hconj_{\vert \elem}\parens*{\score}}{\temp\atlvl{\nLvls}}}}{\exp\parens*{\frac{\hconj_{\vert \class}\parens*{\score}}{\temp\atlvl{\nLvls}}}} \right]\exp\parens*{-\frac{\cost_{\elem}}{\temp\atlvl{\nLvls}}}}_{= \nps_{\cost, \score}(\class)\text{ by definition}}
	\notag\\
	&= \exp\parens*{\frac{\hconj_{\vert\class}\parens*{\score}}{\temp\atlvl{\nLvls}}}\nps_{\cost, \score}(\class)
\end{align}
with the last equality following from the definition of $\choice[\elem\vert\class]$ via \eqref{eq:NLC} and by the definition of $\nps_{\cost,\score}(\class)$. 
This concludes the start of the induction process.

\para{Induction step}
Fix some $\class \in \classes$ with $\attof{\class}=\lvl-1$, $\lvl < \nLvls$, and suppose that \eqref{eq:hconj-and-nps-internal} holds at level $\lvl$.
We then have:
\begin{align}
	\exp\parens*{\frac{\hconj_{\vert\class}\parens*{\score - \cost}}{\temp\atlvl{\lvl}}} 
	&= \sum\limits_{\classalt \childof \class}
		\exp\parens*{\frac{\hconj_{\vert\classalt}\parens*{\score - \cost}}{\temp\atlvl{\lvl}}} 
	\notag\\ 
	&= \sum\limits_{\classalt \childof \class}
		\exp\parens*{\frac{\hconj_{\vert \classalt}\parens*{\score - \cost}}{\temp\atlvl{\lvl+1}}}^{\frac{\temp\atlvl{\lvl+1}}{\temp\atlvl{\lvl}}}
	\notag\\
	&= \sum\limits_{\classalt \childof \class}
		\bracks*{\exp\parens*{\frac{\hconj_{\vert \classalt}\parens*{\score}}{\temp\atlvl{\lvl+1}}} \nps_{\cost, \score}(\classalt)}^{\frac{\temp\atlvl{\lvl+1}}{\temp\atlvl{\lvl}}}
	\explain{inductive hypothesis}\\
	&= \sum\limits_{\classalt \childof \class}
		\exp\parens*{\frac{\hconj_{\vert \classalt}\parens*{\score}}{\temp\atlvl{\lvl}}} \nps_{\cost, \score}(\classalt)^{\frac{\temp\atlvl{\lvl+1}}{\temp\atlvl{\lvl}}}
	\notag\\
	&= \exp\parens*{\frac{\hconj_{\vert \class}\parens*{\score}}{\temp\atlvl{\lvl}}}
		\underbrace{\sum\limits_{\classalt \childof \class} \left[ \dfrac{ \exp\parens*{\frac{\hconj_{\vert \classalt}\parens*{\score}}{\temp\atlvl{\lvl}}}}{\exp\parens*{\frac{\hconj_{\vert \class}\parens*{\score}}{\temp\atlvl{\lvl}}}} \right] \nps_{\cost, \score}(\classalt)^{\frac{\temp\atlvl{\lvl+1}}{\temp\atlvl{\lvl}}}}_{= \nps_{\cost, \score}(\class)\text{ by definition}}
	\notag\\
	&= \exp\parens*{\frac{\hconj_{\vert\class}\parens*{\score}}{\temp\atlvl{\nLvls}}}\nps_{\cost, \score}(\class)
\end{align}
with the last equality following from the definition of $\choice[\classalt\vert\class]$ and $\nps_{\cost, \score}(\class)$.
This being true for all $\class \in \classes$ with $\attof{\class}=\lvl-1$, the inductive step and \textendash\ a fortiori \textendash\ our proof are complete.
\end{proof}

The next lemma provides an upper bound for $\nps_{\cost,\score}(\elems)$, which will in turn allow us to derive a bound for the increment of $\hconj$.
\begin{lemma}
\label{lem:upperbound-nps}
For $\score \in \R^{\elems}$ and $\cost \in [0,+\infty)^{\elems}$, we have:
\begin{equation}
\label{eq:upperbound-nps}
\nps_{\cost, \score}(\elems)
	\leq 1
	- \frac{1}{\temp\atlvl{1}}
		\bracks*{
		\sum\limits_{\elem\in\elems}\choice[\elem](\score)\cost_{\elem}
		- \frac{1}{2\temp\atlvl{\nLvls}}
		\sum\limits_{\elem\in\elems}\choice[\elem](\score)\cost_{\elem}^{2}
		}.
\end{equation}
\end{lemma}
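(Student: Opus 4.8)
The plan is to prove a sharper, \emph{conditional} version of \eqref{eq:upperbound-nps} by descending induction on the attribute level, and then recover the stated bound by specializing to the root $\elems$. Concretely, I would show that for every $\class\in\classes\atlvl{\lvl}$ with $\lvl<\nLvls$,
\begin{equation}
\label{eq:nps-cond-claim}
\nps_{\cost,\score}(\class)
  \leq 1 - \frac{1}{\temp\atlvl{\lvl+1}}
    \bracks*{
      \sum_{\elem\in\class}\choice[\elem\vert\class](\score)\cost_{\elem}
      - \frac{1}{2\temp\atlvl{\nLvls}}\sum_{\elem\in\class}\choice[\elem\vert\class](\score)\cost_{\elem}^{2}
    }.
\end{equation}
Since $\choice[\elem\vert\elems]=\choice[\elem]$ and the root $\elems$ sits at level $\lvl=0$ (so that $\temp\atlvl{\lvl+1}=\temp\atlvl{1}$), specializing \eqref{eq:nps-cond-claim} to $\class\gets\elems$ is exactly \eqref{eq:upperbound-nps}. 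Along the way I would also record, by the same induction, the trivial fact that $\nps_{\cost,\score}(\class)\geq0$ for every $\class$ (a sum of nonnegative weights times nonnegative powers of nonnegative quantities), since nonnegativity is what legitimizes the estimates below.

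For the \textbf{base case} $\attof{\class}=\nLvls-1$, the children of $\class$ are precisely the leaves $\elem\in\class$, so $\sum_{\elem\childof\class}\choice[\elem\vert\class](\score)=1$. I would apply the elementary inequality $\exp(-x)\leq 1-x+x^{2}/2$, valid for $x\geq0$, to each term $\exp(-\cost_{\elem}/\temp\atlvl{\nLvls})$ (here $\cost_{\elem}\geq0$) and collect terms; this reproduces \eqref{eq:nps-cond-claim} at level $\nLvls-1$ verbatim, with $\temp\atlvl{\lvl+1}=\temp\atlvl{\nLvls}$.

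For the \textbf{inductive step}, fix $\class\in\classes\atlvl{\lvl}$ with $\lvl<\nLvls-1$ and assume \eqref{eq:nps-cond-claim} for every child $\classalt\childof\class$ (at level $\lvl+1$). Write $Q(\classalt)=\sum_{\elem\in\classalt}\choice[\elem\vert\classalt](\score)\cost_{\elem}-\tfrac{1}{2\temp\atlvl{\nLvls}}\sum_{\elem\in\classalt}\choice[\elem\vert\classalt](\score)\cost_{\elem}^{2}$, so that the hypothesis reads $0\leq\nps_{\cost,\score}(\classalt)\leq 1-Q(\classalt)/\temp\atlvl{\lvl+2}$; pairing the upper bound with nonnegativity shows in particular that $1-Q(\classalt)/\temp\atlvl{\lvl+2}\geq0$. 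Setting $a=\temp\atlvl{\lvl+2}/\temp\atlvl{\lvl+1}\in(0,1]$ (using that $\temp\atlvl{\lvl}$ is nonincreasing), monotonicity of $t\mapsto t^{a}$ on $[0,\infty)$ together with Bernoulli's inequality $(1+x)^{a}\leq1+ax$ for $a\in(0,1]$, $x\geq-1$, yields
\begin{align}
\nps_{\cost,\score}(\classalt)^{a}
  &\leq \parens*{1-\tfrac{1}{\temp\atlvl{\lvl+2}}Q(\classalt)}^{a}
  \notag\\
  &\leq 1 - \tfrac{a}{\temp\atlvl{\lvl+2}}Q(\classalt)
  = 1 - \tfrac{1}{\temp\atlvl{\lvl+1}}Q(\classalt),
\end{align}
the crucial cancellation of $\temp\atlvl{\lvl+2}$ being exactly what the exponent $a$ is engineered to produce. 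Plugging this into the recursive definition of $\nps_{\cost,\score}(\class)$ and using $\sum_{\classalt\childof\class}\choice[\classalt\vert\class](\score)=1$ gives $\nps_{\cost,\score}(\class)\leq 1-\tfrac{1}{\temp\atlvl{\lvl+1}}\sum_{\classalt\childof\class}\choice[\classalt\vert\class](\score)Q(\classalt)$. Finally, the chain rule for conditional choice probabilities, $\choice[\classalt\vert\class](\score)\,\choice[\elem\vert\classalt](\score)=\choice[\elem\vert\class](\score)$ for $\elem\in\classalt\childof\class$, combined with the fact that the children of $\class$ partition its leaves ($\sum_{\classalt\childof\class}\sum_{\elem\in\classalt}=\sum_{\elem\in\class}$), collapses $\sum_{\classalt\childof\class}\choice[\classalt\vert\class](\score)Q(\classalt)$ into $Q(\class)$, which is \eqref{eq:nps-cond-claim} at level $\lvl$ and closes the induction.

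The main obstacle is bookkeeping rather than conceptual: one must verify $1-Q(\classalt)/\temp\atlvl{\lvl+2}\geq0$ so that Bernoulli applies — which is where nonnegativity of $\nps$ is used — and one must track the exponents $\temp\atlvl{\lvl+2}/\temp\atlvl{\lvl+1}$ carefully so that the temperature factors telescope to leave a clean $1/\temp\atlvl{\lvl+1}$ at each level. The chain-rule/partition identity for $\choice[\cdot\vert\cdot]$ is the other delicate point, since it is precisely what lets the per-child quadratic corrections aggregate into the single global quadratic term appearing in \eqref{eq:upperbound-nps}.
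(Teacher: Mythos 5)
Your proposal is correct and takes essentially the same route as the paper: the paper likewise proves the conditional, class-level bound (its \cref{lem:upperbound-nps-internal}) by descending induction, using $e^{-x}\leq 1-x+x^{2}/2$ for $x\geq0$ at the leaves and the Bernoulli-type inequality $(1+x)^{\beta}\leq 1+\beta x$, $\beta=\temp\atlvl{\lvl+1}/\temp\atlvl{\lvl}\leq1$, together with the chain rule $\choice[\classalt\vert\class](\score)\,\choice[\elem\vert\classalt](\score)=\choice[\elem\vert\class](\score)$ to collapse the per-child terms, before specializing $\class\gets\elems$. Your explicit check that $\nps_{\cost,\score}(\classalt)\geq0$, and hence $1-Q(\classalt)/\temp\atlvl{\lvl+2}\geq0$, so that Bernoulli's hypothesis $x\geq-1$ holds, is a detail the paper leaves implicit and is a welcome tightening rather than a divergence.
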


As in the case of \ref{lem:hconj-and-nps}, \cref{lem:upperbound-nps} will follow as a special case of the more general, class-based result below:

\begin{lemma}
\label{lem:upperbound-nps-internal}
Fix some $\score \in \R^{\elems}$ and $\cost \in \R_{+}^{\elems}$.
Then, for all $\class\atlvl{\lvl} \in \classes\atlvl{\lvl}$, $\lvl < \nLvls$,we have
\begin{equation}
\label{eq:upperbound-nps-internal}
\nps_{\cost, \score}(\class\atlvl{\lvl})
	\leq 1
	- \frac{1}{\temp\atlvl{\lvl+1}}
		\bracks*{
		\sum\limits_{\elem\in\class\atlvl{\lvl}}\choice[\elem\vert\class\atlvl{\lvl}](\score)\cost_{\elem}
		- \frac{1}{2\temp\atlvl{\nLvls}}
		\sum\limits_{\elem\in\class\atlvl{\lvl}}\choice[\elem\vert\class\atlvl{\lvl}](\score)\cost_{\elem}^{2}
		},
\end{equation}
\end{lemma}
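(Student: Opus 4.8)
The plan is to mirror the descending induction on $\lvl = \attof{\class}$ that drives the proof of \cref{lem:hconj-and-nps-internal}, peeling off one attribute layer at a time from the leaves upward, and reading off \cref{lem:upperbound-nps} at the end by taking $\class\atlvl{\lvl} \gets \elems$. For the base step, fix $\class$ with $\attof{\class} = \nLvls - 1$, so that $\nps_{\cost,\score}(\class) = \sum_{\elem\childof\class}\choice[\elem\vert\class](\score)\exp(-\cost_{\elem}/\temp\atlvl{\nLvls})$ is a single-layer average. Since $\cost_{\elem} \geq 0$, I would apply the elementary estimate $\exp(-x) \leq 1 - x + x^{2}/2$ (valid for all $x \geq 0$) termwise and use that the conditional probabilities $\choice[\elem\vert\class](\score)$ sum to $1$ over the children of $\class$; this yields the asserted bound directly, with $\temp\atlvl{\lvl+1} = \temp\atlvl{\nLvls}$.

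For the induction step, fix $\class$ with $\attof{\class} = \lvl - 1 < \nLvls - 1$ and assume the bound holds for every child $\child \childof \class$ (all sitting at level $\lvl$). Abbreviate the bracketed quantity attached to a class by
\[
B_{\child} \defeq \sum_{\elem\in\child}\choice[\elem\vert\child](\score)\cost_{\elem} - \frac{1}{2\temp\atlvl{\nLvls}}\sum_{\elem\in\child}\choice[\elem\vert\child](\score)\cost_{\elem}^{2},
\]
so the inductive hypothesis reads $\nps_{\cost,\score}(\child) \leq 1 - B_{\child}/\temp\atlvl{\lvl+1}$. The recursion then raises this to the exponent $\rho = \temp\atlvl{\lvl+1}/\temp\atlvl{\lvl}$, which lies in $(0,1]$ precisely because the uncertainty parameters are nonincreasing. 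The crux is the bound $\nps_{\cost,\score}(\child)^{\rho} \leq 1 - B_{\child}/\temp\atlvl{\lvl}$. To get it, I would first observe that $\nps_{\cost,\score}$ is strictly positive everywhere (each layer is a convex combination of positive quantities, by induction), so the hypothesis forces $1 - B_{\child}/\temp\atlvl{\lvl+1} > 0$ without any extra work; then monotonicity of $u \mapsto u^{\rho}$ together with the concave Bernoulli inequality $(1+a)^{\rho} \leq 1 + \rho a$ (applied with $a = -B_{\child}/\temp\atlvl{\lvl+1}$, so that $\rho a = -B_{\child}/\temp\atlvl{\lvl}$) gives the claim.

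Substituting this into the recursive definition of $\nps_{\cost,\score}(\class)$ and using $\sum_{\child\childof\class}\choice[\child\vert\class](\score) = 1$ once more reduces everything to the bookkeeping identity $\sum_{\child\childof\class}\choice[\child\vert\class](\score)\,B_{\child} = B_{\class}$. This follows from the chain rule for nested logit probabilities, namely $\choice[\child\vert\class](\score)\,\choice[\elem\vert\child](\score) = \choice[\elem\vert\class](\score)$ for $\elem\in\child\childof\class$ (a consequence of \eqref{eq:NLC}), combined with the fact that the children of $\class$ partition $\class$, so that $\sum_{\child\childof\class}\sum_{\elem\in\child} = \sum_{\elem\in\class}$. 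As $\attof{\class} = \lvl - 1$, the resulting estimate $\nps_{\cost,\score}(\class) \leq 1 - B_{\class}/\temp\atlvl{\lvl}$ is exactly the assertion at level $\lvl - 1$, closing the induction.

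The step I expect to be the main obstacle is the power bound in the induction step: correctly harnessing $\rho \leq 1$ (this is where the monotonicity $\temp\atlvl{1} \geq \dotsm \geq \temp\atlvl{\nLvls}$ is indispensable) and noticing that the inductive positivity of $\nps_{\cost,\score}$ automatically furnishes the domain condition $1 - B_{\child}/\temp\atlvl{\lvl+1} > 0$ required to invoke Bernoulli's inequality. Without this observation one would seem to need an a priori control on $B_{\child}$, which is not available since the quadratic correction makes $B_{\child}$ potentially large when $\temp\atlvl{\nLvls}$ is small.
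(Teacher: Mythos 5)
Your proposal is correct and takes essentially the same route as the paper's proof: a descending induction with the termwise bound $e^{-x} \leq 1 - x + x^{2}/2$ at the base level $\nLvls-1$, the Bernoulli inequality $(1+a)^{\rho} \leq 1 + \rho a$ with $\rho = \temp\atlvl{\lvl+1}/\temp\atlvl{\lvl} \leq 1$ in the inductive step, and the chain rule $\choice[\child\vert\class](\score)\,\choice[\elem\vert\child](\score) = \choice[\elem\vert\class](\score)$ for the final bookkeeping. If anything, your write-up is slightly more careful than the paper's, which substitutes the inductive hypothesis inside the power without remarking on the monotonicity of $u \mapsto u^{\rho}$ and never verifies the domain condition $1 - B_{\child}/\temp\atlvl{\lvl+1} \geq 0$ required by Bernoulli's inequality \textendash\ your observation that $\nps_{\cost,\score} > 0$ supplies exactly these missing details.
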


\begin{proof}[Proof of \cref{lem:upperbound-nps}]
Simply invoke \cref{lem:upperbound-nps-internal} with $\class \gets \elems$.
\end{proof}

\begin{proof}[Proof of \cref{lem:upperbound-nps-internal}]

We proceed again by descending induction on $\lvl = \attof{\class}$. 

\para{Base step}
Fix some $\class \in \classes$ with $\attof{\class}=\nLvls-1$.
We then have:
\PM{I think there was an indexing problem here: the primes $'$ went to the costs instead of the classes.
I fixed, but please check?}
\begin{align}
\nps_{\cost, \score}(\class)
	&= \sum\limits_{\classalt \childof \class} \choice[\classalt\vert\class] (\score) \exp(-\frac{\cost_{\classalt}}{\temp\atlvl{\nLvls}}) 
	\notag\\
	&\leq \sum\limits_{\classalt \childof \class}
		\choice[\classalt\vert\class] (\score) (1- \frac{\cost_{\classalt}}{\temp\atlvl{\nLvls}}
		\frac{\cost_{\classalt}^{2}}{2\temp\atlvl{\nLvls}^{2}})
	\explain{ $e^{-x} \leq 1 - x + x^{2}/2$ for $x \geq 0 $}\\
	&=1
	- \frac{1}{\temp\atlvl{\nLvls}}
		\bracks*{
		\sum\limits_{\classalt \childof \class} \choice[\classalt\vert\class] (\score) \cost_{\classalt}
		- \frac{1}{2\temp\atlvl{\nLvls}} \sum\limits_{\classalt \childof \class} \choice[\classalt\vert\class] (\score) \cost_{\classalt}^{2}
		}
	\notag\\
	&=1
	- \frac{1}{\temp_{(\nLvls-1)+1}}
		\bracks*{
		\sum\limits_{\elem \childof \class} \choice[\elem\vert\class] (\score) \cost_{\elem}
		- \frac{1}{2\temp\atlvl{\nLvls}} \sum\limits_{\elem \childof \class} \choice[\elem\vert\class] (\score) \cost_{\elem}^{2}
		}
\end{align}
so the initialization of the induction process is complete.

\para{Induction step}
Fix some $\class \in \classes$ with $\attof{\class}=\lvl-1$, $\lvl < \nLvls$, and suppose that \eqref{eq:upperbound-nps-internal} holds at level $\lvl$.
We then have:
\begin{align}
\nps_{\cost, \score}(\class)
	&= \sum\limits_{\classalt \childof \class}
		\choice[\classalt\vert\class] (\score) \nps_{\cost, \score}(\classalt)^{\frac{\temp_{\lvl+1}}{\temp_{\lvl}}}
	\notag\\
	&= \sum\limits_{\classalt \childof \class}
		\choice[\classalt\vert\class] (\score)
		\bracks*{1 + \frac{1}{\temp\atlvl{\lvl+1}} \left(-\sum\limits_{\elem \childof \classalt} \choice[\elem\vert\classalt] (\score) \cost_{\elem} + \frac{1}{2\temp\atlvl{\nLvls}} \sum\limits_{\elem \childof \classalt} \choice[\elem\vert\classalt] (\score) \cost_{\elem}^{2}  \right)
		}^{\frac{\temp\atlvl{\lvl+1}}{\temp\atlvl{\lvl}}}
	\explain{inductive hypothesis}\\
	&\leq \sum\limits_{\classalt \childof \class} \choice[\classalt\vert\class] (\score) \left[ 1 + \frac{1}{\temp\atlvl{\lvl}} \left(-\sum\limits_{\elem \childof \classalt} \choice[\elem\vert\classalt] (\score) \cost_{\elem} + \frac{1}{2\temp\atlvl{\nLvls}} \sum\limits_{\elem \childof \classalt} \choice[\elem\vert\classalt] (\score) \cost_{\elem}^{2}  \right) \right]
	\explain{$(1+x)^{\beta} \leq 1 + \beta x$ for $\beta \leq 1$}
	\\
	&= 1 + \frac{1}{\temp\atlvl{\lvl}} \left[ -\sum\limits_{\classalt \childof \class} \sum\limits_{\elem \childof \classalt} \choice[\elem\vert\classalt](\score) \choice[\classalt\vert\class](\score) \cost_{\elem} + \frac{1}{2\temp\atlvl{\nLvls}} \sum\limits_{\classalt \childof \class} \sum\limits_{\elem \childof \classalt} \choice[\elem\vert\classalt](\score) \choice[\classalt\vert\class](\score) \cost_{\elem}^{2} \right]
	\\
	&= 1 + \frac{1}{\temp\atlvl{(\lvl-1)+1}} \left[ \sum\limits_{\elem \childof \class} \choice[\elem\vert\class] (\score) \cost_{\elem} + \frac{1}{2\temp\atlvl{\nLvls}} \sum\limits_{\elem \childof \class} \choice[\elem\vert\class] (\score) \cost_{\elem}^{2} \right]
\end{align}

This being true for all $\class \in \classes$ s.t. $\attof{\class}=\lvl-1$, the induction step and the proof of our assertion are complete.
\end{proof}

With all this in hand, we are now in a position to upper bound the increments of the conjugate nested entropy $\hconj$.
\TR{negative increment ?}
\PM{Just ``increments'' I think is fine.}

\begin{proposition}
\label{prop:hconj-increment-bound}
For $\score \in \R^{\elems}$
and $\cost \in [0,+\infty)^{\elems}$, we have:
\begin{equation}
	\label{eq:hconj-increment-bound}
	\hconj(\score - \cost) - \hconj(\score) \leq -\braket{\choice[](\score)}{\cost}
	+ \frac{1}{2\temp\atlvl{\nLvls}}
	\sum\limits_{\elem\in\elems}\choice[\elem](\score)\cost_{\elem}^{2}.
\end{equation}
\end{proposition}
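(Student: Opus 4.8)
The plan is to chain together the two preceding lemmas with a single elementary inequality for the logarithm. First I would invoke \cref{lem:hconj-and-nps} directly, which rewrites the increment of the conjugate entropy \emph{exactly} as
\begin{equation*}
\hconj(\score - \cost) - \hconj(\score)
	= \temp\atlvl{1} \log\parens*{\nps_{\cost,\score}(\elems)}.
\end{equation*}
Since $\cost \in [0,+\infty)^{\elems}$, the nested power sum $\nps_{\cost,\score}(\elems)$ is strictly positive: its base case is a convex combination of the positive weights $\exp(-\cost_{\elem}/\temp\atlvl{\nLvls})$, and raising positive numbers to positive powers preserves positivity through the recursion defining $\nps$. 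Hence the logarithm above is well defined.

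Next I would apply the elementary bound $\log u \leq u - 1$, valid for every $u > 0$, to get $\temp\atlvl{1}\log\parens*{\nps_{\cost,\score}(\elems)} \leq \temp\atlvl{1}\bracks*{\nps_{\cost,\score}(\elems) - 1}$. Because $\temp\atlvl{1} > 0$, I can then substitute the upper bound for $\nps_{\cost,\score}(\elems)$ supplied by \cref{lem:upperbound-nps}; crucially, the prefactor $\temp\atlvl{1}$ exactly cancels the $1/\temp\atlvl{1}$ appearing in that bound, leaving
\begin{equation*}
\temp\atlvl{1}\bracks*{\nps_{\cost,\score}(\elems) - 1}
	\leq -\sum_{\elem\in\elems}\choice[\elem](\score)\cost_{\elem}
		+ \frac{1}{2\temp\atlvl{\nLvls}}\sum_{\elem\in\elems}\choice[\elem](\score)\cost_{\elem}^{2}.
\end{equation*}
Recognizing $\sum_{\elem\in\elems}\choice[\elem](\score)\cost_{\elem} = \braket{\choice[](\score)}{\cost}$ then yields the claimed inequality \eqref{eq:hconj-increment-bound}.

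At this level the argument is essentially bookkeeping, so there is no serious obstacle once the two lemmas are in hand. The only points that warrant care are ensuring that the two inequalities point the same way and survive multiplication by the positive constant $\temp\atlvl{1}$, and the (easily verified) positivity of $\nps_{\cost,\score}(\elems)$ that legitimizes taking its logarithm. All the genuine difficulty has already been absorbed into the descending-induction proofs of \cref{lem:hconj-and-nps-internal,lem:upperbound-nps-internal}, where the nested tree structure and the layered error propagation across attributes are handled; this proposition simply specializes those class-level identities to the root $\elems$ and packages them into the form needed for the energy inequality of the regret analysis.
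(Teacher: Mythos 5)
Your proof is correct and follows exactly the paper's own argument: the paper's proof is the one-line observation that \cref{lem:hconj-and-nps,lem:upperbound-nps} combined with the elementary logarithm bound (stated there, with a typo, as ``$\log x \leq 1+x$'' where $\log x \leq x-1$ is meant and needed for the $\temp\atlvl{1}$ cancellation) directly deliver the assertion. Your write-up simply makes the same chaining explicit, including the correct form of the inequality and the positivity check legitimizing the logarithm.
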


\begin{proof}
Using \cref{lem:hconj-and-nps,lem:upperbound-nps} and the concavity inequality $\log x\leq 1 + x$ directly delivers our assertion.
\end{proof}

\begin{remark}
It is useful to note that, given a cost increment vector $\incr \in \R^\classes$ with associated aggregate costs given by $\cost \in \R^{\elems}$ we have:
\TR{if we need this remark somewhere, it might be to show that $\exof*{Z_t|\mathcal{F}_t}=0$ at the end of the proof of Proposition 2? This could be expressed as a lemma if needed}
\PM{I think we don't need to say more\dots}
\begin{align*}
	\braket{\choice[](\score)}{\cost} &= \sum\limits_{\elem \in \elems}\choice[\elem](\score)\cost_{\elem} \\
	&= \sum\limits_{\elem \in \elems}\choice[\elem](\score)\sum\limits_{\class\ni\elem}\incr_{\class} \\
	&= \sum\limits_{\elem \in \elems}\choice[\elem](\score)\sum\limits_{\class\in\classes}\incr_{\class} \one_{\elem\in\class} \\
	&= \sum\limits_{\class \in \classes} \bracks*{\sum\limits_{\elem\in\elems} \choice[\elem](\score) \one_{\elem\in\class}}\incr_{\class} \\
	&= \sum\limits_{\class \in \classes} \choice[\class](\score)\incr_{\class}.
	\qedhere
\end{align*}
\end{remark}

We are finally in a position to prove the basic properties of the \ac{NIWE} estimator, which we restate below for convenience:

\NIWE*

\begin{proof}
Fix some $\class \in \classes$ with $\attof{\class} = \lvl \in \{1,\dotsc,\nLvls\}$ and lineage $\lineage{\class}$.
We will now prove both properties of the \eqref{eq:NIWE} estimator.
\begin{enumerate}
[label={\itshape\bfseries Part \arabic*.}]
\item 
We begin by showing that the estimator \eqref{eq:NIWE} is unbiased.
Indeed, we have:
\begin{align}
\exof*{\incrmodel_{\class}}
	&= \exof*{\frac
			{\oneof[\big]{\class\atlvl{\lvl} = \est\class\atlvl{\lvl},\dotsc,\class\atlvl{1} = \est\class\atlvl{1}}}
			{\strat_{\class\atlvl{\lvl} \vert \class\atlvl{\lvl - 1}} \!\dotsm \strat_{\class\atlvl{2} \vert \class\atlvl{1}} \strat_{\class\atlvl{1}}}
			\incr_{\class\atlvl{\lvl}}}
	= \exof*{\frac
			{\oneof[\big]{\class = \est\class}}
			{\strat_{\class}}
			\incr_{\class}}
	\explain{Rewriting \eqref{eq:NIWE}}\\
	&= \frac
			{\incr_{\class}}
			{\strat_{\class}}
			\underbrace{\exof*{\oneof[\big]{\class = \est\class}}}_{\strat_{\class}}
	= \incr_{\class}.
\end{align}

\item
We now turn to the proof of the importance-weighted mean-square bound of the estimator \eqref{eq:NIWE}.
In this case, for any $\class\in\classes$, we have:
\begin{align}
\exof*{\strat_{\class}\incrmodel_{\class}^{2}}
	&=\strat_{\class} \exof*{\incrmodel_{\class}^{2}} 
	=\strat_{\class}
		\exof*{\parens*{\frac
			{\oneof[\big]{\class = \est\class}}
			{\strat_{\class}}
			\incr_{\class\atlvl{\lvl}}}^{2}}
	\notag\\
	&=\strat_{\class}\frac{\incr_{\class\atlvl{\lvl}}^{2}}{\strat_{\class}^{2}}
	\exof*{\oneof[\big]{\class = \est\class}}
	= \incr_{\class\atlvl{\lvl}}^{2}
	\explain{because $\exof[\big]{\oneof[\big]{\class = \est\class}} = \strat_{\class}$}\\
\label{eq:order-2-incr-bound}
	&\leq \range_{\class}^{2}.
	\end{align}

\end{enumerate}

We are left to derive the bound for the aggregate cost estimator \eqref{eq:NIWE-cost}, viz.
\TR{total? global? or just `cost estimator'?}
\PM{Just wrote ``aggregate'', WDYT?}
\begin{equation}
\model_{\elem}
	= \sum\limits_{\class\ni\elem} \incrmodel_{\class}.
\end{equation}
With this in mind, we can write:
\begin{align}
\sum_{\elem\in\elems}\strat_{\elem} \model_{\elem}^{2}
	&= \sum_{\elem\in\elems}\strat_{\elem}
		\parens*{\sum_{\class\ni\elem} \incrmodel_{\class}}^{2}
	\notag\\
	&= \sum_{\elem\in\elems} \strat_{\elem}
	\bracks*{
		\sum_{\class\ni\elem}\incrmodel_{\class}^{2}
		+ 2\sum_{\alt{\class}\ni\elem}\sum_{\class\anc\alt{\class}}\incrmodel_{\class}\incrmodel_{\alt{\class}}}
	\notag\\
	&= \sum_{\elem\in\elems} \sum_{\class\in\classes}
		\strat_{\elem}\incrmodel_{\class}^{2}\one_{\elem\in\class}
	+ 2 \sum_{\elem\in\elems} \sum_{\alt{\class}\in\classes} \sum_{\class\anc\alt{\class}}
		\strat_{\elem} \incrmodel_{\class} \incrmodel_{\alt{\class}} \one_{\elem\in\alt{\class}}
	\notag\\
	&= \sum_{\class\in\classes}
		\incrmodel_{\class}^{2}
		\underbrace{\sum_{\elem\in\elems}\strat_{\elem} \one_{\elem\in\class}}_{\strat_{\class}}
	+ 2 \sum_{\alt{\class}\in\classes} \sum_{\class\anc\alt{\class}}
		\incrmodel_{\class} \incrmodel_{\alt{\class}}
		\underbrace{\sum_{\elem\in\elems}\strat_{\elem}\one_{\elem\in\alt{\class}}}_{\strat_{\alt{\class}}}
	\notag\\
\label{eq:order2-derivation-1}	
	&= \sum_{\class\in\classes}
		\strat_{\class}\incrmodel_{\class}^{2}
	+ 2 \sum_{\alt{\class}\in\classes} \sum_{\class\anc\alt{\class}}
		\strat_{\alt{\class}} \incrmodel_{\class}\incrmodel_{\alt{\class}}.
\end{align}
Now, decomposing the above sums attribute-by-attribute and taking expectations in \eqref{eq:order2-derivation-1}, we get:
\begin{equation}
\label{eq:order2-level-by-level}
\exof*{\sum_{\elem\in\elems}\strat_{\elem} \model_{\elem}^{2}}
	= \sum_{\lvl=1}^{\nLvls} \sum_{\class\atlvl{\lvl}\in\classes\atlvl{\lvl}}\strat_{\class\atlvl{\lvl}}\exof*{\incrmodel_{\class\atlvl{\lvl}}^{2}}
	+2 \sum_{1\leq \lvl < \alt{\lvl} \leq \nLvls}
		\sum_{\substack{
			\class\atlvl{\lvl}\in\classes\atlvl{\lvl}
			\\\class\atlvl{\alt{\lvl}}\desc_{\alt{\lvl}}\classes\atlvl{\lvl}}}
	\strat_{\class\atlvl{\alt{\lvl}}}\exof*{\incrmodel_{\class\atlvl{\lvl}}\incrmodel_{\class\atlvl{\alt{\lvl}}}}.
\end{equation}
The first term in \eqref{eq:order2-level-by-level} can simply be bounded using \eqref{eq:order-2-incr-bound}. Indeed:
\begin{equation}
\label{eq:order2-bound-first-term}
    \sum_{\lvl=1}^{\nLvls}\sum_{\class\atlvl{\lvl}\in\classes\atlvl{\lvl}}\strat_{\class\atlvl{\lvl}}\exof*{\incrmodel_{\class\atlvl{\lvl}}^{2}} 
    \leq \sum_{\lvl=1}^{\nLvls} \sum_{\class\atlvl{\lvl}\in\classes\atlvl{\lvl}} \range^{2}_{\class\atlvl{\lvl}}
    = \sum_{\lvl=1}^{\nLvls} \nClasses\atlvl{\lvl} \bar{\range}^{2}_{\lvl}.
\end{equation}
with $\bar{\range}_{\lvl} = \sqrt{\frac{1}{\nClasses\atlvl{\lvl}} \sum_{\class\atlvl{\lvl} \in \classes\atlvl{\lvl}} \range^{2}_{\class\atlvl{\lvl}}}$ for any $\lvl = 1, \dotsc, \nLvls$.

We now turn to the second term in \eqref{eq:order2-level-by-level}. Let $\{\epsilon_{\lvl, \alt{\lvl}}\}_{1\leq\alt{\lvl} < \lvl\leq\nLvls}$ be any fixed sequence of positive numbers. For any $\lvl, \alt{\lvl} \in \{1,\dotsc,\nLvls\}$ and any $\class\atlvl{\lvl} \in \classes\atlvl{\lvl}$ and $\class\atlvl{\alt{\lvl}} \in \classes\atlvl{\alt{\lvl}}$, the Peter-Paul inequality yields:
\begin{align}
\label{eq:peter-paul-incrmodels}
    2\incrmodel_{\class\atlvl{\alt{\lvl}}}\incrmodel_{\class\atlvl{\lvl}}
    \leq \frac{1}{\epsilon_{\lvl, \alt{\lvl}}} \incrmodel_{\class\atlvl{\alt{\lvl}}}^{2}
    + \epsilon_{\lvl, \alt{\lvl}} \incrmodel_{\class\atlvl{\lvl}}^{2}
\end{align}    

Injecting \eqref{eq:peter-paul-incrmodels} into the second term of \eqref{eq:order2-level-by-level} yields:
\begin{align}
\notag
    2\sum_{1\leq \lvl < \alt{\lvl} \leq \nLvls} &\sum_{\substack{
			\class\atlvl{\lvl}\in\classes\atlvl{\lvl}
	\\
\class\atlvl{\alt{\lvl}}\desc_{\alt{\lvl}}\classes\atlvl{\lvl}}} 
	\strat_{\class\atlvl{\alt{\lvl}}}\exof*{\incrmodel_{\class\atlvl{\lvl}}\incrmodel_{\class\atlvl{\alt{\lvl}}}}
	\\
\notag
	&\leq \sum_{1\leq \lvl < \alt{\lvl} \leq \nLvls} \sum_{\substack{
			\class\atlvl{\lvl}\in\classes\atlvl{\lvl}
	\\
\class\atlvl{\alt{\lvl}}\desc_{\alt{\lvl}}\classes\atlvl{\lvl}}} 
	\strat_{\class\atlvl{\alt{\lvl}}}\parens*{\frac{1}{\epsilon_{\lvl, \alt{\lvl}}}\exof*{\incrmodel_{\class\atlvl{\alt{\lvl}}}^{2}} 
    + \epsilon_{\lvl, \alt{\lvl}} \exof*{\incrmodel_{\class\atlvl{\lvl}}^{2}}}
    \\
\notag
    &= \sum_{1\leq \lvl < \alt{\lvl} \leq \nLvls}
    \frac{1}{\epsilon_{\lvl, \alt{\lvl}}}
    \sum_{\substack{
			\class\atlvl{\lvl}\in\classes\atlvl{\lvl}
	\\
\class\atlvl{\alt{\lvl}}\desc_{\alt{\lvl}}\classes\atlvl{\lvl}}}
    \strat_{\class\atlvl{\alt{\lvl}}}\exof*{\incrmodel_{\class\atlvl{\alt{\lvl}}}^{2}}
    + \sum_{1\leq \lvl < \alt{\lvl} \leq \nLvls}
     \epsilon_{\lvl, \alt{\lvl}}
     \sum_{\substack{
			\class\atlvl{\lvl}\in\classes\atlvl{\lvl}
	\\
\class\atlvl{\alt{\lvl}}\desc_{\alt{\lvl}}\classes\atlvl{\lvl}}}
    \strat_{\class\atlvl{\alt{\lvl}}}\exof*{\incrmodel_{\class\atlvl{\lvl}}^{2}}
    \\
\notag
    &= \sum_{1\leq \lvl < \alt{\lvl} \leq \nLvls}
    \frac{1}{\epsilon_{\lvl, \alt{\lvl}}}
    \sum_{\substack{
			\class\atlvl{\lvl}\in\classes\atlvl{\lvl}
	\\
\class\atlvl{\alt{\lvl}}\desc_{\alt{\lvl}}\classes\atlvl{\lvl}}}
    \strat_{\class\atlvl{\alt{\lvl}}}\exof*{\incrmodel_{\class\atlvl{\alt{\lvl}}}^{2}}
    + \sum_{1\leq \lvl < \alt{\lvl} \leq \nLvls}
     \epsilon_{\lvl, \alt{\lvl}}
     \sum_{\class\atlvl{\lvl}\in\classes\atlvl{\lvl}}
     \exof*{\incrmodel_{\class\atlvl{\lvl}}^{2}}
     \underbrace{\sum_{\class\atlvl{\alt{\lvl}}\desc_{\alt{\lvl}}\classes\atlvl{\lvl}} \strat_{\class\atlvl{\alt{\lvl}}}}_{\strat_{\class\atlvl{\lvl}}}
     \\
\notag
    &= \sum_{1\leq \lvl < \alt{\lvl} \leq \nLvls}
    \frac{1}{\epsilon_{\lvl, \alt{\lvl}}}
    \sum_{\class\atlvl{\alt{\lvl}}\in\classes\atlvl{\alt{\lvl}}}
    \strat_{\class\atlvl{\alt{\lvl}}}\exof*{\incrmodel_{\class\atlvl{\alt{\lvl}}}^{2}}
    + \sum_{1\leq \lvl < \alt{\lvl} \leq \nLvls}
     \epsilon_{\lvl, \alt{\lvl}}
     \sum_{\class\atlvl{\lvl}\in\classes\atlvl{\lvl}}
     \strat_{\class\atlvl{\lvl}}\exof*{\incrmodel_{\class\atlvl{\lvl}}^{2}}
     \\
\notag
    &\leq \sum_{1\leq \lvl < \alt{\lvl} \leq \nLvls}
    \frac{1}{\epsilon_{\lvl, \alt{\lvl}}}
    \sum_{\class\atlvl{\alt{\lvl}}\in\classes\atlvl{\alt{\lvl}}}
    \range_{\class\atlvl{\alt{\lvl}}}^{2}
    + \sum_{1\leq \lvl < \alt{\lvl} \leq \nLvls}
     \epsilon_{\lvl, \alt{\lvl}}
     \sum_{\class\atlvl{\lvl}\in\classes\atlvl{\lvl}}
     \range^{2}_{\class\atlvl{\lvl}}\explain{by \eqref{eq:order-2-incr-bound}}
     \\
\label{eq:order2-bound-second-term}
    &\leq \sum_{1\leq \lvl < \alt{\lvl} \leq \nLvls}
    \frac{1}{\epsilon_{\lvl, \alt{\lvl}}}
    \nClasses\atlvl{\alt{\lvl}}
    \bar{\range}_{\alt{\lvl}}^{2}
    + \sum_{1\leq \lvl < \alt{\lvl} \leq \nLvls}
     \epsilon_{\lvl, \alt{\lvl}}\nClasses\atlvl{\lvl}
     \bar{\range}^{2}_{\lvl}.
\end{align}

Injecting \eqref{eq:order2-bound-first-term} and \eqref{eq:order2-bound-second-term} into \eqref{eq:order2-level-by-level} ensures that:
\begin{equation*}
    \exof*{\insum_{\elem\in\elems}\strat_{\elem} \model_{\elem}^{2}} \leq \insum_{\lvl=1}^{\nLvls} \nClasses\atlvl{\lvl} \bar{\range}^{2}_{\lvl} + \insum_{1\leq \lvl < \alt{\lvl} \leq \nLvls}
    \parens*{
    \frac{1}{\epsilon_{\lvl, \alt{\lvl}}}
    \nClasses\atlvl{\alt{\lvl}}
    \bar{\range}_{\alt{\lvl}}^{2}
    + \epsilon_{\lvl, \alt{\lvl}}\nClasses\atlvl{\lvl}
    \bar{\range}^{2}_{\lvl}}
\end{equation*}
holds for any sequence of positive numbers $\{\epsilon_{\lvl, \alt{\lvl}}\}_{1\leq\alt{\lvl} < \lvl\leq\nLvls}$.
As a result, taking $\epsilon_{\lvl, \alt{\lvl}} = \sqrt{\frac{\nClasses\atlvl{\alt{\lvl}}}{\nClasses\atlvl{\lvl}}}\frac{\bar{\range}_{\alt{\lvl}}}{\bar{\range}_{\lvl}}$ yields the tight bound
\begin{equation}
    \label{order2-bound-all-before-tuning}
    \exof*{\insum_{\elem\in\elems}\strat_{\elem} \model_{\elem}^{2}} \leq \insum_{\lvl=1}^{\nLvls} \nClasses\atlvl{\lvl} \bar{\range}^{2}_{\lvl} + 2\insum_{1\leq \lvl < \alt{\lvl} \leq \nLvls}
    \sqrt{\nClasses\atlvl{\alt{\lvl}}}\bar{\range}_{\alt{\lvl}}\sqrt{\nClasses\atlvl{\lvl}}\bar{\range}^{2}_{\lvl} = \parens*{\insum_{\lvl=1}^{\nLvls} \sqrt{\nClasses}\atlvl{\lvl}\bar{\range}_{\lvl}}^{2},
\end{equation}
which proves our original assertion.
\end{proof}


\section{Regret analysis}
\label{app:regret}

As we mentioned in the main text, the principal component of our analysis is a recursive inequality which, when telescoped over $\run=\running$, will yield the desired regret bound.
To establish this ``template inequality'', we will first require an energy function measuring the disparity between a benchmark strategy $\strat\in\simplex(\elems)$ and a propensity score profile $\score \in \R^{\elems}$.
To that end, building on the notions introduced in \cref{app:entropy}, let $\hreg\from\simplex(\elems)\to\R$ denote the total nested entropy function
\begin{alignat}{2}
\hreg(\strat)
	&= \hreg_{\source}(\strat)
	= \sum_{\lvlalt=0}^{\nLvls}
		\diff\atlvl{\lvlalt} \sum_{\class\atlvl{\lvlalt} \in \classes\atlvl{\lvlalt}}
		\strat_{\class\atlvl{\lvlalt}} \log\strat_{\class\atlvl{\lvlalt}},
	&\quad
	&\text{$\strat\in\simplex(\elems)$},
\shortintertext{and let}
\hconj(\score)
	&= \max_{\strat\in\simplex(\elems)}
		\braces*{\braket{\score}{\strat} - \hreg(\strat)},
	&\quad
	&\text{$\score\in\R^{\elems}$},
\end{alignat}
denote the convex conjugate of $\hreg$ so, by \cref{prop:mirror}, we have
\begin{equation}
\label{eq:mirror}
\hconj(\score)
	= \score_{\source}
	\quad
	\text{and}
	\quad
\choice[\elem](\score)
	= \frac{\pd\hconj}{\pd\score_{\elem}}
	\quad
	\text{for all $\score\in\R^{\elems}$}.
\end{equation}
The \emph{Fenchel coupling} between $\strat\in\simplex(\elems)$ and $\score\in\R^{\elems}$ is then defined as
\begin{equation}
\label{eq:Fench}
\fench(\strat,\score)
	= \hreg(\strat) + \hconj(\score) - \braket{\score}{\strat}
	\quad
	\text{for all $\strat\in\simplex(\elems)$, $\score\in\R^{\elems}$},
\end{equation}
and we have the following key result:

\begin{proposition}
\label{prop:Fench}
Let $\struct = \coprod_{\lvl=0}^{\nLvls} \classes\atlvl{\lvl}$ be a similarity structure on $\elems$ with uncertainty parameters $\temp\atlvl{1} \geq \dotsm \geq \temp\atlvl{\nLvls} > 0$.
Then:
\begin{enumerate}
\item
The Fenchel coupling \eqref{eq:Fench} is positive-definite, \ie
\begin{equation}
\fench(\strat,\score)
	\geq 0
	\qquad
	\text{for all $\strat\in\simplex(\elems)$ and all $\score\in\R^{\elems}$},
\end{equation}
with equality if and only if $\strat$ is given by \eqref{eq:NLC}, \ie if and only if $\strat = \choice(\score)$.
\item
For all $\strat\in\elems$, we have
\begin{equation}
\fench(\strat,0)
	= \hreg(\strat) + \hconj(0)
	= \hreg(\strat) - \min\hreg
\end{equation}
where $\min\hreg \equiv \min_{\stratalt\in\simplex(\elems)} \hreg(\stratalt)$ denotes the minimum of $\hreg$ over $\simplex(\elems)$.
\end{enumerate}
\end{proposition}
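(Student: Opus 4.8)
The plan is to obtain both parts as direct consequences of the Fenchel--Young inequality for the conjugate pair $(\hreg,\hconj)$, using the differential identity \eqref{eq:mirror} supplied by \cref{prop:mirror} to resolve the equality case.

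First I would record that the total nested entropy $\hreg$ is closed, proper, and convex on $\R^{\elems}$: convexity follows as in the remarks of \cref{app:entropy} (each summand $\strat_{\class\atlvl{\lvlalt}}\log\strat_{\class\atlvl{\lvlalt}}$ is convex and the weights $\diff\atlvl{\lvlalt} \geq 0$), while properness and lower semicontinuity hold because $\hreg$ is finite and continuous on the compact set $\simplex(\elems)$ and equal to $+\infty$ off it. With $\hreg$ closed proper convex, $\hconj$ is its genuine convex conjugate, so the Fenchel--Young inequality
\[
\hreg(\strat) + \hconj(\score) \geq \braket{\score}{\strat}
\]
holds for every $\strat\in\simplex(\elems)$ and $\score\in\R^{\elems}$, with equality exactly when $\strat\in\subd\hconj(\score)$. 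Substituting into the definition \eqref{eq:Fench} of the Fenchel coupling yields $\fench(\strat,\score)\geq 0$ at once, which is the asserted positive-definiteness.

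For the equality clause of Part 1, I would invoke \eqref{eq:mirror}: $\hconj$ is differentiable with $\nabla\hconj(\score) = \choice(\score)$. Differentiability forces the subdifferential to be the singleton $\subd\hconj(\score) = \{\nabla\hconj(\score)\} = \{\choice(\score)\}$, so the condition $\strat\in\subd\hconj(\score)$ collapses to $\strat=\choice(\score)$; this is precisely the statement that equality holds if and only if $\strat$ is given by \eqref{eq:NLC}. Part 2 is then immediate: setting $\score=0$ in \eqref{eq:Fench} annihilates the coupling term, leaving $\fench(\strat,0) = \hreg(\strat) + \hconj(0)$, and from the defining maximization $\hconj(0) = \max_{\stratalt\in\simplex(\elems)}\{-\hreg(\stratalt)\} = -\min\hreg$.

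The argument involves no genuine computation; the only point requiring care is the preliminary verification that $\hreg$ meets the regularity hypotheses (closed, proper, convex, with everywhere-differentiable conjugate) so that the standard convex-analysis results — Fenchel--Young and the inversion of subdifferentials — apply verbatim, together with the appeal to \cref{prop:mirror} to identify $\nabla\hconj$ with the choice map $\choice$. Once those ingredients are in place, both statements follow without further work.
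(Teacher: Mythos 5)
Your proof is correct and follows essentially the same route as the paper's: both parts rest on the Fenchel--Young inequality together with the identification of the choice map supplied by Proposition~\ref{prop:mirror}, and your treatment of $\fench(\strat,0)$ via $\hconj(0) = -\min\hreg$ is identical to the paper's. The only cosmetic difference is that you characterize the equality case as $\strat\in\subd\hconj(\score)$, a singleton by differentiability of $\hconj$, whereas the paper writes it as $\score\in\subd\hreg(\strat)$ and inverts through the Legendre identity \eqref{eq:Legendre} --- equivalent statements, since $\hreg$ is closed, proper, and convex, as you duly verify.
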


\begin{proof}
Our first claim follows by setting $\class \gets \source$ in \cref{prop:nest2cond,prop:mirror} and noting that $\hreg_{\class} = \hreg_{\vert\class}$ when $\class = \source$:
indeed, by Young's inequality, we have $\hreg(\strat) + \hconj(\score) - \braket{\score}{\strat} \geq 0$ with equality if and only if $\score \in \subd\hreg(\strat)$, so the equality $\strat = \choice(\score)$ follows from \eqref{eq:Legendre} applied to $\class \gets \source$ and the fact that $\choice[\elem \vert \source](\score) = \choice[\elem](\score)$.
As for our second claim, simply note that
\(
\hconj(0)
	= \max_{\strat\in\simplex(\elems)}
		\braces*{\braket{0}{\strat} - \hreg(\strat)}
	= - \min_{\strat\in\simplex(\elems)} \hreg(\strat)
\)
and set $\score \gets 0$ in the definition \eqref{eq:Fench} of the Fenchel coupling.
\end{proof}

With all this in hand, the specific energy function that we will use for our regret analysis is the ``\emph{rate-deflated}'' Fenchel coupling
\begin{equation}
\label{eq:energy}
\curr[\energy]
	= \frac{1}{\curr[\learn]}
		\fench(\test,\curr[\learn]\curr[\dstate])
\end{equation}
where
$\test\in\simplex(\elems)$ is the regret comparator,
$\curr[\learn]$ is the algorithm's learning rate at stage $\run$,
and
$\curr[\dstate]$ is the corrsponding propensity score estimate.
In words, since the mixed strategy employed by the learner at stage $\run$ is $\curr = \choice(\curr[\learn]\curr[\dstate])$, the energy $\curr[\energy]$ essentially measures the disparity between $\curr$ and the target strategy $\test$ (suitably rescaled by the method's learning rate).
We then have the following fundamental estimate:

\begin{proposition}
\label{prop:energy}
For all $\test\in\simplex(\elems)$ and all $\run=\running$, we have:
\begin{equation}
\label{eq:template}
\energy_{\run+1}
	\leq \curr[\energy]
		+ \braket{\model_{\run}}{\curr - \test}
		+ \parens{\next[\learn]^{-1} - \curr[\learn]^{-1}} \bracks{\hreg(\test) - \min\hreg}
		+ \frac{1}{\curr[\learn]} \fench(\curr,\curr[\learn]\next[\dstate]).
\end{equation}
\end{proposition}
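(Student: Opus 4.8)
The plan is to control the energy increment $\energy_{\run+1} - \curr[\energy]$ by inserting an intermediate, ``rate-frozen'' energy that separates the two mechanisms at play: the update of the score profile $\curr[\dstate] \to \next[\dstate]$, and the adjustment of the learning rate $\curr[\learn] \to \next[\learn]$. Concretely, I would set $\wilde\energy \defeq \curr[\learn]^{-1} \fench(\test, \curr[\learn]\next[\dstate])$ — the \emph{new} score profile $\next[\dstate]$ read through the \emph{old} rate $\curr[\learn]$ — and write $\energy_{\run+1} - \curr[\energy] = (\wilde\energy - \curr[\energy]) + (\energy_{\run+1} - \wilde\energy)$, bounding the two parentheses independently.

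For the score-update term $\wilde\energy - \curr[\energy]$ the rate is frozen, so only the dual argument moves. Expanding the Fenchel coupling \eqref{eq:Fench} and using $\hconj = \score_\source$ together with $\choice = \nabla\hconj$ from \eqref{eq:mirror}, the $\hreg(\test)$ contributions cancel and one is left with $\fench(\test, \curr[\learn]\next[\dstate]) - \fench(\test, \curr[\learn]\curr[\dstate]) = \braket{\curr[\learn](\next[\dstate] - \curr[\dstate])}{\choice(\curr[\learn]\curr[\dstate]) - \test} + \breg_{\hconj}(\curr[\learn]\next[\dstate], \curr[\learn]\curr[\dstate])$, where $\breg_{\hconj}$ is the Bregman divergence of the conjugate. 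Substituting the update $\next[\dstate] - \curr[\dstate] = -\curr[\model]$ and $\curr = \choice(\curr[\learn]\curr[\dstate])$, the first term becomes $-\curr[\learn]\braket{\curr[\model]}{\curr - \test}$. For the remainder I would invoke the identity $\breg_{\hconj}(\curr[\learn]\next[\dstate], \curr[\learn]\curr[\dstate]) = \fench(\curr, \curr[\learn]\next[\dstate])$, which follows at once from $\fench(\curr, \curr[\learn]\curr[\dstate]) = 0$ (the positive-definiteness clause of \cref{prop:Fench}, since $\curr = \choice(\curr[\learn]\curr[\dstate])$). Dividing by $\curr[\learn]$ then gives $\wilde\energy - \curr[\energy] = -\braket{\curr[\model]}{\curr - \test} + \curr[\learn]^{-1}\fench(\curr, \curr[\learn]\next[\dstate])$, i.e. two of the terms on the right-hand side — with the orientation $-\braket{\curr[\model]}{\curr - \test}$ that is precisely what telescopes into the regret in \cref{prop:template}.

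For the rate-change term $\energy_{\run+1} - \wilde\energy$ the score profile is frozen at $\next[\dstate]$, so the only rate-dependent object is $\phi(\step) \defeq \step^{-1}\hconj(\step\next[\dstate]) = \max_{\strat\in\simplex(\elems)}\braces{\braket{\next[\dstate]}{\strat} - \step^{-1}\hreg(\strat)}$, and the $\braket{\next[\dstate]}{\test}$ parts of the two coupling terms cancel, leaving $\energy_{\run+1} - \wilde\energy = (\next[\learn]^{-1} - \curr[\learn]^{-1})\hreg(\test) + \phi(\next[\learn]) - \phi(\curr[\learn])$. Differentiating $\phi$ by Danskin's theorem, and using $\hconj(\step\next[\dstate]) = \step\braket{\next[\dstate]}{\choice(\step\next[\dstate])} - \hreg(\choice(\step\next[\dstate]))$ (again from $\fench(\choice(\step\next[\dstate]), \step\next[\dstate]) = 0$), the envelope computation collapses to $\phi'(\step) = \step^{-2}\hreg(\choice(\step\next[\dstate])) \geq \step^{-2}\min\hreg$. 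Since the learning rate is non-increasing, $\next[\learn] \leq \curr[\learn]$, so integrating this lower bound over $[\next[\learn], \curr[\learn]]$ yields $\phi(\next[\learn]) - \phi(\curr[\learn]) \leq -(\next[\learn]^{-1} - \curr[\learn]^{-1})\min\hreg$; combined with the $\hreg(\test)$ contribution this is exactly $(\next[\learn]^{-1} - \curr[\learn]^{-1})\bracks{\hreg(\test) - \min\hreg}$.

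Summing the two contributions gives the template inequality. I expect the rate-change term to be the main obstacle: the score-update step is a standard Fenchel-coupling manipulation, whereas controlling $\phi(\next[\learn]) - \phi(\curr[\learn])$ requires the envelope identity for $\phi'$ and crucially exploits the monotonicity of the learning rate to orient the integral — and it is here that the $\min\hreg$ term enters, forcing one to pin down the minimum of the nested entropy in order to make the resulting bound explicit.
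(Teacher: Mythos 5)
Your proposal is correct and follows essentially the same route as the paper: the paper's proof splits the increment through exactly your rate-frozen intermediate term $\curr[\learn]^{-1}\fench(\test,\curr[\learn]\next[\dstate])$ (its two pieces \eqref{eq:energy-const} and \eqref{eq:energy-update}), handles the score-update piece by the same Fenchel-coupling algebra (your identity $\breg_{\hconj}(\curr[\learn]\next[\dstate],\curr[\learn]\curr[\dstate]) = \fench(\curr,\curr[\learn]\next[\dstate])$ appears there as a direct expansion using $\fench(\curr,\curr[\learn]\curr[\dstate])=0$), and handles the rate-change piece via Nesterov's monotone function $\varphi(\learn)=\learn^{-1}\bracks{\hconj(\learn\next[\dstate])+\min\hreg}$, which is precisely your $\phi(\learn)+\learn^{-1}\min\hreg$, so your envelope-derivative-plus-integration step is the paper's monotonicity argument in equivalent packaging.

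One point worth flagging: your derivation produces the update term with orientation $-\braket{\curr[\model]}{\curr-\test}$, \ie the inequality $\braket{\model_{\run}}{\curr-\test} \leq \curr[\energy]-\energy_{\run+1} + \parens{\next[\learn]^{-1}-\curr[\learn]^{-1}}\bracks{\hreg(\test)-\min\hreg} + \curr[\learn]^{-1}\fench(\curr,\curr[\learn]\next[\dstate])$, which matches the energy inequality displayed in the main text and is the orientation actually needed to telescope into the regret bound of \cref{prop:template}. The $+\braket{\model_{\run}}{\curr-\test}$ in the appendix statement of \cref{prop:energy} is a sign typo rather than an error on your side: since $\next[\dstate]-\curr[\dstate] = -\curr[\model]$ under \eqref{eq:NEW}, the inner-product difference in the paper's own computation of \eqref{eq:energy-update} should read $+\curr[\learn]\braket{\curr[\model]}{\test}$ rather than $-\curr[\learn]\braket{\curr[\model]}{\test}$, and a second compensating slip occurs when the paper re-assembles $\fench(\curr,\curr[\learn]\next[\dstate])$; carried out carefully, the algebra yields exactly your version, so your proof establishes the correct form of the statement.
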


\begin{proof}
By the definition of $\curr[\energy]$, we have
\begin{subequations}
\label{eq:energy-basic}
\begin{align}
\energy_{\run+1} - \curr[\energy]
	= \frac{1}{\next[\learn]} \fench(\test,\next[\learn]\next[\dstate])
		- \frac{1}{\curr[\learn]} \fench(\test,\curr[\learn]\curr[\dstate])
	&\label{eq:energy-const}
	= \frac{1}{\next[\learn]} \fench(\test,\next[\learn]\next[\dstate])
		- \frac{1}{\curr[\learn]} \fench(\test,\curr[\learn]\next[\dstate])
	\\
	&\label{eq:energy-update}
	+ \frac{1}{\curr[\learn]} \fench(\test,\curr[\learn]\next[\dstate])
		- \frac{1}{\curr[\learn]} \fench(\test,\curr[\learn]\curr[\dstate]).
\end{align}
\end{subequations}
We now proceed to upper-bound each of the two terms \eqref{eq:energy-const} and \eqref{eq:energy-update} separately.

For the term \eqref{eq:energy-const}, the definition of the Fenchel coupling \eqref{eq:Fench} readily yields:
\begin{align}
\eqref{eq:energy-const}
	&= \bracks*{\frac{1}{\next[\learn]} - \frac{1}{\curr[\learn]}} \hreg(\test)
		+ \frac{1}{\next[\learn]} \hconj(\next[\learn]\next[\dstate]) - \frac{1}{\curr[\learn]} \hconj(\curr[\learn]\next[\dstate]).
\end{align}
Inspired by a trick of \citet{Nes09}, consider the function $\varphi(\learn) = \learn^{-1} [\hconj(\learn\score) + \min\hreg]$.
Then, by \cref{prop:mirror}, letting $\strat = \choice(\learn\score)$ and differentiating $\varphi$ with respect to $\learn$ gives
\begin{align}
\varphi'(\learn)
	&= \frac{1}{\learn} \braket{\score}{\choice(\learn\score)}
		-\frac{1}{\learn^{2}} \bracks{ \hconj(\learn\score) + \min\hreg }
	\notag\\
	&= \frac{1}{\learn^{2}} \bracks{ \braket{\learn\score}{\strat} - \hconj(\learn\score) - \min\hreg}
	\notag\\
	&= \frac{1}{\learn^{2}} \bracks{\hreg(\strat) - \min\hreg}
	\geq 0.
\end{align}

Since $\next[\learn] \leq \curr[\learn]$, the above shows that $\varphi(\curr[\learn]) \geq \varphi(\next[\learn])$.
Accordingly, setting $\score \gets \next[\dstate]$ in the definition of $\varphi$ yields
\begin{equation}
\frac{1}{\next[\learn]} \hconj(\next[\learn]\next[\dstate])
	- \frac{1}{\curr[\learn]} \hconj(\curr[\learn]\next[\dstate])
	\leq \bracks*{\frac{1}{\curr[\learn]} - \frac{1}{\next[\learn]}} \min\hreg
\end{equation}
and hence
\begin{equation}
\label{eq:hconj-delta}
\eqref{eq:energy-const}
	\leq \parens{\next[\learn]^{-1} - \curr[\learn]^{-1}} \bracks{\hreg(\test) - \min\hreg}.
\end{equation}

Now, after a straightforward rearrangement, the second term of \eqref{eq:energy-basic} becomes
\begin{align}
\eqref{eq:energy-update}
	&= \frac{1}{\curr[\learn]}
		\bracks*{\hreg(\test) + \hconj(\curr[\learn]\next[\dstate]) - \curr[\learn]\braket{\next[\dstate]}{\test}}
	- \frac{1}{\curr[\learn]}
		\bracks*{\hreg(\test) + \hconj(\curr[\learn]\curr[\dstate]) - \curr[\learn]\braket{\curr[\dstate]}{\test}}
	\notag\\
	&= \frac{1}{\curr[\learn]}
		\bracks*{
			\hconj(\curr[\learn]\next[\dstate])
			- \hconj(\curr[\learn]\curr[\dstate])
			- \curr[\learn]\braket{\curr[\model]}{\test}
				}
	\explain{by \eqref{eq:NEW}}
	\\
	&= \frac{1}{\curr[\learn]}
		\bracks*{
			\hconj(\curr[\learn]\next[\dstate])
			- \hconj(\curr[\learn]\curr[\dstate])
			- \curr[\learn] \braket{\curr[\model]}{\curr}
				}
	+ \braket{\curr[\model]}{\curr - \test}
	\explain{isolate benchmark}
	\\
	&= \frac{1}{\curr[\learn]}
		\bracks*{
			\hconj(\curr[\learn]\next[\dstate])
			- \braket{\curr[\learn]\curr[\dstate]}{\curr}
			+ \hreg(\curr)
			- \curr[\learn] \braket{\curr[\model]}{\curr}
				}
	+ \braket{\curr[\model]}{\curr - \test}
	\explain{by \cref{prop:mirror}}
	\\
	&= \frac{1}{\curr[\learn]} \fench(\curr,\curr[\learn]\next[\dstate])
		+ \braket{\curr[\model]}{\curr - \test}
\end{align}

Thus, combining the above with \eqref{eq:hconj-delta}, we finally obtain
\begin{align}
\label{eq:energy-inter}
\next[\energy]
	&= \curr[\energy]
		+ \eqref{eq:energy-const}
		+ \eqref{eq:energy-update}
	\notag\\
	&\leq \curr[\energy]
		+ \parens{\next[\learn]^{-1} - \curr[\learn]^{-1}} \bracks{\hreg(\test) - \min\hreg}
		+ \braket{\model_{\run}}{\curr - \test}
		+ \frac{1}{\curr[\learn]} \fench(\curr,\curr[\learn]\next[\dstate])
\end{align}
and our proof is complete.
\end{proof}

We are now in a position to state and prove the template inequality that provides the scaffolding for our regret bounds:

\template*

\begin{proof}
Let $\curr[\noise] = \curr[\model] - \curr[\payv]$ denote the error in the learner's estimation of the $\run$-th stage payoff vector $\curr[\payv]$.
Then, by substituting in \cref{prop:energy} and rearranging, we readily get:
\begin{equation}
\label{eq:template1}
\braket{\curr[\payv]}{\test - \curr}
	\leq \curr[\energy] - \energy_{\run+1}
		+ \braket{\noise_{\run}}{\curr - \test}
		+ \parens*{\next[\learn]^{-1} - \curr[\learn]^{-1}} \bracks{\hreg(\test) - \min\hreg}
		+ \curr[\learn] \fench(\test,\curr[\learn]\next[\dstate])
\end{equation}
Thus, telescoping over $\run=\running,\nRuns$, we have
\begin{align}
\label{eq:template2}
\reg_{\test}(\nRuns)
	&\leq \init[\energy] - \afterlast[\energy]
		+ \parens*{\frac{1}{\afterlast[\learn]} - \frac{1}{\init[\learn]}} \bracks{\hreg(\test) - \min\hreg}
	\notag\\
	&\qquad
		+ \sum_{\run=\start}^{\nRuns} \braket{\noise_{\run}}{\curr - \test}
		+ \sum_{\run=\start}^{\nRuns} \frac{1}{\curr[\learn]} \fench(\curr,\curr[\learn]\next[\dstate])
	\notag\\
	&\leq \frac{\hreg(\test) - \min\hreg}{\afterlast[\learn]}
		+ \sum_{\run=\start}^{\nRuns} \braket{\noise_{\run}}{\curr - \test}
		+ \sum_{\run=\start}^{\nRuns} \frac{1}{\curr[\learn]} \fench(\curr,\curr[\learn]\next[\dstate])
\end{align}
where we used the fact that
\begin{enumerate*}
[\itshape a\upshape)]
\item
$\curr[\energy] \geq 0$ for all $\run$ (a consequence of the first part of \cref{prop:Fench});
and that
\item
$\init[\energy] = \init[\learn]^{-1} \bracks{\hreg(\test) + \hconj(0)} = \init[\learn]^{-1} \bracks{\hreg(\test) - \min\hreg}$
\end{enumerate*}
(from the second part of the same proposition).
Our claim then follows by taking expectations in \eqref{eq:template2} and noting that $\exof{\curr[\noise] \given \curr[\filter]} = 0$
(by \cref{prop:NIWE}).
\end{proof}

In view of the above, our main regret bound follows by bounding the two terms in the template inequality \eqref{eq:template}.
The second term is by far the most difficult one to bound, and is where \cref{app:aux} comes in;
the first term is easier to handle, and it can be bounded as follows:

\begin{lemma}
\label{lem:hmin}
Suppose that each class $\class \in \classes\atlvl{\lvl-1}$ has at most $\nChildren\atlvl{\lvl}$ children, $\lvl = 1,\dotsc,\nLvls$.
Then, for all $\test\in\simplex(\elems),$ we have 
\begin{align}
\label{eq:hrange-bound-general}
 \hrange &\leq \sum_{\lvl=1}^{\nLvls} \temp\atlvl{\lvl} \log\nChildren\atlvl{\lvl} &\text{with equality iff the tree is symmetric,}\\
 \label{eq:hrange-equality-equal-mus}
 \hrange &= \temp \log(\nElems) &\text{if $\temp\atlvl{1} = \temp\atlvl{2} = \dots = \temp\atlvl{\nLvls} = \temp$}.
\end{align}

\end{lemma}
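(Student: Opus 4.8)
The plan is to work directly with the recursive definition \eqref{eq:score} of the propensity scores, initialized at $\score = 0$, rather than with the nested-radical expression \eqref{eq:hrange}; the two coincide by construction, and the recursion is far more tractable for an inductive argument.

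First, for \eqref{eq:hrange-bound-general} I would prove by backward (bottom-up) induction on the attribute level $\lvl$ that every class $\class \in \classes\atlvl{\lvl}$ satisfies
\[
\score_{\class}
	\leq \sum_{\lvlalt=\lvl+1}^{\nLvls} \temp\atlvl{\lvlalt}\log\nChildren\atlvl{\lvlalt},
\]
with equality precisely when the subtree hanging below $\class$ is complete, \ie every internal node at level $\lvlalt-1$ inside it has exactly $\nChildren\atlvl{\lvlalt}$ children. The base case $\lvl=\nLvls$ is immediate, since leaves carry $\score=0$ and the right-hand side is an empty sum. For the inductive step, I would feed the level-$\lvl$ bound into \eqref{eq:score}: writing $S = \sum_{\lvlalt=\lvl+1}^{\nLvls}\temp\atlvl{\lvlalt}\log\nChildren\atlvl{\lvlalt}$ (a constant not depending on the particular class), monotonicity of $\exp$ together with the bound of at most $\nChildren\atlvl{\lvl}$ children gives $\sum_{\class\atlvl{\lvl}\childof\class\atlvl{\lvl-1}}\exp(\score_{\class\atlvl{\lvl}}/\temp\atlvl{\lvl}) \leq \nChildren\atlvl{\lvl}\exp(S/\temp\atlvl{\lvl})$; applying $\temp\atlvl{\lvl}\log(\cdot)$ then advances the bound one level up, absorbing the extra $\temp\atlvl{\lvl}\log\nChildren\atlvl{\lvl}$ term. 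Evaluating the resulting inequality at the root $\source$ (level $0$) yields exactly \eqref{eq:hrange-bound-general}.

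The equality analysis is the only delicate point. At each level, equality in the inductive step forces two things at once: the enclosing class must attain the maximal child count $\nChildren\atlvl{\lvl}$, and every one of its children must itself attain the level-$\lvl$ bound. Propagating this requirement downward from the root shows that equality at $\source$ holds if and only if every internal node at level $\lvl-1$ has exactly $\nChildren\atlvl{\lvl}$ children for all $\lvl$ \textendash\ which is precisely the statement that the arborescence $\struct$ is symmetric.

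For the identity \eqref{eq:hrange-equality-equal-mus}, set $\temp\atlvl{1}=\dots=\temp\atlvl{\nLvls}=\temp$ and substitute $w_{\class} = \exp(\score_{\class}/\temp)$ into \eqref{eq:score}. The recursion then collapses to $w_{\class} = \sum_{\classalt\childof\class} w_{\classalt}$ with $w_{\elem}=1$ at every leaf, so $w_{\class}$ simply counts the leaves of the subtree rooted at $\class$; in particular $w_{\source}=\nElems$, whence $\hrange = \score_{\source} = \temp\log w_{\source} = \temp\log\nElems$, irrespective of the tree's shape. (Equivalently, equal uncertainty parameters turn every exponent $\temp\atlvl{\lvlalt+1}/\temp\atlvl{\lvlalt}$ in \eqref{eq:hrange} into $1$, so the nested radical degenerates into a plain leaf count.) The main obstacle throughout is the bookkeeping for the equality case rather than the bound itself, which is a one-line consequence of the monotonicity of $\exp$ and $\log$.
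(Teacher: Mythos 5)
Your proof is correct and follows essentially the same route as the paper's: a bottom-up induction on the recursion \eqref{eq:score} initialized at $\score = 0$, yielding the level-wise bound $\score_{\class\atlvl{\lvl-1}} \leq \sum_{\lvlalt=\lvl}^{\nLvls} \temp\atlvl{\lvlalt}\log\nChildren\atlvl{\lvlalt}$ and hence \eqref{eq:hrange-bound-general} at the root, while your substitution $w_{\class} = \exp(\score_{\class}/\temp)$ for the equal-parameter case is just a transparent rewriting of the paper's observation that all exponents $\temp\atlvl{\lvlalt}/\temp\atlvl{\lvlalt-1}$ in \eqref{eq:hrange} become $1$, collapsing the nested expression to a leaf count. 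If anything, you go slightly beyond the paper, which asserts the equality-iff-symmetric clause without detailing the downward propagation argument (maximal child count plus equality at every child, at every level) that you spell out correctly.
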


\begin{proof}
Suppose that $\score_{\elem} = 0$ for all $\elem\in\elems$.
Then, applying \eqref{eq:score} inductively, we have:
\begin{equation}
\begin{alignedat}{2}
\score_{\class\atlvl{\nLvls}}
	&= 0
	&\qquad
	&\text{for all $\class\atlvl{\nLvls} \in \classes\atlvl{\nLvls}$}
	\\
\score_{\class\atlvl{\nLvls-1}}
	&= \temp\atlvl{\nLvls}\,
		\log \;\;\smashoperator{\sum_{\class\atlvl{\nLvls} \childof \class\atlvl{\nLvls-1}}}\;\;
		\exp(0)
	\leq \temp\atlvl{\nLvls} \log\nChildren\atlvl{\nLvls}
	&\qquad
	&\text{for all $\class\atlvl{\nLvls-1} \in \classes\atlvl{\nLvls-1}$}
	\\
\score_{\class\atlvl{\nLvls-2}}
	&= \temp\atlvl{\nLvls-1}\,
		\log \;\smashoperator{\sum_{\class\atlvl{\nLvls-1} \childof \class\atlvl{\nLvls-2}}}\;
		\exp\parens*{\frac{\score_{\class\atlvl{\nLvls-1}}}{\temp\atlvl{\nLvls-1}}}
	\leq \temp\atlvl{\nLvls-1} \log\nChildren\atlvl{\nLvls-1}
		+ \temp\atlvl{\nLvls} \log\nChildren\atlvl{\nLvls}
	&\qquad
	&\text{for all $\class\atlvl{\nLvls-2} \in \classes\atlvl{\nLvls-2}$}
	\\
	&\;\;\vdots
	&
	&\;\;\vdots
	\\
\score_{\class\atlvl{\lvl-1}}
	&= \temp\atlvl{\lvl}\,
		\log \;\;\smashoperator{\sum_{\class\atlvl{\lvl} \childof \class\atlvl{\lvl-1}}}\;\;
		\exp(\score_{\class\atlvl{\lvl}} / \temp\atlvl{\lvl})
	\leq \sum_{\lvlalt=\lvl}^{\nLvls}
		\temp\atlvl{\lvlalt} \log\nChildren\atlvl{\lvlalt}
	&\qquad
	&\text{for all $\class\atlvl{\lvl-1} \in \classes\atlvl{\lvl-1}$}
\end{alignedat}
\end{equation}
and hence $\hrange = \hconj(0) = \score_{\source} \leq \sum_{\lvl=1}^{\nLvls} \temp\atlvl{\lvl} \log\nChildren\atlvl{\lvl}$.
\cref{eq:hrange-bound-general} then follows from \cref{prop:Fench}.

Now, if $\temp\atlvl{1} = \temp\atlvl{2} = \dots = \temp\atlvl{\nLvls} = \temp$, we have
\begin{align}
\hrange
	&= \log\bracks*{
		\sum_{\class\atlvl{1} \childof \class\atlvl{0}}
		\bracks*{
			\sum_{\class\atlvl{2} \childof \class\atlvl{1}}
				\!\dotsi
				\bracks*{
					\sum_{\class\atlvl{\nLvls}\childof\class\atlvl{\nLvls-1}}
						\!\!\!\!1
				}^{\frac{\temp\atlvl{\nLvls}}{\temp\atlvl{\nLvls-1}}}
			\!\!\!\!\!\dotsi\,
		}^{\frac{\temp\atlvl{2}}{\temp\atlvl{1}}}
	}^{\temp\atlvl{1}}
	\notag\\
	&=\temp \log\sum_{\class\atlvl{1} \childof \class\atlvl{0}}
		\bracks*{
			\sum_{\class\atlvl{2} \childof \class\atlvl{1}}
				\dotsi
				\bracks*{
					\sum_{\class\atlvl{\nLvls}\childof\class\atlvl{\nLvls-1}}
						\!\!\!\!1
				}
		\dotsi
		}
	\notag\\
	&= \temp \log \bracks*{\sum_{\class\atlvl{\nLvls} \childof_{\nLvls} \class\atlvl{0}} 1}
	= \temp \log\nElems,
\end{align}
which proves \cref{eq:hrange-equality-equal-mus} and completes our proof.
\end{proof}

\begin{proposition}
\label{prop:fenchel-bound}
For all $\test\in\simplex(\elems)$ and all $\run=\{\running\}$, we have:
\begin{equation}
\label{eq:fenchel-bound}
\fench(\curr[\state], \curr[\temp]\next[\dstate]) + \curr[\learn]\braket{\curr[\model]}{\curr[\state]}
= \hconj(\curr[\learn]\curr[\dstate] + \curr[\learn]\curr[\model]) -  \hconj(\curr[\learn]\curr[\dstate]).
\end{equation}
\end{proposition}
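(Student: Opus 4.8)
The plan is to read \eqref{eq:fenchel-bound} not as an inequality but as an exact identity that follows from the Fenchel--Young equality, once the definition \eqref{eq:Fench} of the Fenchel coupling is unwound. The only structural input needed is that $\curr[\state]$ is the mixed strategy produced by the logit rule at the current propensity score, i.e. $\curr[\state] = \choice(\curr[\learn]\curr[\dstate])$; by \eqref{eq:mirror} (the specialization of \cref{prop:mirror} to $\class \gets \source$) this says precisely that $\curr[\state]$ is the gradient image $\curr[\state] = \nabla\hconj(\curr[\learn]\curr[\dstate])$.

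The first step is to record the corresponding Fenchel--Young equality. Since $\hconj$ is the convex conjugate of $\hreg$ and $\curr[\state] = \nabla\hconj(\curr[\learn]\curr[\dstate])$, the equality case of Young's inequality gives
\[
\hreg(\curr[\state]) + \hconj(\curr[\learn]\curr[\dstate]) = \braket{\curr[\learn]\curr[\dstate]}{\curr[\state]},
\]
which is nothing but the equivalence \eqref{eq:Legendre} read at $\class \gets \source$, stating $\curr[\learn]\curr[\dstate] \in \subd\hreg(\curr[\state])$. The second step is to expand the left-hand side of \eqref{eq:fenchel-bound} directly from the definition \eqref{eq:Fench},
\[
\fench(\curr[\state], \curr[\learn]\next[\dstate]) = \hreg(\curr[\state]) + \hconj(\curr[\learn]\next[\dstate]) - \braket{\curr[\learn]\next[\dstate]}{\curr[\state]},
\]
and substitute the equality above to eliminate $\hreg(\curr[\state])$, yielding
\[
\fench(\curr[\state], \curr[\learn]\next[\dstate]) = \hconj(\curr[\learn]\next[\dstate]) - \hconj(\curr[\learn]\curr[\dstate]) + \braket{\curr[\learn]\curr[\dstate] - \curr[\learn]\next[\dstate]}{\curr[\state]}.
\]
Finally, invoking the score update so that $\curr[\learn]\next[\dstate] = \curr[\learn]\curr[\dstate] + \curr[\learn]\curr[\model]$, the vector in the last inner product is $-\curr[\learn]\curr[\model]$, hence that term equals $-\curr[\learn]\braket{\curr[\model]}{\curr[\state]}$. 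Transposing it to the left-hand side and rewriting $\curr[\learn]\next[\dstate]$ as $\curr[\learn]\curr[\dstate] + \curr[\learn]\curr[\model]$ then reproduces \eqref{eq:fenchel-bound} verbatim.

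There is no genuine analytic obstacle here, since the claim is an exact equality rather than an estimate; the work is essentially bookkeeping. The one point that must be handled with care is the justification of the Fenchel--Young \emph{equality}: it is valid only because $\curr[\state]$ is the gradient of $\hconj$ at $\curr[\learn]\curr[\dstate]$, i.e. the output of \eqref{eq:NLC} at the current score, and would fail for a generic pairing of $\strat$ and $\score$ (where only the inequality holds). The second place to stay alert is the sign tracking of the propensity-score increment: one must be consistent about the fact that the second argument of the Fenchel coupling differs from $\curr[\learn]\curr[\dstate]$ by exactly the additive term $\curr[\learn]\curr[\model]$ appearing in the conjugate on the right, so that the resulting cross term cancels precisely against the $\curr[\learn]\braket{\curr[\model]}{\curr[\state]}$ contribution on the left.
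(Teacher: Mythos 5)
Your proof is correct and is essentially the paper's own argument: the paper likewise expands $\fench(\curr[\state], \curr[\learn]\next[\dstate])$ from the definition \eqref{eq:Fench}, isolates the term $\fench(\curr[\state], \curr[\learn]\curr[\dstate])$, and cancels it because $\curr[\state] = \choice(\curr[\learn]\curr[\dstate])$ \textemdash\ which is exactly your Fenchel--Young equality, invoked in the paper via \cref{prop:Fench}. Your sign bookkeeping ($\curr[\learn]\next[\dstate] = \curr[\learn]\curr[\dstate] + \curr[\learn]\curr[\model]$) also matches the convention the paper's proof actually uses, so nothing is missing.
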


\begin{proof}
Let $\test\in\simplex(\elems)$ and $\run\in\running$, we simply write:
\begin{align}
\fench(\curr[\state], \curr[\learn]\next[\dstate]) 
&= \hreg(\curr[\state]) + \hconj(\curr[\learn]\next[\dstate]) - \curr[\learn]\braket{\next[\dstate]}{\curr[\state]}
	\notag\\
	&= \underbrace{\hreg(\curr[\state]) + \hconj(\curr[\learn]\curr[\dstate]) - \braket{\curr[\learn]\curr[\dstate]}{\curr[\state]}}_{=\fench(\curr[\state], \curr[\learn]\curr[\dstate])} + \hconj(\curr[\learn]\next[\dstate]) - \hconj(\curr[\learn]\curr[\dstate]) - \curr[\learn]\braket{\curr[\model]}{\curr[\state]}
	\notag\\
	&= \hconj(\curr[\learn]\curr[\dstate] + \curr[\learn]\curr[\model]) - \hconj(\curr[\dstate]) - \curr[\learn]\braket{\curr[\model]}{\curr[\state]}
\explain{$\fench(\curr[\state], \curr[\learn]\curr[\dstate])=0$}
\end{align}
and our assertion follows.
\end{proof}

We are finally in a position to prove our main result (which we restate below for convenience):

\NEW*

\begin{proof}
Injecting \cref{eq:fenchel-bound} in the result of \cref{prop:template} and using \cref{prop:hconj-increment-bound} and \cref{eq:varbound-cost} of \cref{prop:NIWE} directly yields the pseudo-regret bound \eqref{eq:reg-NEW}.

Finally, if we choose $\temp\atlvl{1} = \dotsm = \temp\atlvl{\nLvls} = \sqrt{\nEff/2}$, \cref{lem:hmin} gives
\begin{equation}
    \label{eq:H-bound-equal-mus}
    \hrange = \sqrt{\nEff/2} \log\nElems.
\end{equation}
Thus, taking $\curr[\learn] = \sqrt{\log\nElems/(2\run)}$ and substituting in \eqref{eq:reg-NEW} along with \eqref{eq:H-bound-equal-mus} finally delivers
\begin{equation}
\exof{\reg_{\test}(\nRuns)}
	\leq 2 \sqrt{\nEff \log\nElems \cdot \nRuns},
\end{equation}
and our claim follows.
\end{proof}

\section{Additional Experiment Details and Discussions}
\label{app:numerics}

In this appendix we provide additional details on the experiments as well as further discussions on the settings we presented. The code with the implementation of the algorithms as well as the code to reproduce the figures will be open-sourced and is provided along with the supplementary materials. 

\label{sec:num-app}

\subsection{Experiment additional details}

In the synthetic environment, at each level, the rewards are generated randomly according for each class nodes, through uniform distributions of randomly generated means and fixed bandwidth. From a level $\lvl$ to the next $\lvl+1$, the rewards range are divided by a multiplicative factor $\range\atlvl{\lvl}/\range\atlvl{\lvl+1}=10$. 
The implemented method of NEW uses the reward based IW. Moreover, no model selection was used in this experiment as no hyperparameter was tuned. Indeed, a decaying rate of $\frac{1}{\sqrt{t}}$ was used for the score updates for all methods, as is common in the bandit litterature \citep{LS20}. 

\subsection{Blue Bus/ Red Bus environment}

\label{sec:num-bbrb}

We detail in Figure \ref{fig:blueredbus} a graphical representation of such blue bus/red bus environment, where many colors of the bus item build irrelevant alternatives. In this setting, with few arms, we run the methods up to the horizon $T=1000$. We provide in Figure \ref{fig:app-bbrb} the average reward of the two methods NEW and EXP3 with varying number of subclasses of the ``\texttt{bus}''. 

\begin{figure}[h!]
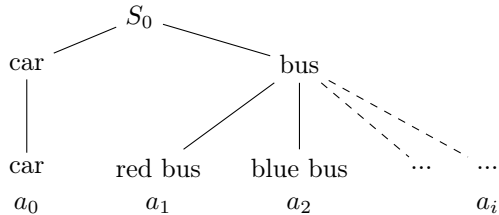

\ctikzfig{Figures/blueredbus}
\caption{Diagram of the blue Bus/Red Bus environment.}
\label{fig:blueredbus}
\end{figure}

While the NEW method ends up selecting the best alternative and having the lowest regret, the EXP3 seems to pick wrong alternative in some experiments, and ends up having higher regret and requiring more iterations to converge to higher average reward. In some of our experiments over the multiple random runs, alternatives of very low sampling probability that were sampled changed the score vector too brutally in the IPS estimator which seemed to hurt the EXP3 method much more than the NEW algorithm. 

\begin{figure}[h!]

\begin{minipage}{.5\textwidth}
\includegraphics[width=0.99\linewidth]{Figures/figure_bbrb_regret.pdf}
\end{minipage}%
\begin{minipage}{.5\textwidth}
\includegraphics[width=0.99\linewidth]{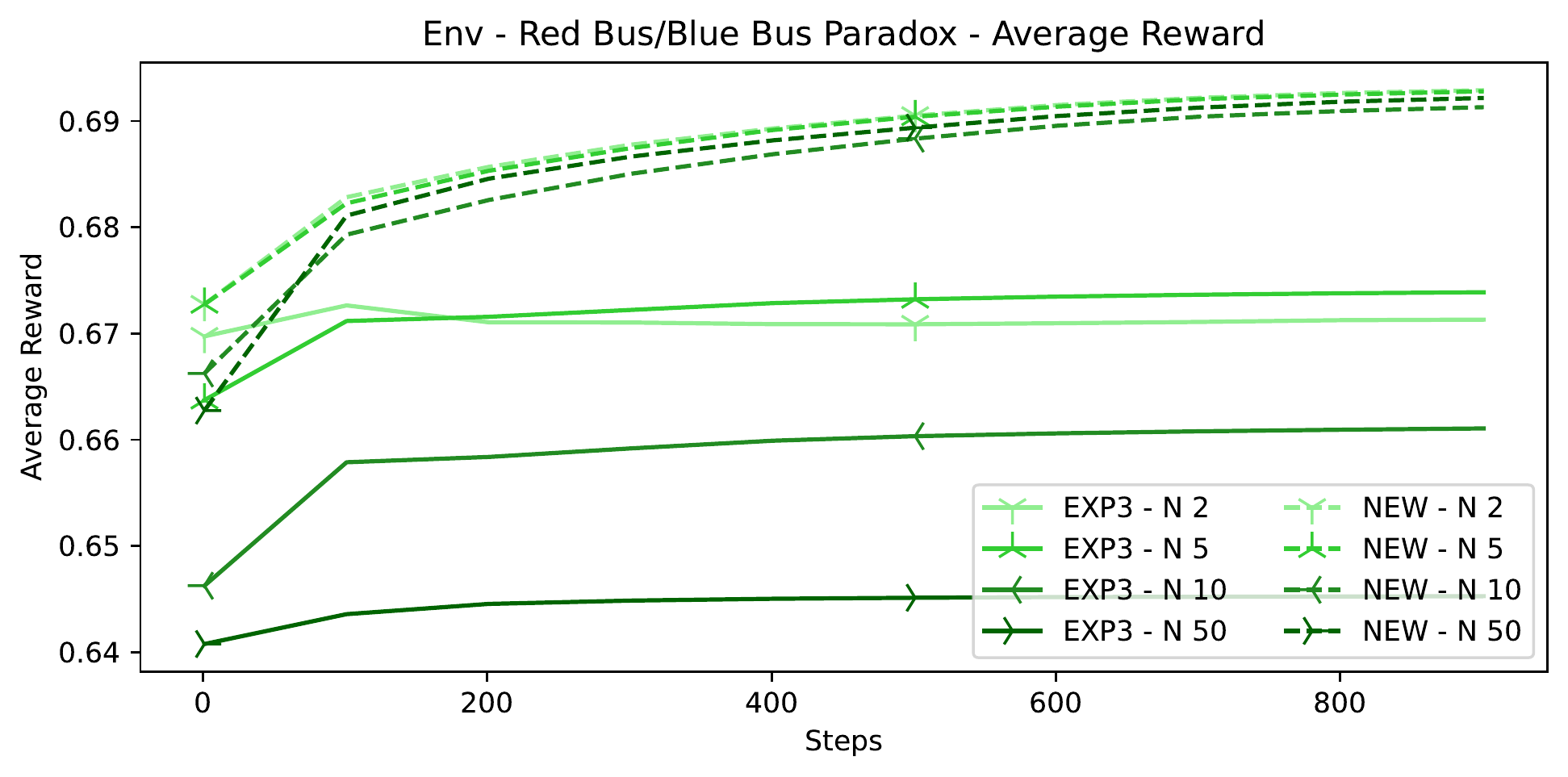}
\end{minipage}%

\caption{Regret and Average Reward of NEW and EXP3 on the Blue Bus/ Red Bus environment.}
\label{fig:app-bbrb}
\end{figure}

\subsection{Tree structures}

In this appendix we show additional results and visualisations for the second setting presented in the main paper. We start with discussions on the depth parameter $\nLvls$ and follow with the breadth parameter related to the number of child per class $M=\vert \class \vert$. 

\paragraph{Influence of the depth parameter $\nLvls$} In Figure \ref{fig:app-depth} we show the influence of the depth parameter with a fixed number of child per class. By making the tree deeper, we illustrate the effect of knowing the nested structure compared to running the logit choice to the whole alternative set. As shown in both the regret and average reward plots, the NEW method outperforms the EXP3 algorithm. While the NEW method also use an IPS estimator, it is less prone to variance issues than the EXP3 method. Indeed, due to the nested structure and the reward decay related to the ratio $\range\atlvl{\lvl+1}/\range\atlvl{\lvl}$, the NEW estimator end up not hurting the regret by still selecting "right" parent classes.

\begin{figure}[h!]
\begin{minipage}{.5\textwidth}
\includegraphics[width=0.99\linewidth]{Figures/figure_general_regret_depth.pdf}
\end{minipage}%
\begin{minipage}{.5\textwidth}
\includegraphics[width=0.99\linewidth]{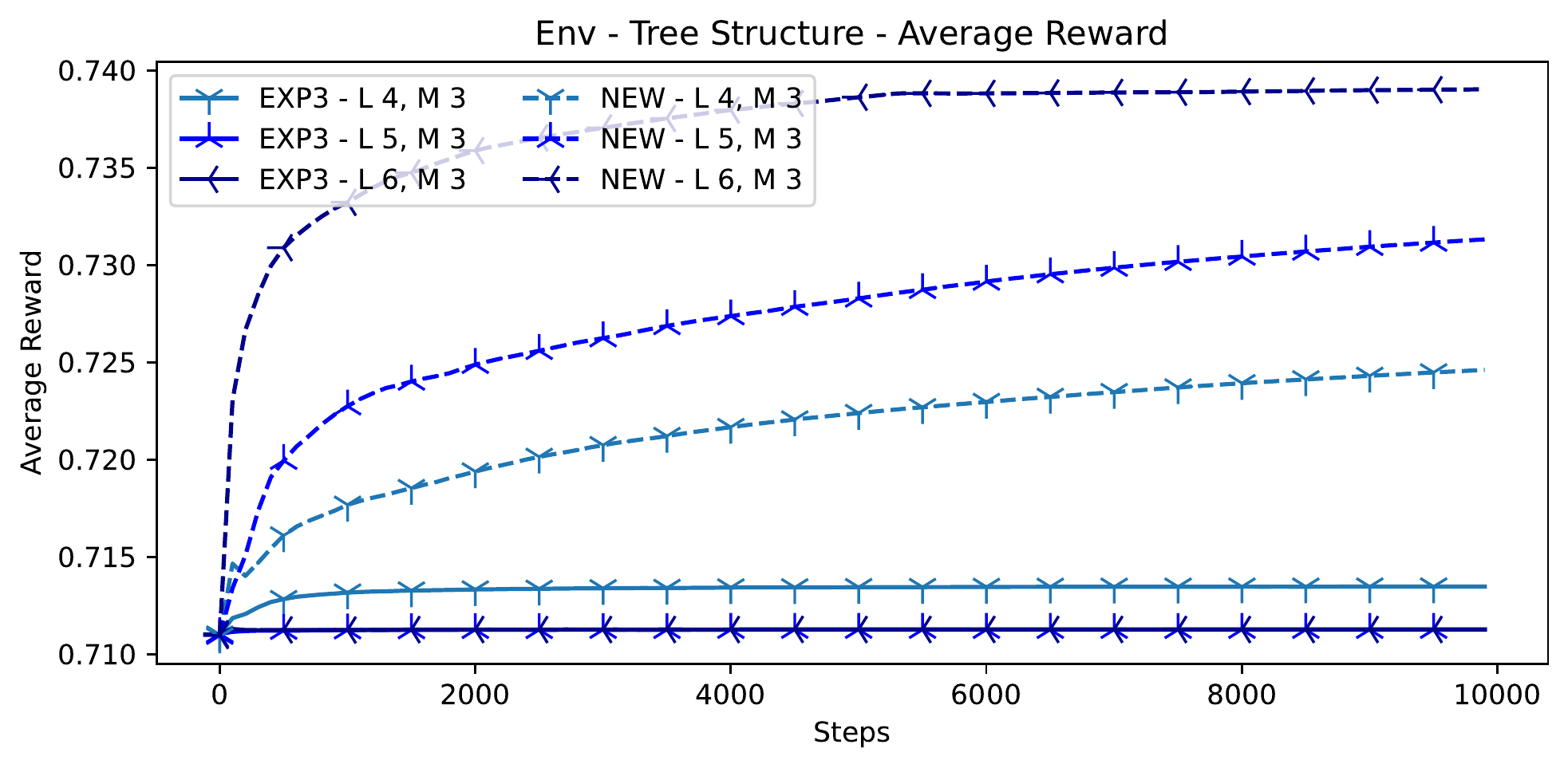}
\end{minipage}%
\caption{Regret and Average Reward of NEW and EXP3 on the synthetic environment with varying number of levels $\nLvls$.}
\label{fig:app-depth}
\end{figure}

\paragraph{Influence of the number of child per class (wideness) $M=\vert \class \vert$} In this setting we fix the number of levels $\nLvls$ and vary the number of child per classes $M$. In Figure \ref{fig:app-breadth} we can see that the NEW method outperforms the EXP3 in terms of regret and average reward. Interestingly, we see that the gap between the two methods shrinks when the number of child per class augments. This is because when the size of a class increase, the NEW method also end up having less knowledge locally and end up having a large number of alternatives to choose among. 

\begin{figure}[h!]
\begin{minipage}{.5\textwidth}
\includegraphics[width=0.99\linewidth]{Figures/figure_general_regret_breadth.pdf}
\end{minipage}%
\begin{minipage}{.5\textwidth}
\includegraphics[width=0.99\linewidth]{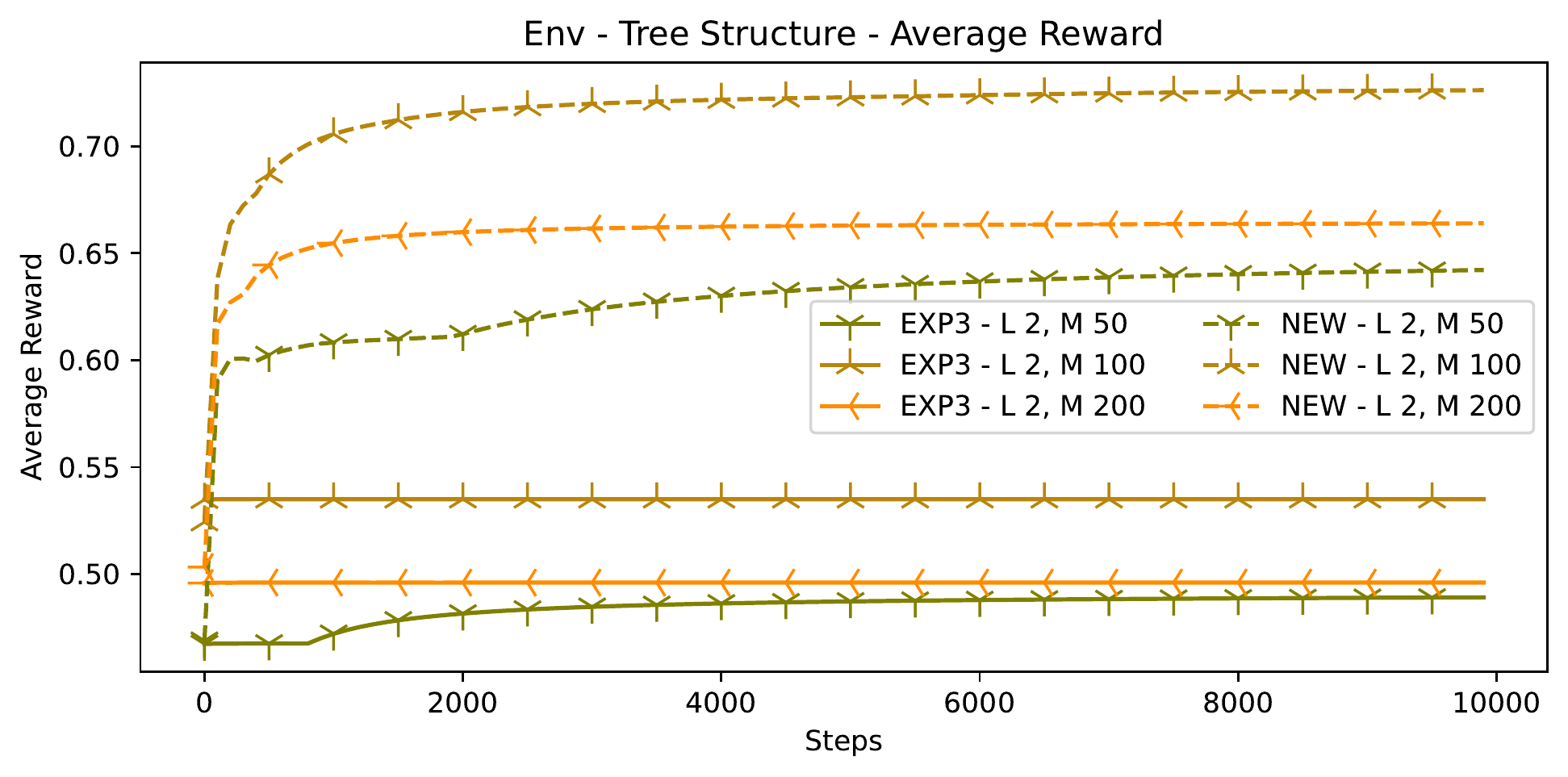}
\end{minipage}%
\caption{Regret and Average Reward of NEW and EXP3 on the synthetic environment with varying number of child per class $M=\vert \class \vert$.}
\label{fig:app-breadth}
\end{figure}

\label{sec:num-tree}

\subsection{A visualisation of the effects of NEW}

In this appendix we want to show the effects of NEW through the simple setting where we assume a nested structure with $\nLvls=4$ and $M=\vert \class \vert=3$. We illustrate in Figure \ref{fig:visualisation-new} the score vectors of the NEW method along the optimal path in the tree (path which nodes have the highest cumulated mean, i.e which generates the highest reward) along with the oracle means of the child nodes. We can see that the algorithm takes advantage of the nested structure and updates the scores vectors optimally with regards to the oracle means of all the nodes. The NEW algorithm therefore estimates correctly the rewards of the environment. 

\begin{figure}[h!]
\begin{minipage}{.245\textwidth}
\includegraphics[width=0.99\linewidth]{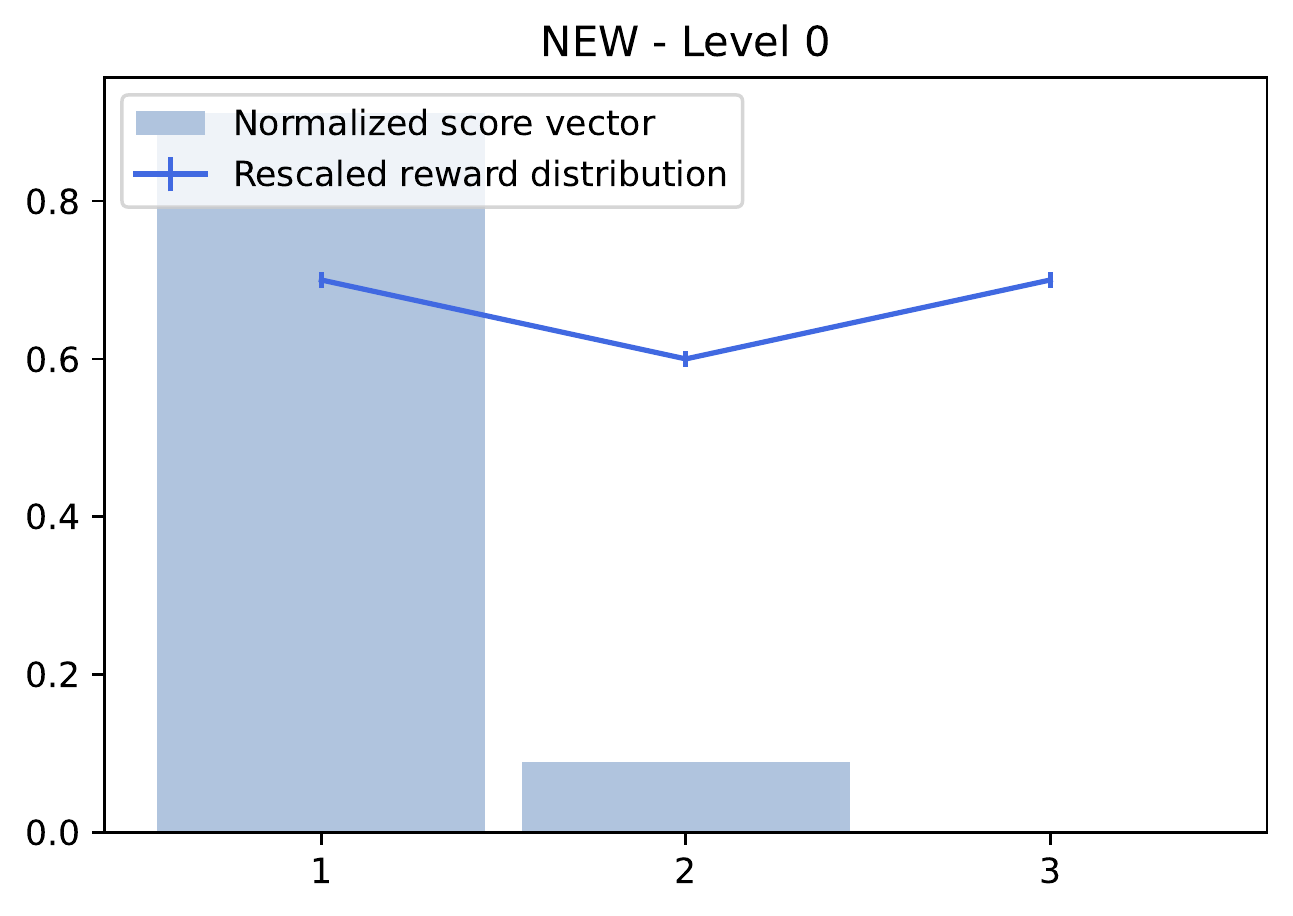}
\end{minipage}%
\begin{minipage}{.245\textwidth}
\includegraphics[width=0.99\linewidth]{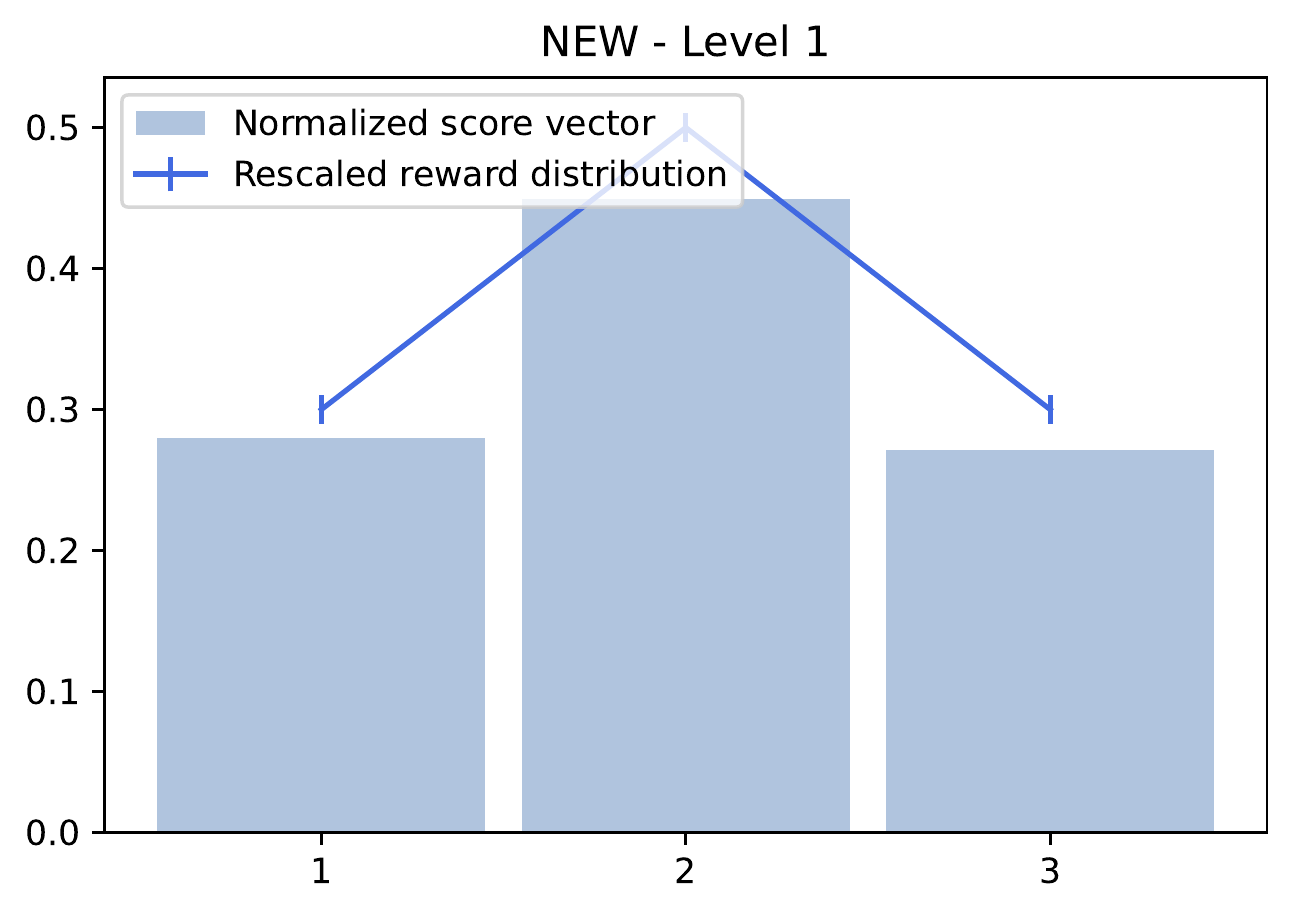}
\end{minipage}%
\begin{minipage}{.245\textwidth}
\includegraphics[width=0.99\linewidth]{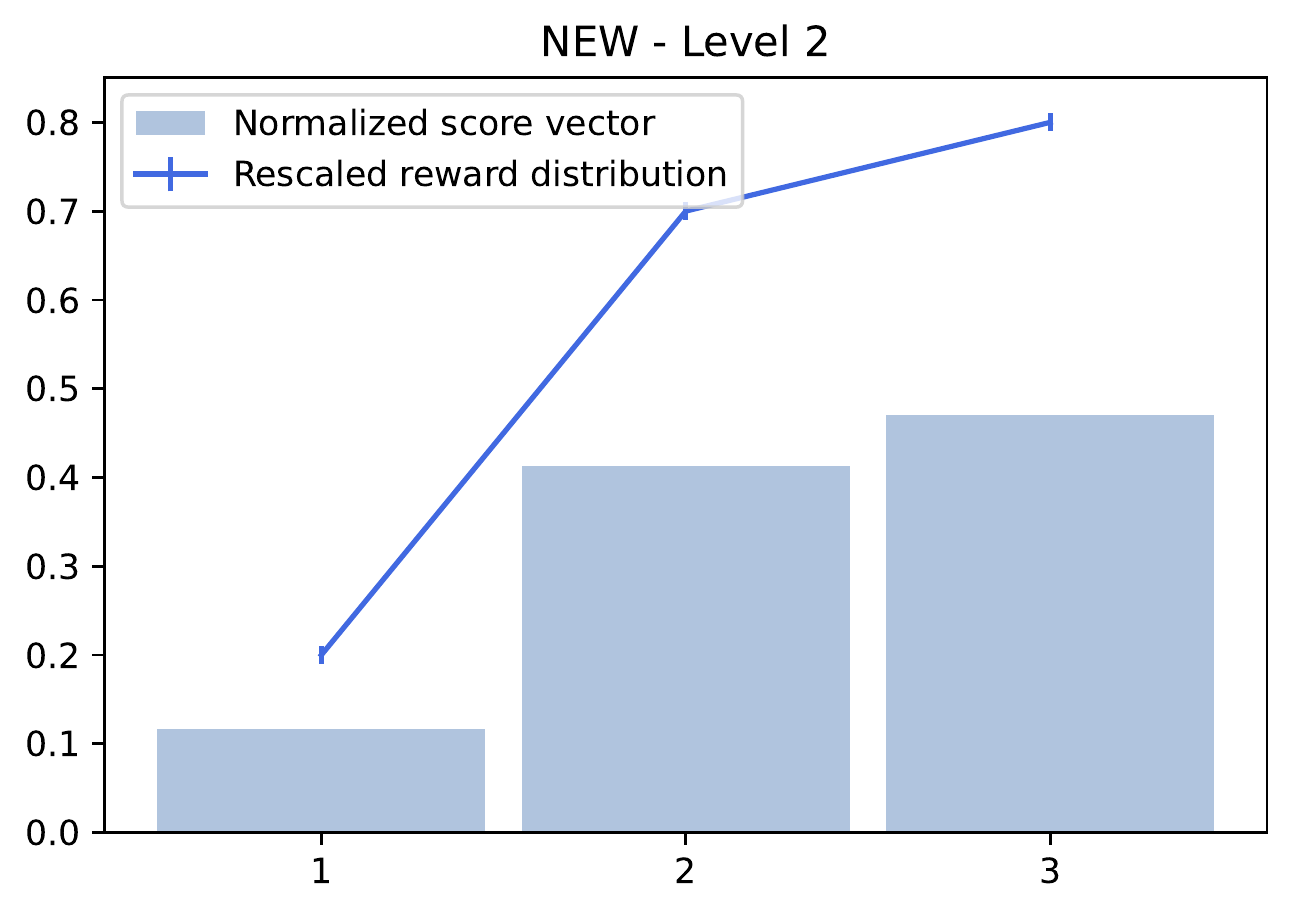}
\end{minipage}%
\begin{minipage}{.245\textwidth}
\includegraphics[width=0.99\linewidth]{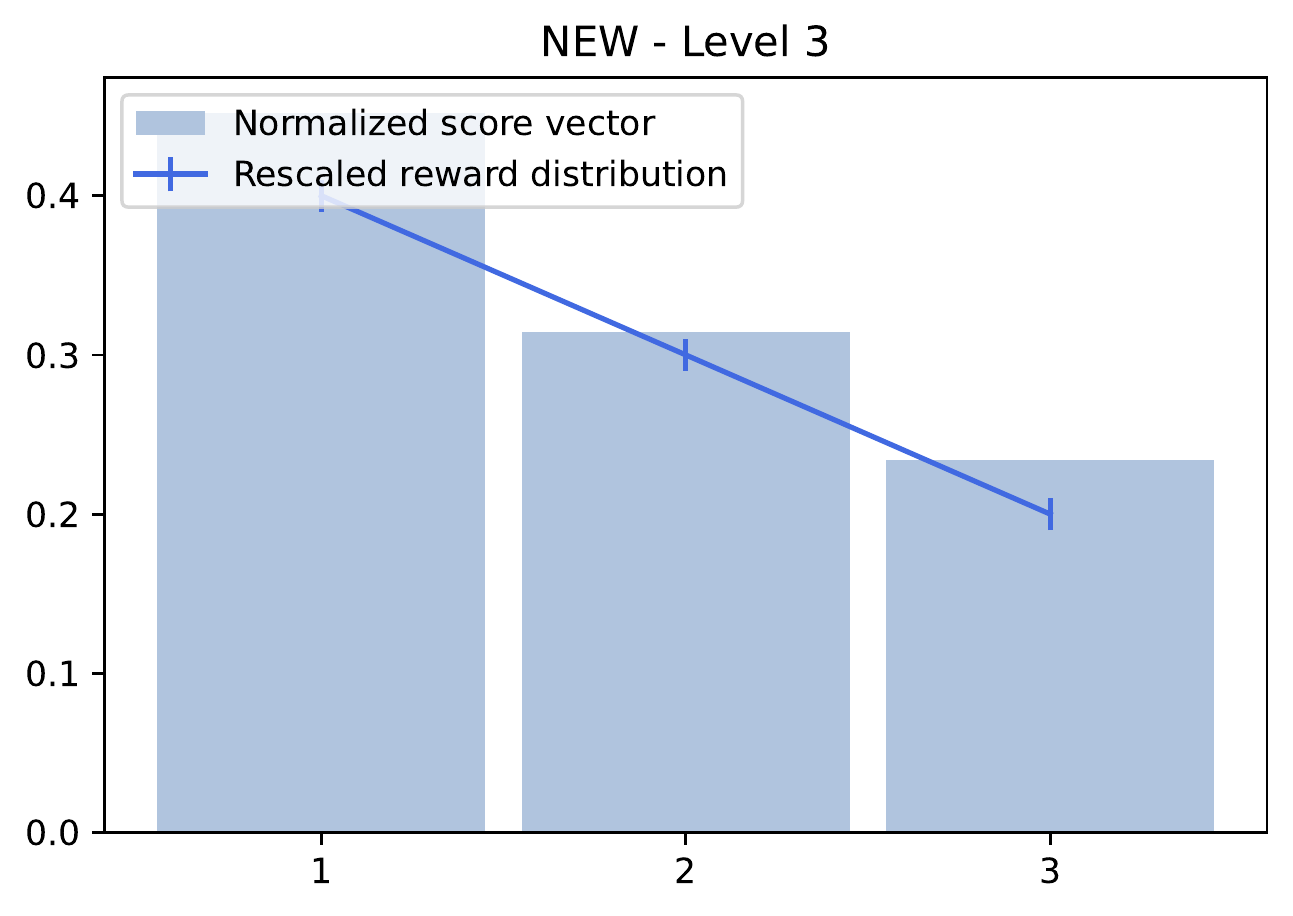}
\end{minipage}%
\caption{Histograms of the score vectors along the optimal path in the nested structure, with visualisation of the mean value of the node.}
\label{fig:visualisation-new}
\end{figure}

Inversely we see in Figure \ref{fig:visualisation-exp3} that the EXP3 method has suffered from variance issue and selected a suboptimal alternative among the $\vert \class \vert^\nLvls = 81$ possible ones. The EXP3 did not take advantage of the nested structure and therefore did not learn as correctly as the NEW algorithm the reward values.

\begin{figure}[h!]
    \centering
    \includegraphics[width=.245\textwidth]{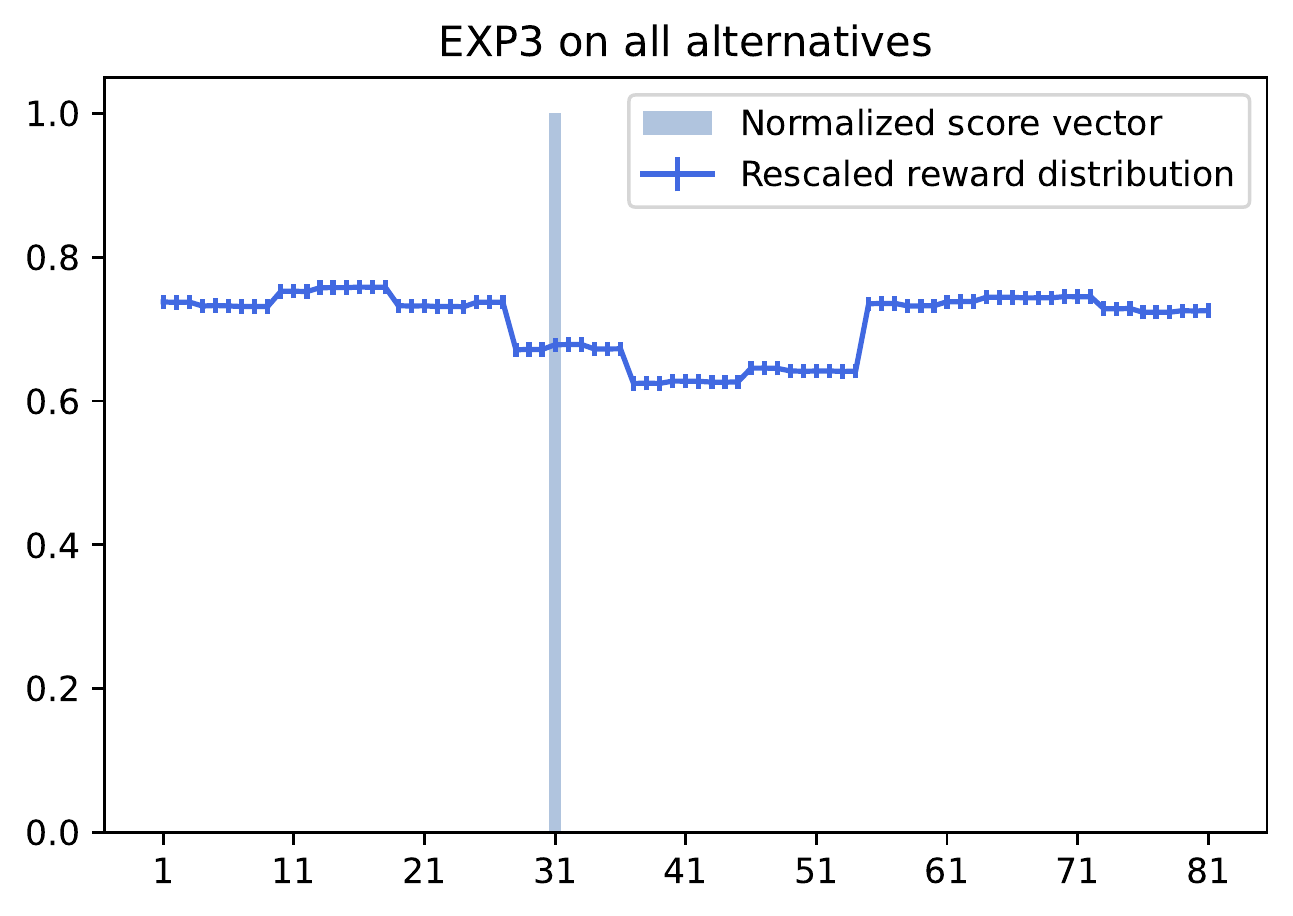}
    \caption{Histogram of the score vector of the all alternatives, with a visualisation of the mean value of all nodes.}
    \label{fig:visualisation-exp3}
\end{figure}

\subsection{Cases where both algorithms perform identically}

\label{sec:num-discussions}

In this appendix we merely show that the implementation of the NEW and EXP3 algorithm match exactly and observe the same behavior when the number of levels $\nLvls$ is set to 1. This setting is where we have no knowledge of any nested structure, therefore both algorithms perform identically in Figure \ref{fig:app-check}.

\begin{figure}[h!]
\begin{minipage}{.5\textwidth}
\includegraphics[width=0.99\linewidth]{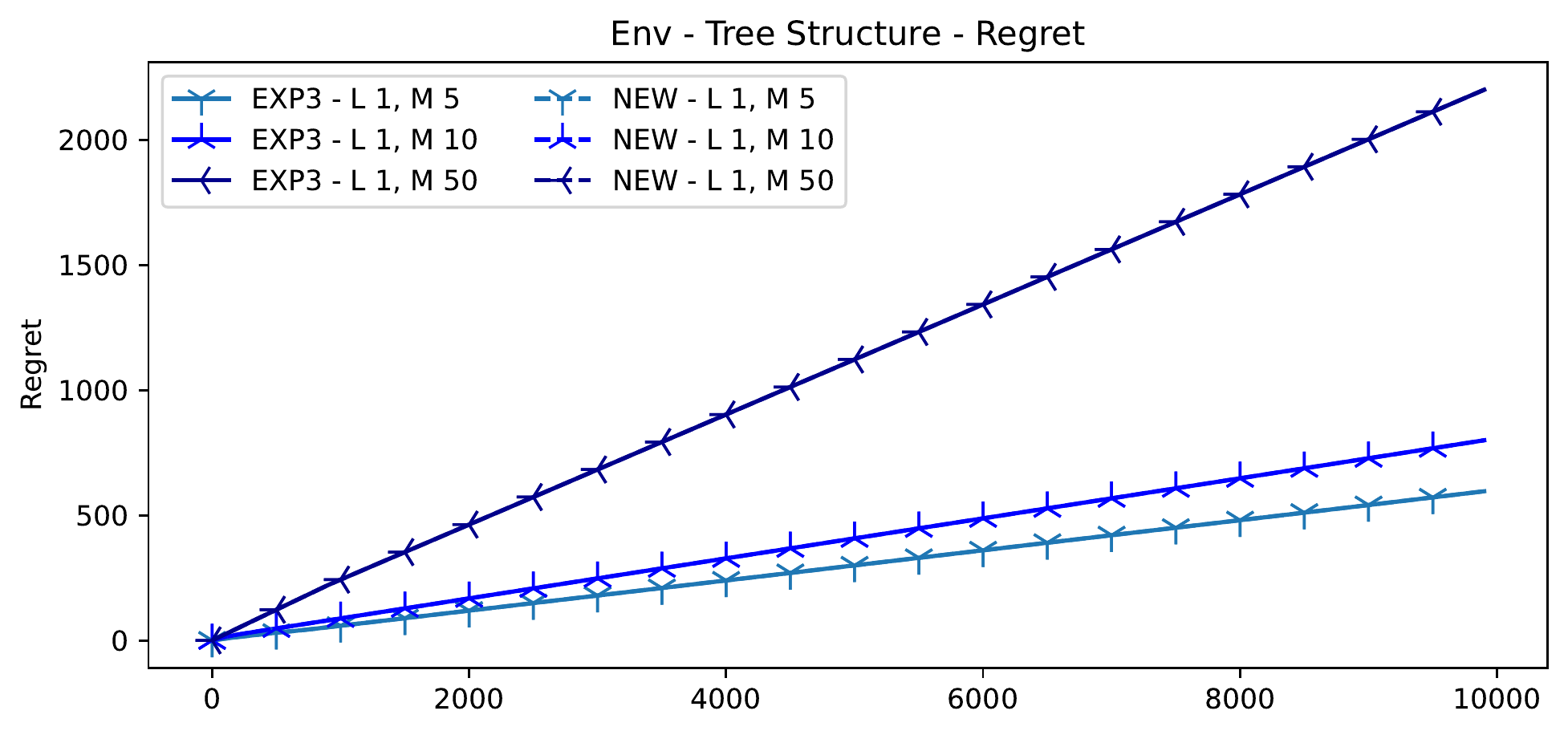}
\end{minipage}%
\begin{minipage}{.5\textwidth}
\includegraphics[width=0.99\linewidth]{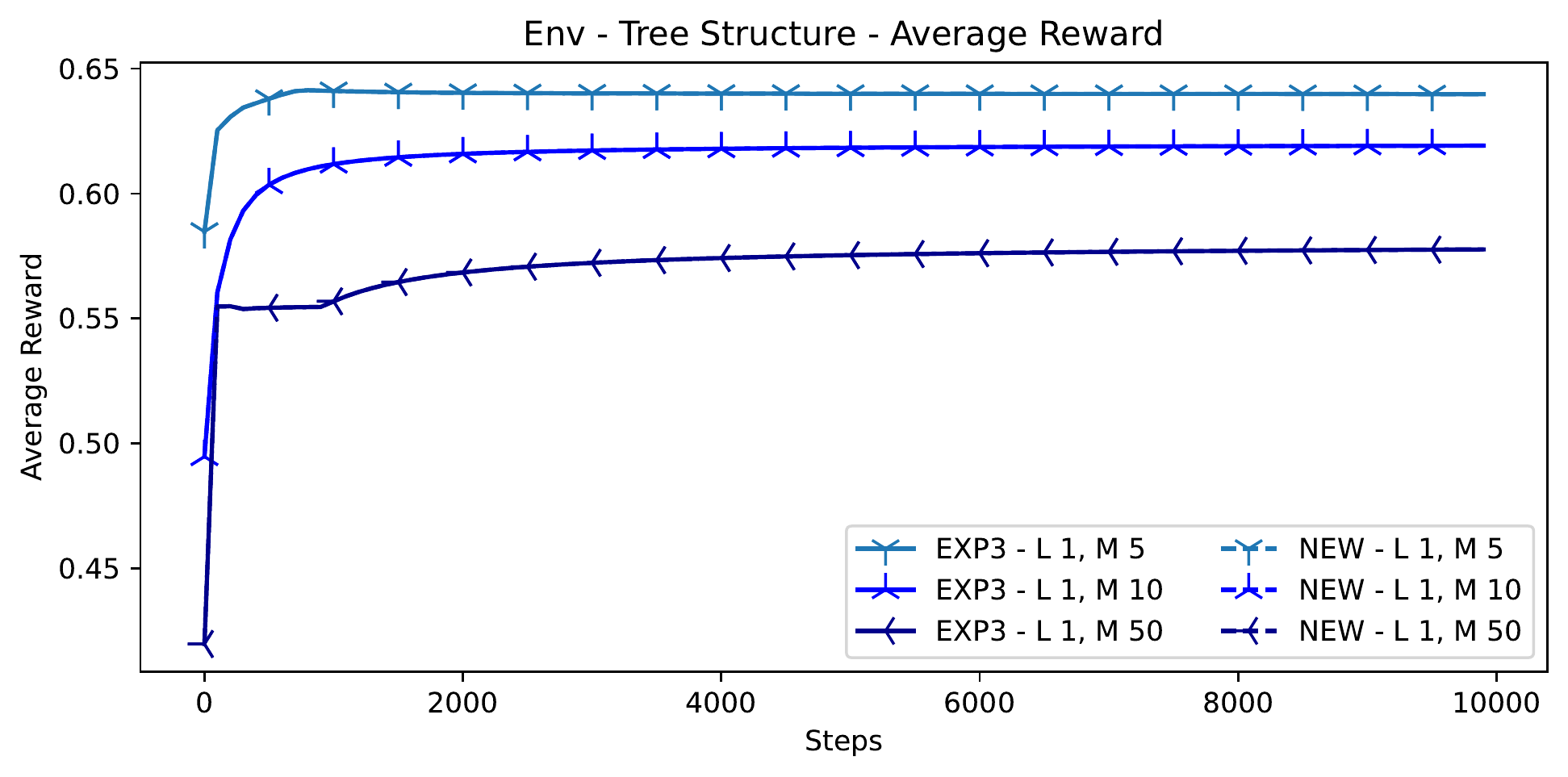}
\end{minipage}%
\caption{Regret and Average Reward of NEW and EXP3 on the synthetic environment where $\nLvls=1$.}
\label{fig:app-check}
\end{figure}


\subsection{Variance plots for the synthetic experiments}

\label{sec:num-monotonic}
We discuss here the variance of the regret at the final timestep $T=10000$. Indeed, as shown on Figure \ref{fig:app-bbrb} for the NEW algorithm , on Figure \ref{fig:app-depth} for both algorithms EXP3 and NEW, and on Figure \ref{fig:app-breadth} for EXP3, some of the plots do no exhibit the monotonicity one would have expected when increasing the number of arms through $\nLvls$ or $M$, and are even overlapping on the regret plot.
This can be explained on Figures \ref{fig:moustache_rbbb} for the Red Bus/Blue Bus environment, and in Figures \ref{fig:moustache_depth} and \ref{fig:moustache_wide} respectively for depth and wideness tree experiments. Those plots show the variances (across the 20 random seeds) of the final regret for both methods at the final step-size. In Figure \ref{fig:moustache_depth} we see that the EXP3 arms have similar mean values with large variances, which explains why they are overlapping on the plot in Figure \ref{fig:tree_regret_depth}. In Figure \ref{fig:moustache_wide} when varying $M$ we can also have a closer look on how NEW outperforms EXP3 and how the close values of NEW regrets through different $M$ can be explained by their high variance.

\begin{figure}[h!]
\includegraphics[width=0.99\linewidth]{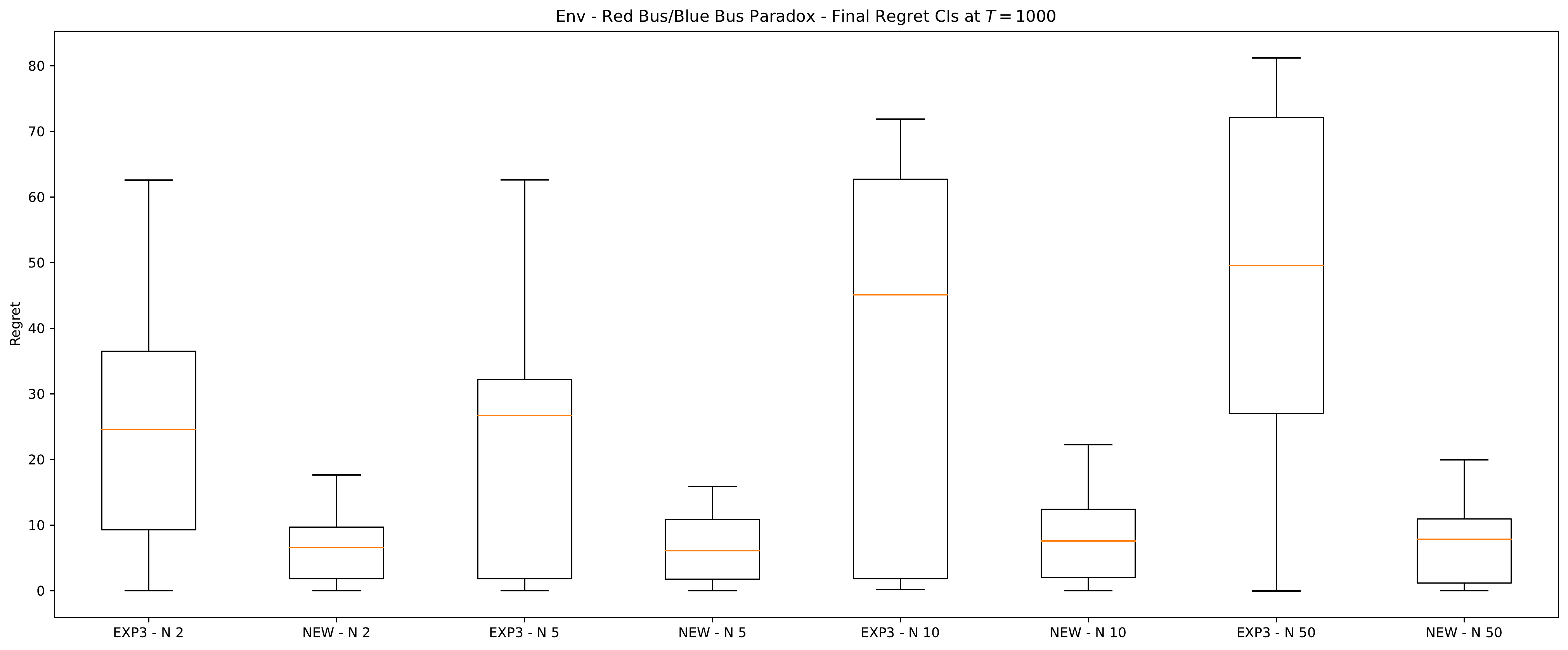}
\caption{Regret distribution at the final stepsize $T=1000$ for the Red Bus/Blue Bus environment.}
\label{fig:moustache_rbbb}
\end{figure}

\begin{figure}[h!]
\includegraphics[width=0.99\linewidth]{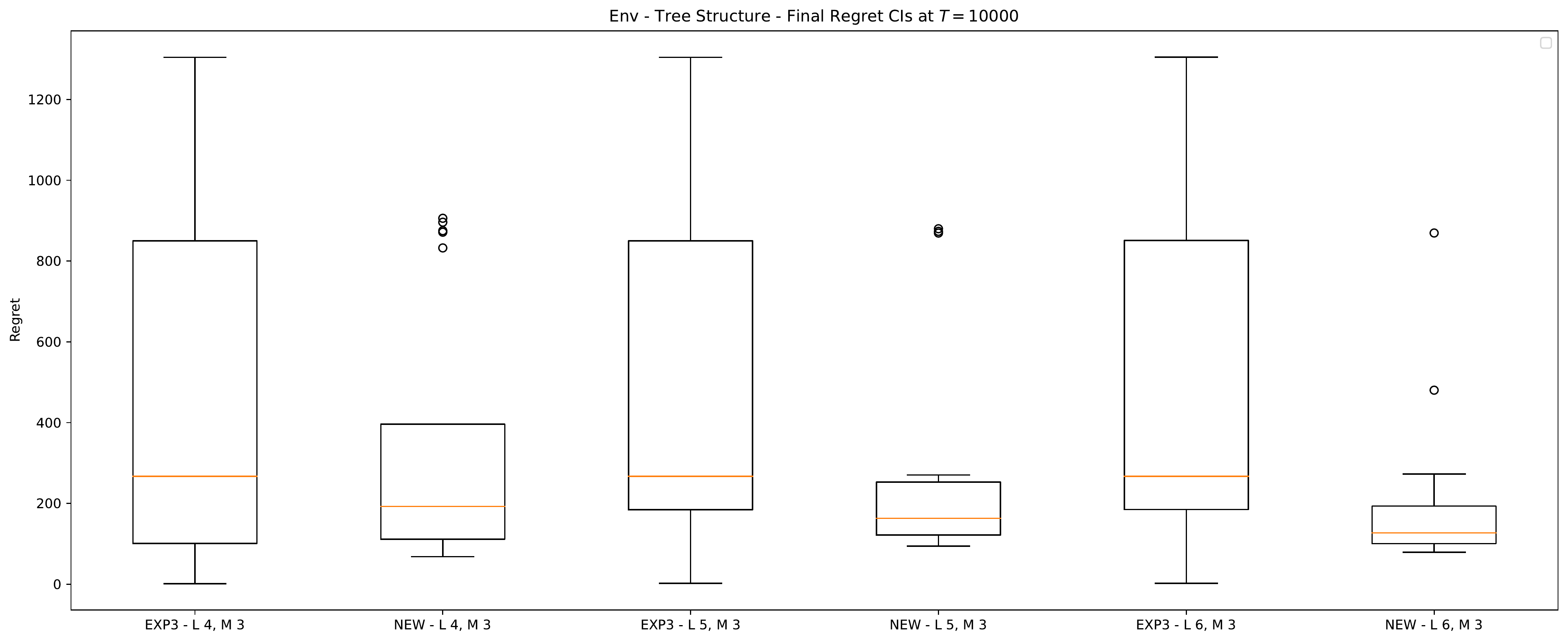}
\caption{Regret distribution at the final stepsize $T=10000$ when varying the depth parameter $\nLvls$.}
\label{fig:moustache_depth}
\end{figure}

\begin{figure}[h!]
\includegraphics[width=0.99\linewidth]{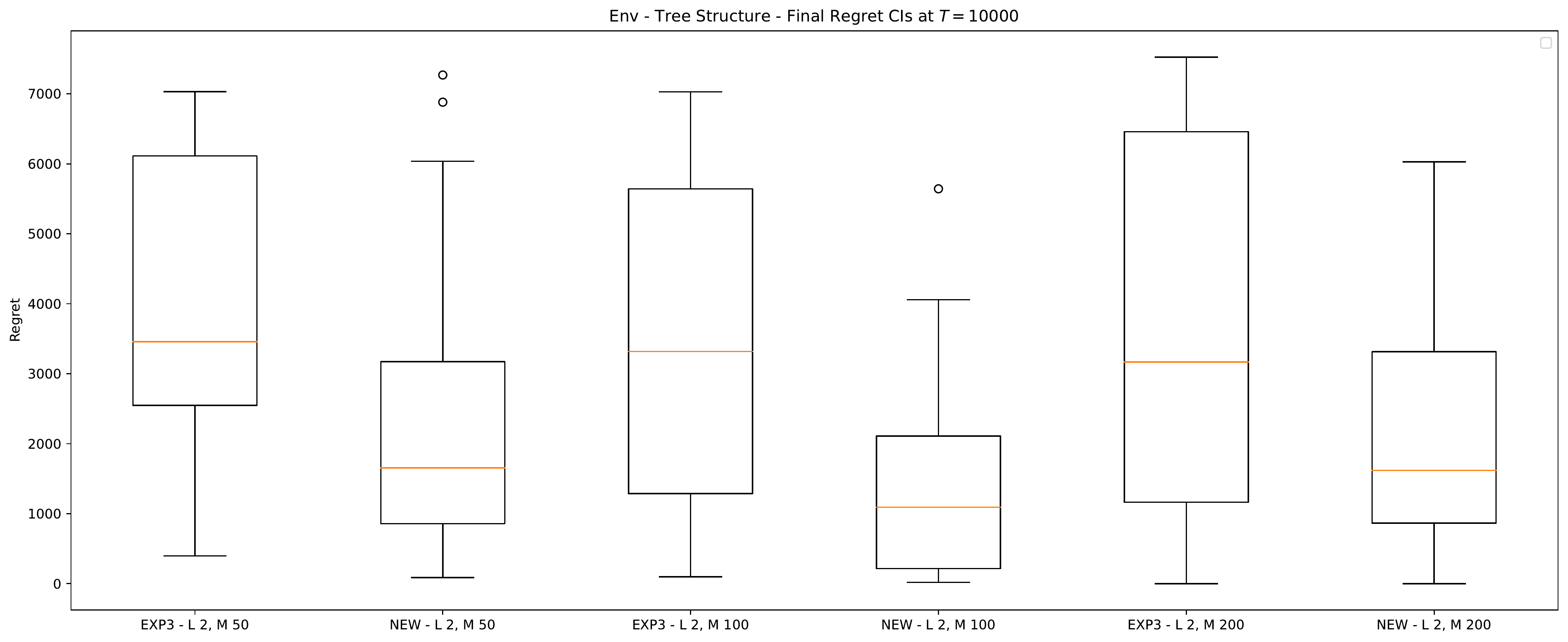}
\caption{Regret distribution at the final stepsize $T=10000$ when varying the wideness parameter $M$.}
\label{fig:moustache_wide}
\end{figure}

\subsection{Reproducibility}
We provide code for reproducibility of our experiments and plots, in addition to a more general implementation of both the NEW algorithm and EXP3 baseline. All experiments were run on a Mac book pro laptop, with 1 processor of 6 cores @2.6GHz (6-Core Intel Core i7).
The code and all experiments can be found in the attached .zip.

\section*{Acknowledgements}
\begingroup
\small
P.~Mertikopoulos is grateful for financial support by
the French National Research Agency (ANR) in the framework of
the ``Investissements d'avenir'' program (ANR-15-IDEX-02),
the LabEx PERSYVAL (ANR-11-LABX-0025-01),
MIAI@Grenoble Alpes (ANR-19-P3IA-0003),
and the bilateral ANR-NRF grant ALIAS (ANR-19-CE48-0018-01).
\endgroup

\bibliographystyle{icml2022}
\bibliography{bibtex/IEEEabrv,bibtex/Bibliography-PM}

\end{document}